% CVPR 2025 Paper Template; see https://github.com/cvpr-org/author-kit

\documentclass[10pt,twocolumn,letterpaper]{article}

\usepackage{tikz}
\usepackage{graphicx}
\usepackage{caption}
\usepackage{multirow} 
\usepackage{pifont}% http://ctan.org/pkg/pifont 
\usepackage{ifsym}
\usepackage{amssymb,amsthm,mathrsfs}
\usepackage{geometry}
\setlength{\marginparwidth}{2cm} 
\newtheorem{theo}{Theorem}
\newtheorem{defi}{Definition}

\newtheorem{lem}{Lemma}

\usepackage{listings}

\definecolor{codegreen}{rgb}{0,0.6,0}
\definecolor{codegray}{rgb}{0.5,0.5,0.5}
\definecolor{codepurple}{rgb}{0.58,0,0.82}
\definecolor{backcolour}{rgb}{0.95,0.95,0.92}
\usepackage{color, colortbl}

\definecolor{lightgray}{gray}{0.9}

\usepackage{algorithm}
\usepackage[noend]{algpseudocode}
\usepackage{subfigure}
\usepackage{booktabs} % for professional tables
\usepackage{booktabs,tabularx}
\usepackage{wrapfig}
\usepackage{amsmath}

\lstdefinestyle{mystyle}{
    backgroundcolor=\color{backcolour},   
    commentstyle=\color{codegreen},
    keywordstyle=\color{magenta},
    numberstyle=\tiny\color{codegray},
    stringstyle=\color{codepurple},
    basicstyle=\ttfamily\footnotesize,
    breakatwhitespace=false,         
    breaklines=true,                 
    captionpos=b,                    
    keepspaces=true,                 
    numbers=left,                    
    numbersep=5pt,                  
    showspaces=false,                
    showstringspaces=false,
    showtabs=false,                  
    tabsize=2
}
\lstset{style=mystyle}
\usepackage{tcolorbox}
% \usepackage{hyperref}

% Mahdi commands
% \usepackage{hyperref}
\makeatletter
\newcommand*{\citelinktext}[2]{%
  \hyper@@link[cite]{}{cite.#1}{#2}}
\makeatother

%%%%%%%%% PAPER TYPE  - PLEASE UPDATE FOR FINAL VERSION
\usepackage{cvpr}              % To produce the CAMERA-READY version
% \usepackage[review]{cvpr}      % To produce the REVIEW version
% \usepackage[pagenumbers]{cvpr} % To force page numbers, e.g. for an arXiv version

% Import additional packages in the preamble file, before hyperref
%
% --- inline annotations
%

% --- disable by uncommenting  
% \renewcommand{\TODO}[1]{}
% \renewcommand{\todo}[1]{#1}

% It is strongly recommended to use hyperref, especially for the review version.
% hyperref with option pagebackref eases the reviewers' job.
% Please disable hyperref *only* if you encounter grave issues, 
% e.g. with the file validation for the camera-ready version.
%
% If you comment hyperref and then uncomment it, you should delete *.aux before re-running LaTeX.
% (Or just hit 'q' on the first LaTeX run, let it finish, and you should be clear).
\definecolor{cvprblue}{rgb}{0.21,0.49,0.74}
\usepackage[pagebackref,breaklinks,colorlinks,allcolors=cvprblue]{hyperref}

%%%%%%%%% PAPER ID  - PLEASE UPDATE
 % *** Enter the Paper ID here

%%%%%%%%% TITLE - PLEASE UPDATE
\title{NoT: Federated Unlearning via Weight Negation}

%%%%%%%%% AUTHORS - PLEASE UPDATE

\author{Yasser H. Khalil$^{1*}$
\hspace*{1em}
% For a paper whose authors are all at the same institution,
% omit the following lines up until the closing ``}''.
% Additional authors and addresses can be added with ``\and'',
% just like the second author.
% To save space, use either the email address or home page, not both
% \and
Leo Brunswic$^{1*}$
\hspace*{1em}
Soufiane Lamghari$^{1}$
\hspace*{1em}
Xu Li$^{2}$
\hspace*{1em}
Mahdi Beitollahi$^{1}$
\hspace*{1em}
Xi Chen$^{1}$
\\
$^{1}$Huawei Noah's Ark Lab, Montreal, Canada
\hspace*{1em}
$^{2}$Huawei Technologies Canada Inc., Ottawa, Canada
\\
% \small \texttt{yasser.khalil1}\texttt{@huawei.com}
% \\
% \small \url{https://yccyenchicheng.github.io/SDFusion/}
}

% \\
% \\
% Montreal, Canada\\
% {\tt\small yasser.khalil1@huawei.com}

\begin{document}
\maketitle

\renewcommand{\thefootnote}{\fnsymbol{footnote}}
\footnotetext[1]{Equal contribution.}
\footnotetext{Correspondence to: Yasser H. Khalil (yasser.khalil1@huawei.com).}
\footnotetext{Accepted at the 42nd IEEE/CVF Conference on Computer Vision and Pattern Recognition, Nashville TN, US. 2025.}

\begin{abstract}
Federated unlearning (FU) aims to remove a participant’s data contributions from a trained federated learning (FL) model, ensuring privacy and regulatory compliance. Traditional FU methods often depend on auxiliary storage on either the client or server side or require direct access to the data targeted for removal—a dependency that may not be feasible if the data is no longer available. To overcome these limitations, we propose \textbf{NoT}, a novel and efficient FU algorithm based on weight negation (multiplying by -1), which circumvents the need for additional storage and access to the target data. We argue that effective and efficient unlearning can be achieved by \textit{perturbing model parameters away from the set of optimal parameters, yet being well-positioned for quick re-optimization}. This technique, though seemingly contradictory, is theoretically grounded: we prove that the weight negation perturbation effectively disrupts inter-layer co-adaptation, inducing unlearning while preserving an approximate optimality property, thereby enabling rapid recovery. Experimental results across three datasets and three model architectures demonstrate that NoT significantly outperforms existing baselines in unlearning efficacy as well as in communication and computational efficiency. \end{abstract}    
\section{Introduction}
\label{sec:intro}

Federated learning (FL) enables decentralized machine learning across distributed devices, allowing models to be trained collaboratively without sharing raw data, thus enhancing privacy and security \citep{ nguyen2021federated, zhang2022federated, khalil2024dfml}. However, growing concerns over privacy and data security have emphasized the need for unlearning techniques to meet evolving regulatory standards \citep{xu2024machine, liu2024threats}. Federated unlearning (FU) addresses this by enabling removal of individual data contributions from trained FL models \citep{romandini2024federated,liu2023survey}. This capability is essential for privacy preservation and compliance with regulations such as GDPR \citep{union2023complete}, which mandates the “right to be forgotten.” FU is also critical when data is outdated, compromised, or subject to data poisoning attacks \citep{wan2024data}. 

\begin{figure}
\centering
\includegraphics[trim={0.25cm 0.19cm 0.2cm 0.17cm},clip,scale=0.7]{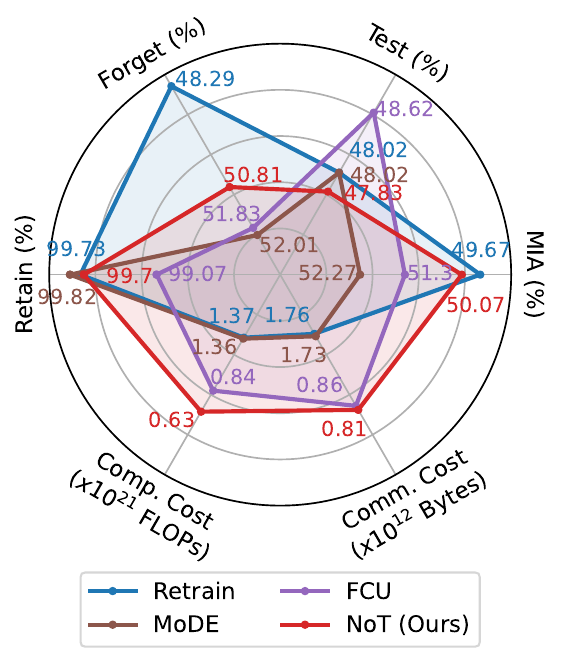}
\caption{\textbf{Performance comparison of NoT with baselines} using ViT-B/16 on Caltech-101 in a 10-client setup, where one client requests unlearning. The ideal federated unlearning algorithm should closely approximate the performance of the “gold standard” (Retrain) across key accuracy metrics: \textit{retain}, \textit{forget}, \textit{test}, and \textit{MIA}, while minimizing communication and computation overhead. As illustrated, NoT's performance closely matches that of Retrain across all metrics with minimal added costs, underscoring NoT’s efficacy and efficiency in federated unlearning. Experimental details and further comparisons can be found in Section~\ref{sec_experimens}.}
\label{fig_sample_result}
\end{figure}

\begin{figure*}
\centering
\includegraphics[trim={0.7cm 0.17cm 0.7cm 0.14cm},clip,scale=0.79]{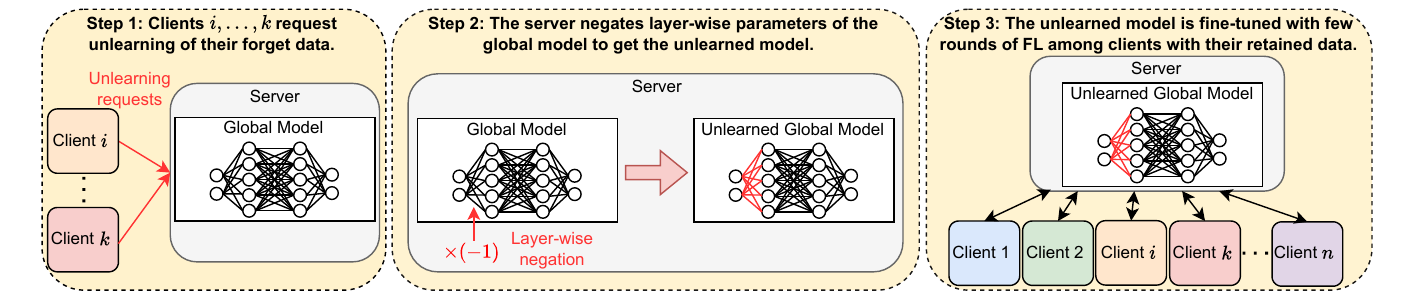}
\vspace*{-6mm}
\caption{\textbf{NoT overview.} Upon receiving unlearning requests from target clients, the server initiates the unlearning process by applying layer-wise parameter negation to the global model. This negation disrupts inter-layer co-adaptation, effectively inducing unlearning. Subsequent fine-tuning rounds restore essential knowledge. If a client wishes to forget all its data (i.e., client-wise forgetting), it does not participate in fine-tuning. Conversely, if a client wants partial data forgetting (i.e., class-wise or instance-wise forgetting), it fine-tunes the global model using its retained data.}
\label{fig_NoT_pipeline}
\end{figure*}

While various FU techniques have been proposed \citep{liu2021federaser, wu2022federated, halimi2022federated, zhao2023federated}, they face significant challenges in FL environments. For instance, exact unlearning methods such as retraining from scratch guarantee thorough data removal but are impractical due to high communication and computational demands. Other FU approaches require additional storage for model updates, which may be infeasible and could pose additional security risks. Additionally, many existing methods depend on having access to the target data, which may no longer be available or permissible for use.

To address these limitations, we propose NoT, a novel and efficient FU algorithm based on weight negation, requiring neither auxiliary storage nor access to the target data. NoT operates by negating (multiplying by -1) the parameters of specific layers in the global model, as depicted in Figure~\ref{fig_NoT_pipeline}. The name \textbf{NoT} reflects the Boolean “NOT” operation. Negation breaks inter-layer co-adaptation (\ie, dependencies between network parameters \cite{hinton2012improving,sato2019breaking}, see Appendix~\ref{appendix:coadapt}), resulting generically in high loss, which effectively “forgets” the targeted data. Fine-tuning on retained data subsequently allows for recovery of essential knowledge. Together, these two phases induce significant parameter changes that remove knowledge of the data to be unlearned. To formalize this approach, we incorporate NoT within a novel theoretical unlearning framework, establishing that effective and efficient unlearning can be achieved by \textit{perturbing model parameters away from the optimal set of parameters, yet being well-positioned for re-optimization}.
Our empirical results confirm that NoT achieves effective unlearning while minimizing communication and computation compared to state-of-the-art methods. As illustrated in Figure~\ref{fig_sample_result}, NoT achieves performance close to the “gold standard” Retrain method across several accuracy metrics, while significantly reducing communication and computation costs. Moreover, since NoT does not rely on access to the forgot data, it naturally supports client-wise, class-wise, and instance-wise forgetting. Our main \textbf{contributions} are summarized as follows:
\begin{itemize}
\item We propose NoT, an efficient FU algorithm leveraging weight negation, operating without requiring additional storage or access to target data.
\item We present a theoretical framework describing how unlearning is achieved through weight perturbation and fine-tuning. We provide an effective bound controlling the unlearning via fine-tuning, introduce the notion of layer-wise optimality which enables fast-recovery, and prove that weight negation conforms to this framework. Further, we empirically validate our theoretical predictions.
\item We conduct an extensive experimental evaluation of NoT, benchmarking it against seven FU methods across three datasets and three model architectures. Our evaluation includes experiments addressing backdoor attacks, tests in centralized settings with eight baselines, and an ablation study.

\end{itemize}
\section{Related Work}
\label{sec_releated_works}

\paragraph{Federated Unlearning (FU).} Two main approaches dominate FU \cite{liu2021federaser, wu2022federated, ren2024advances, gao2024verifi, halimi2022federated, zhao2023federated, deng2024enable, zhang2023fedrecovery, pan2024federated}: \ding{182} \textbf{Storing historical updates}: These methods save prior model updates for later use in unlearning specific data. For instance, FedEraser \citep{liu2021federaser} retrieves the global model state before a client joins the federation, sharing it with the remaining clients to remove the target client's influence. FUKD \citep{wu2022federated} subtracts the target client's updates and uses knowledge distillation to restore model performance. However, these approaches pose privacy risks due to potential model update leakage \citep{ren2024advances}, face storage limitations, and often require unlabeled data, which may not always be available. Despite these challenges, they do not require the target client's participation during unlearning. \ding{183} \textbf{Gradient modification}: These methods alter model gradients during training to suppress the impact of target data. For example, PGD \citep{halimi2022federated} reverses the learning process for the target client, constraining the update within an $\ell_2$-norm ball around a reference model, which is then fine-tuned by the remaining clients. MoDE \citep{zhao2023federated} uses a randomly initialized degradation model for unlearning, while FCU \citep{deng2024enable} simulates a model that has never seen the forgotten data by applying contrastive loss and preserving low-frequency components of the global model. These methods, however, often require the target client's involvement and impose high computational costs. In contrast, \textit{NoT eliminates the need for extra storage or access to the target data}.

\paragraph{Unlearning via Weight Perturbation.} Several works focus on unlearning through weight modification. Golatkar \etal \cite{golatkar2020eternal} proposes adding Gaussian noise, computed using Fisher information, to disrupt weights for unlearning. In a follow-up, Golatkar \etal \cite{golatkar2020forgetting} employ the Neural Tangent Kernel (NTK) to address the null space of model weights. Tarun \etal \cite{tarun2023fast} introduces an error-maximizing noise matrix trained on a pretrained model to corrupt weights associated with specific target classes. A more targeted approach, SSD \citep{foster2024fast}, dampens parameters deemed sensitive to forgotten data using Fisher information.  Most weight perturbation methods that focus on class-based forgetting, however, struggle with random data forgetting and are computationally expensive due to Fisher information calculations. \textit{To our knowledge, no prior work explores unlearning through weight negation.} In our results, we compare NoT’s negation approach with other perturbation methods.  

\section{Preliminaries}
\label{sec_background}
We define a {\it model} as a parameterized family $\mathcal N^\theta : \mathbb R^{d_{\mathrm{in}}}\rightarrow \mathbb R^{d_{\mathrm{out}}}$, where $\theta\in \Theta\subset \mathbb R^d$, such that the map $(x,\theta)\mapsto \mathcal N^{\theta}(x)$ is continuous and piecewise twice continuously differentiable. We focus on models provided by neural networks, the parameters tensor may then be written as $\theta= (\theta_{\ell})_{\ell \in \mathscr L}$ where $\mathscr L$ is the set of layers. We assume a dataset-dependent non-negative loss function, $\mathcal L_D(\mathcal N):=\mathbb E_{(x,y)\sim D} L(\mathcal N(x),y)$, and that the model is typically trained via gradient descent to minimize this loss. For simplicity, we denote the loss as $\mathcal L_D(\theta)$ instead of $\mathcal L_D(\mathcal N^{\theta})$, when the context clearly refers to the model $\mathcal N^\theta$. 

In this paper, we consider an FL system with $n$ clients, where each client $k\in \mathcal{P}=\{1,\ldots,n\}$ has local training data $D^k$. Clients collaboratively train a global model until convergence $\mathcal N^{\theta^*}$ using a standard FL algorithm, such as FedAvg. After training, each client $k$ may request the server to unlearn a subset of its data $D_u^k \subseteq D^k$, referred to as the target or forget data, while $D_r^k:=D^k\setminus D_u^k$ represents its retained data. The client requesting unlearning is called the target client. A straightforward solution to unlearn $D_u=\bigcup_{k}D^k_u$ is to retrain the model from scratch using distributed $D_r=\bigcup_{k}D^k_r$, resulting in a retrained model. However, this approach is computationally and communication-intensive. \textit{The challenge is to efficiently find an unlearned model that closely approximates the performance of the retrain model.}

\section{Unlearning Framework}
\label{sec:framework}

Given a trained model $\mathcal N^{\theta^*}$, our objective is to efficiently unlearn data without retraining from scratch. We propose a two-step approach: first, \textbf{perturbing} model parameters to obtain perturbed parameters $\theta'$, followed by \textbf{fine-tuning} using gradient descent starting at $\theta^0=\theta'$ to minimize $\mathcal L_{D_r}$, resulting in the unlearned model. The underlying intuition behind this approach is that perturbing the model induces not only immediate unlearning but also large gradients for the subsequent fine-tuning phase, hence substantial alteration of the model's internal configuration and promoting further unlearning. However, the perturbation should be designed to avoid an excessive fine-tuning phase.
\vspace*{3mm}
\begin{tcolorbox}[width=\linewidth, before skip=-2.1mm, after skip=0.2cm, boxsep=0.0cm, middle=0.1cm, top=0.1cm, bottom=0.1cm]
Informally, we seek a perturbation that is:

\noindent\ding{70} \textbf{(C1)} \textit{\textbf{Strong}: significantly pushes model parameters away from optimal configurations.} \\
\noindent\ding{70} \textbf{(C2)} \textit{\textbf{Resilient}: enables fast re-optimization.}
\end{tcolorbox}
\subsection{The Need for a Strong Perturbation}
To motivate condition \textbf{C1}, we introduce the concept of \textbf{loss gap} as a measure for unlearning. 

\begin{defi}[Loss gap] Let $\mathcal N^{\theta}$ be a model, and let $(D_r,D_u)$ be a pair of datasets. The \textbf{loss gap} is defined as: 
\begin{equation}
\delta (\theta) := |\mathcal L_{D_r}(\theta) - \mathcal L_{D_u}(\theta) |.
\end{equation}
\end{defi}
\vspace{-1.7mm}
In unlearning, the goal is to increase the loss gap by a target amount, ensuring that $\mathcal L_{D_r}$ is minimized while $\mathcal L_{D_u}$ is not. The rationale behind \textbf{C1} is supported by the following theorem:

\begin{theo}\label{theo:unlearning_time_constraint} Let $\mathcal N^{\theta}$ be a model, and let $(D_r,D_u)$ be a pair of datasets. Given an initial parameter set $\theta^0\in \Theta$, assume $\mathcal N^{\theta^0}$ is trained using Stochastic Gradient Langevin Descent\footnote{SGLD may be seen as an approximation of SGD, see \citep{stephan2017stochastic}.} to minimize $\mathcal L_{D_r}$ starting from $\theta^0$. The parameter evolution is given by: $d\theta^t  = -\nabla_{\theta^t} \mathcal L dt+\Sigma(\theta^t,t)\cdot dW$. At any training time $t\geq 0$, the following holds:\footnote{Expectations taken over the randomness of the stochastic process $\theta^t$.}
\begin{equation}\label{eq:unlearning_time}
t \geq \frac{\mathbb E( \delta ({\theta^t}) - \delta(\theta^0) )^2 }{L^2\left[|\mathcal L_{D_r}(\theta^0)-\mathbb E\mathcal L_{D_r}(\theta^t)|+A\right]},
\end{equation}
with $L:=\sup_{\theta_1\neq \theta_2}\frac{|\delta(\theta_1)-\delta(\theta_2)|}{\|\theta_1-\theta_2\|}$ and $A$ depends explicitly on $\Sigma$ and the Hessian of $\mathcal L_{D_r}$ along the solution, and $A=0$ when $\Sigma\equiv 0$ (see Appendix \ref{unelarning_time_lowerbound} for proof and details).\end{theo}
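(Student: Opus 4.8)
The plan is to read the inequality as a ``speed limit'': because $\delta$ is $L$-Lipschitz, it cannot move far unless the parameter vector moves far, and along the dissipative SGLD dynamics the parameter vector cannot move far in a short time without a commensurate change in $\mathcal L_{D_r}$. So I would first reduce the claim to a second-moment estimate on the displacement $\theta^t-\theta^0$, then bound that displacement by the dissipated loss. For the reduction, the definition of $L$ gives $(\delta(\theta^t)-\delta(\theta^0))^2\le L^2\|\theta^t-\theta^0\|^2$ for every realisation, hence $\mathbb E(\delta(\theta^t)-\delta(\theta^0))^2\le L^2\,\mathbb E\|\theta^t-\theta^0\|^2$; it is essential to work through $\|\theta^t-\theta^0\|^2$ rather than apply It\^o's formula to $\delta$ directly, since $\delta$ carries an absolute value and inherits only the piecewise-$C^2$ regularity of $\mathcal L$.

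Next I would estimate $\mathbb E\|\theta^t-\theta^0\|^2$. In integral form the dynamics reads $\theta^t-\theta^0=-\int_0^t\nabla\mathcal L_{D_r}(\theta^s)\,ds+\int_0^t\Sigma(\theta^s,s)\,dW_s$, so Minkowski's inequality in $L^2(\mathbb P)$ gives $\big(\mathbb E\|\theta^t-\theta^0\|^2\big)^{1/2}\le\big(\mathbb E\|\int_0^t\nabla\mathcal L_{D_r}\,ds\|^2\big)^{1/2}+\big(\mathbb E\|\int_0^t\Sigma\,dW_s\|^2\big)^{1/2}$. The drift term is handled by Cauchy--Schwarz in time, $\|\int_0^t\nabla\mathcal L_{D_r}\,ds\|^2\le t\int_0^t\|\nabla\mathcal L_{D_r}(\theta^s)\|^2\,ds$, and the diffusion term by It\^o's isometry, $\mathbb E\|\int_0^t\Sigma\,dW_s\|^2=\mathbb E\int_0^t\mathrm{tr}(\Sigma\Sigma^{\!\top})\,ds$.

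The remaining ingredient is the energy-dissipation identity: applying It\^o's formula to $s\mapsto\mathcal L_{D_r}(\theta^s)$ along the dynamics and taking expectations --- the stochastic integral being mean-zero once one checks it is a true (not merely local) martingale --- yields $\mathbb E\int_0^t\|\nabla\mathcal L_{D_r}(\theta^s)\|^2\,ds=\mathcal L_{D_r}(\theta^0)-\mathbb E\mathcal L_{D_r}(\theta^t)+\tfrac12\mathbb E\int_0^t\mathrm{tr}\big(\Sigma^{\!\top}\nabla^2\mathcal L_{D_r}(\theta^s)\,\Sigma\big)\,ds$. Feeding this into the drift bound and squaring the sum from the Minkowski step, the loss change $|\mathcal L_{D_r}(\theta^0)-\mathbb E\mathcal L_{D_r}(\theta^t)|$ appears with coefficient $t$, while the Hessian-trace term, the diffusion-variance term, and the cross term produced by squaring are all gathered into $tA$, where $A$ is explicit in $\Sigma$ and in the Hessian of $\mathcal L_{D_r}$ along the trajectory and vanishes when $\Sigma\equiv0$. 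This gives $\mathbb E\|\theta^t-\theta^0\|^2\le t\big(|\mathcal L_{D_r}(\theta^0)-\mathbb E\mathcal L_{D_r}(\theta^t)|+A\big)$, and chaining with the Lipschitz bound and solving for $t$ produces the statement. Setting $\Sigma\equiv0$ collapses the argument to the purely deterministic chain $\|\theta^t-\theta^0\|\le\int_0^t\|\dot\theta^s\|\,ds\le\sqrt t\,\big(\int_0^t\|\nabla\mathcal L_{D_r}\|^2\,ds\big)^{1/2}=\sqrt t\,\big(\mathcal L_{D_r}(\theta^0)-\mathcal L_{D_r}(\theta^t)\big)^{1/2}$, with $A=0$.

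I expect the main obstacle to be the stochastic-calculus bookkeeping rather than any single inequality: making It\^o's formula legitimate for the only-piecewise-$C^2$ loss evaluated along the diffusion (by mollification, an It\^o--Tanaka argument, or a hypothesis that the trajectory spends no time on the non-smooth locus), verifying the integrability that turns the $dW$-integrals into genuine mean-zero martingales so the expectations actually drop, and assembling the Hessian, isometry, and cross contributions into a single $A$ with the advertised dependence and sign behaviour. Everything else --- Cauchy--Schwarz, Minkowski, and the Lipschitz estimate for $\delta$ --- is routine.
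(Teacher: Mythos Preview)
Your proposal is correct and follows essentially the same route as the paper: the Lipschitz reduction $|\delta(\theta^t)-\delta(\theta^0)|\le L\|\theta^t-\theta^0\|$, Cauchy--Schwarz in time on the drift integral to produce the factor $t$, and the energy-dissipation identity from It\^o's formula applied to $\mathcal L_{D_r}(\theta^s)$ to convert $\int_0^t\|\nabla\mathcal L_{D_r}\|^2\,ds$ into the loss change plus the Hessian-trace correction. The one substantive difference is in the stochastic bookkeeping: you split the displacement via Minkowski and control the martingale part by It\^o isometry, so your $A$ absorbs an extra $\mathrm{tr}(\Sigma\Sigma^{\!\top})$ term and cross terms, whereas the paper's appendix records only $A=\tfrac12\int_0^t|\mathrm{Tr}(\Sigma(\theta^s,s)^2\nabla^2\mathcal L_{D_r}(\theta^s))|\,ds$ and handles the stochastic case by the terse remark that ``the computation then unfolds the same way, then taking the expectancy and absolute value''---your more explicit treatment is arguably the safer one.
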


Here, the left side of the inequality represents the time $t$ required for unlearning via gradient descent. The numerator on the right indicates the unlearning target, while the denominator depends on datasets characteristics, loss function and training stochasticity. This theorem suggests that unlearning may be {\it slow} if some conditions are not met. For instance, natural forgetting ($\theta^0=\theta'=\theta^*$) leads to small $L$ due to minor statistical differences between $D_r$ and $D_u$, and a negligible $|\mathcal L_{D_r}(\theta^0)-\mathbb E\mathcal L_{D_r}(\theta^t)|$ as $\mathcal N^{\theta^0}=\mathcal N^{\theta^*}$ is already converged. Therefore, Theorem~\ref{theo:unlearning_time_constraint} applied to the following fine-tuning on $D_r$ predicts extended unlearning time, hence slow natural forgetting. See Appendix~\ref{sec:unlearning_time_quantitative} for quantitative estimation. For fast unlearning, factors such as a large Hessian spectrum, high loss, or high stochasticity during descent are essential, hence a \textbf{strong} perturbation that increases loss is beneficial.

\subsection{What is a Resilient Perturbation?}
\label{sec:optmizable_state}
There are various strong perturbations (\eg, randomization of $\theta$), but they may require intensive fine-tuning. Therefore, condition \textbf{C2} seeks to ensure that the perturbed set of parameters $\theta'$ is in a good optimization state, allowing efficient recovery during fine-tuning. A good optimization state could be defined {\it a posteriori} as one where $t^*_{\varepsilon}:=\min\{t~|~ \mathcal L(\theta^t)-\mathcal L(\theta^*)\leq \varepsilon\}$ is small, relative to a baseline (such as retraining from scratch). However, we favor {\it a priori} properties of the neural network at training time $t=0$ yielding (statistically) a small $t^*_{\varepsilon}$. Existing evidence suggests that certain properties ease the optimization of a model:

\noindent\tikz[baseline=(char.base)] \node[draw, circle, fill=black, text=white, inner sep=0.75pt](char){a}; \textbf{Jacobian Control}: typically analyzed through the spectrum of $(\nabla_x\mathcal N^\theta)^T( \nabla_x \mathcal N^{\theta})$ or $( \nabla_\theta\mathcal N^\theta)^T( \nabla_\theta \mathcal N^{\theta})$, it controls gradient back-propagation. Accumulation of the eigenvalues of the former around 1 is coined dynamical isometry \citep{hochreiter1998vanishing, saxe2013exact}. The spectrum of the latter relates to the spectrum of the Fisher Information Matrix (FIM) \citep{amari1997information, rattray1998natural}. Dynamical isometry is expected in networks with residual connections \citep{tarnowski2019dynamical,noci2022signal,bachlechner2021rezero} or well-chosen initializations \citep{pmlr-v80-xiao18a,burkholz2019initialization,chen2018dynamical,pennington2017resurrecting}. It controls the FIM spectrum \citep{pmlr-v89-karakida19a,liao2018approximate} which in turn impacts gradient dynamics via control over the stochasticity of gradient descent \citep{stephan2017stochastic} and isospectrality to the Neural Tangent Kernel \citep{jacot2022theory} in the mean field limit.

\noindent\tikz[baseline=(char.base)] \node[draw, circle, fill=black, text=white, inner sep=0.1pt](char){b}; \textbf{Model Pretraining} is empirically shown to accelerate fine-tuning \citep{pmlr-v5-erhan09a, erhan2010does, yosinski2014transferable, he2019rethinking, yao2021understanding}. While this is not fully theoretically accounted for, a pretrained model is expected to require less learning of low-level features, reducing the fine-tuning search space compared to training from scratch.

In our framework, denoting $J(\theta,x):= \left.\nabla_\theta \mathcal N^{\theta}\right|_{x}$ and $X$ a random variable sampled from $D_r$, we translate Jacobian control as a control over distributions of random matrices $J(\theta',X)$  and $J(\theta^*,X)$. If, for example, the Wasserstein distance $\mathcal W(J(\theta^*,X),J(\theta',X))$ is sufficiently small \cite{villani2021topics}, gradients should behave comparably during fine-tuning of $\mathcal N^{\theta'}$ relative to the original training which yielded $\mathcal N^{\theta^*}$. A perturbation has {\it Jacobian control} if it satisfies a bound on this Wasserstein distance. Also, since the pre-perturbation model $\mathcal N^{\theta^*}$ is converged, it is effectively pretrained on the dataset $D_r\cup D_u$.
{\it Layer-wise optimality} formalizes the preservation of part of the effective pretraining and ensures a smaller dimension of the fine-tuning search space. 

\begin{defi}[Layer-Wise Optimality] A model $\mathcal N^\theta$ is \textit{layer-wise optimal} (LWO) if, for every chosen layer $\ell$, freezing $\ell$, randomizing all other layers $\ell' \neq \ell$, and fine-tuning still yield an acceptable optimum.

If $\mathcal N^{\theta'}$ remains LWO when $\mathcal N^{\theta}$ is LWO, then the perturbation $\theta \mapsto \theta'$ is \textit{layer-wise optimality preserving} (LWOP). \end{defi}

Together, Jacobian control and layer-wise optimality guarantee accelerated fine-tuning. Thus, a perturbation that is LWOP and has Jacobian control is considered \textbf{Resilient}.
\vspace*{3mm}
\begin{tcolorbox}[width=\linewidth, before skip=-2.1mm, after skip=0.2cm, boxsep=0.0cm, middle=0.1cm, top=0.1cm, bottom=0.1cm]
\textit{To conclude Section~\ref{sec:framework}, achieving both conditions requires perturbing to maximize loss (\textbf{C1}) while controlling the model's Jacobian (\textbf{C2a}) and preserving layer-wise optimality  (\textbf{C2b}).}
\end{tcolorbox}

\section{NoT - The Unlearning Algorithm}
\label{sec:negation_theory}

In this section, we present the NoT algorithm and discuss its role as a federated unlearning solution. First, we outline the algorithm's design, process, and advantages. Second, we position NoT within our theoretical framework of perturbation and fine-tuning for unlearning. Finally, we discuss the selection of layers for negation.

\def\Yneg{Y_-}
\def\Lneg{\mathscr{L}_{\mathrm{neg}}}

\subsection{Algorithm Overview} 
\label{subsec:description}

When an unlearning request is received, the server initiates the unlearning process by \textbf{negating} the parameters of specified layers $\mathscr L_{\mathrm{neg}}$ in the converged global model $\mathcal{N}^{\theta^*}$. This produces a perturbed model $\mathcal N^{\theta'}$ with parameters $\theta'$ as follows: \begin{equation}\theta':=(-\theta^*_{\ell})_{\ell \in \mathscr L_{\mathrm{neg}}}\oplus (\theta^*_\ell)_{\ell\in\mathscr L\setminus \mathscr L_{\mathrm{neg}}}.\end{equation} 
NoT then fine-tunes $\mathcal N^{\theta'}$ on the retained data $D_r$, resulting in a final model that excludes target data contributions while preserving essential knowledge. Algorithm~\ref{alg_NoT} in Appendix~\ref{not_algo} details our proposed method via weight negation. A PyTorch implementation is included in Appendix~\ref{code_not}.

NoT presents several a priori advantages: \ding{182} Negation is \textit{computationally negligible}. \ding{183} The target client only needs to signal an unlearning request, thus \textit{communication cost is minimal}. \ding{184} \textit{No additional storage required} on the client or server side. \ding{185} \textit{No access to $D_u$} is needed, allowing target clients to delete it immediately after requesting unlearning. While there are some costs incurred during fine-tuning, they are relatively low as we will show empirically in our experiments. 

\subsection{Negation as a Strong and Resilient Perturbation}  
\label{subsection:negation_perturbation}
The NoT algorithm follows a “perturb then fine-tune” approach. Now, we theoretically ensure that NoT’s negation-based perturbation meets the conditions \textbf{(C1)} and \textbf{(C2)} for effective unlearning. Let $\mathcal N^\theta$ be a neural network model and let $X$ be a random vector following a dataset distribution $\mathcal D$, we denote by $Y_\ell$ the pre-nonlinearity activations of layer $\ell\in \mathscr L$ given input $X$ and define $\Yneg:=\bigoplus_{\ell \in \mathscr L_{\mathrm{neg}}} Y_\ell$. 

\noindent\ding{70} \textbf{(C1)} Under mild assumptions, weight negation is the strongest perturbation as it maximizes changes in the activations of the perturbed layers:

\begin{theo} \label{theo:maximize_dist}
Denote $\sigma(x):=\max(x,0)$ and let $Y_\ell '$ be the output of layer $\ell$ for some perturbation of $\ell$. Assume $\mathbb E\left|\|\sigma(Y_\ell)\|^2-\|\sigma(-Y_\ell)\|^2\right|\leq \varepsilon$ and $\mathbb E\left|\|\sigma(Y_\ell)\|^2-\|\sigma(Y_\ell')\|^2\right|\leq \varepsilon$, then: \begin{equation}
    \mathbb E \|\sigma(Y_\ell)-\sigma(-Y_\ell)\|^2\geq  \mathbb E \|\sigma(Y_\ell)-\sigma(Y_\ell')\|^2 - 2\varepsilon.
\end{equation}
\end{theo}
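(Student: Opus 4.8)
The plan is to exploit the defining complementarity of the ReLU nonlinearity $\sigma(x)=\max(x,0)$: for every real $x$, at most one of $\sigma(x)$ and $\sigma(-x)$ is nonzero (both vanish at $x=0$), so their product is identically zero. Applying this coordinatewise to the activation vector $Y_\ell$ gives $\langle \sigma(Y_\ell),\sigma(-Y_\ell)\rangle = 0$ pointwise (hence almost surely). Expanding the squared Euclidean norm then yields the exact identity
\[
\|\sigma(Y_\ell)-\sigma(-Y_\ell)\|^2 = \|\sigma(Y_\ell)\|^2 + \|\sigma(-Y_\ell)\|^2 ,
\]
with no surviving cross term — this is the structural fact that makes negation special.

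For the competing perturbation I would use only that $\sigma(Y_\ell)$ and $\sigma(Y_\ell')$ are both entrywise nonnegative, so $\langle \sigma(Y_\ell),\sigma(Y_\ell')\rangle \ge 0$; expanding the norm gives the one-sided bound $\|\sigma(Y_\ell)-\sigma(Y_\ell')\|^2 \le \|\sigma(Y_\ell)\|^2 + \|\sigma(Y_\ell')\|^2$. Taking expectations of this inequality and of the identity above reduces the claim to showing $\mathbb E\|\sigma(-Y_\ell)\|^2 \ge \mathbb E\|\sigma(Y_\ell')\|^2 - 2\varepsilon$. This is exactly where the two hypotheses enter: since $\mathbb E|A-B|\le\varepsilon$ implies $|\mathbb E A-\mathbb E B|\le\varepsilon$, the first assumption gives $\mathbb E\|\sigma(-Y_\ell)\|^2 \ge \mathbb E\|\sigma(Y_\ell)\|^2-\varepsilon$ and the second gives $\mathbb E\|\sigma(Y_\ell')\|^2 \le \mathbb E\|\sigma(Y_\ell)\|^2+\varepsilon$; chaining the two inequalities closes the argument.

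The main obstacle is conceptual rather than computational: recognizing that weight negation sends the ReLU activations into the orthogonal complement of their original position (zero overlap), whereas a generic perturbation retains a nonnegative overlap that can only shrink the distance. Once that is seen, the $\varepsilon$-assumptions play the purely auxiliary role of equalizing the magnitude terms $\mathbb E\|\sigma(-Y_\ell)\|^2$ and $\mathbb E\|\sigma(Y_\ell')\|^2$ against the common reference $\mathbb E\|\sigma(Y_\ell)\|^2$. The one point to handle with care is that the hypotheses are stated for \emph{squared norms of activations}, not for the activations themselves, so the estimates must be kept at the level of $\|\sigma(\cdot)\|^2$ throughout and never lifted to pointwise bounds.
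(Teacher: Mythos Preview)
Your argument is correct and, in fact, cleaner than the paper's. Both proofs rest on the same structural fact---the identity $\|\sigma(Y_\ell)-\sigma(-Y_\ell)\|^2=\|\sigma(Y_\ell)\|^2+\|\sigma(-Y_\ell)\|^2$ coming from the coordinatewise orthogonality $\sigma(x)\sigma(-x)=0$---but the paper takes a detour: it first establishes an intermediate lemma bounding $\mathbb E\|\sigma(Y)-\sigma(-Y)\|^2$ below by $\mathbb E[\max_{y\ge 0,\|y\|=\|\sigma(Y)\|}\|\sigma(Y)-y\|^2]-\varepsilon$, using a case split on the event $\{Y\ge 0\}$ versus $\{Y\not\ge 0\}$ (to distinguish whether the constrained maximum is $\|\sigma(Y)\|^2$ or $2\|\sigma(Y)\|^2$), and only then deduces the theorem. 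Your route bypasses this entirely by pairing the exact orthogonality identity directly against the trivial bound $\|\sigma(Y_\ell)-\sigma(Y_\ell')\|^2\le\|\sigma(Y_\ell)\|^2+\|\sigma(Y_\ell')\|^2$, then invoking the two $\varepsilon$-hypotheses only at the level of expected squared norms. What the paper's detour buys is a slightly stronger intermediate statement (comparison against the \emph{worst} norm-constrained perturbation, not just against a given $Y_\ell'$); what your approach buys is a two-line proof with no case analysis.
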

While the loss may occasionally remain unchanged (\eg, if the next layer is zero), the distinct distribution of $(X,\Yneg)$ from $(X,-\Yneg)$ implies that \textbf{C1} is generally satisfied. See Appendix~\ref{sec:negation_bounds} for proof and additional discussion.

\noindent\ding{70} \textbf{(C2)} The following Theorems ensure conditions \textbf{C2a} and \textbf{C2b}, ensuring resilience for NoT.
\begin{theo}\label{theo:jacobian} Assume $\mathcal N^{\theta}$ is a feedforward \footnote{By feedforward, we mean that the computational graph is a DAG. RNNs are not feedforward, but residual connections are allowed.} neural network of layer poset $(\mathscr L,\leq)$ ordered by the computational graph. Let $J^{\theta}_\ell:= \nabla_{\theta_\ell}\mathcal N^{\theta}(X)$ with $X\sim \mathcal D$ for layer $\ell \in \mathscr L$.  Then:
\begin{align} 
 \mathcal W\left(J^{\theta^*}_\ell;J^{\theta'}_\ell\right) \leq& A_\ell ~  {\mathrm{TV}}(\Yneg; -\Yneg),& \forall\ell> \mathscr L_{\mathrm{neg}}\nonumber\\ 
\mathcal W\left(\epsilon J^{\theta^*}_\ell;J^{\theta'}_\ell\right)\leq& A_\ell ~ \mathrm{TV} ( (X, \Yneg); (X,-\Yneg )),& \forall\ell\leq \mathscr L_{\mathrm{neg}}
\end{align}
% \begin{split}
%\\ 
% 
% \end{split}
where $\epsilon=(-1)^{\ell\notin  \mathscr L_{\mathrm{neg}} }$,   $\mathcal W$ and  ${\mathrm{TV}}$ are the  Wasserstein and total variation distances, respectively, and  $(A_\ell)_{\ell \in \mathscr L}$ are positive values. See Appendix~\ref{sec:spectrum} for the proof, technical assumptions, and details on $A_\ell$.
\end{theo}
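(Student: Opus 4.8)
The plan is to express the Jacobian $J^{\theta}_\ell = \nabla_{\theta_\ell}\mathcal N^{\theta}(X)$ via the chain rule through the computational DAG, separating the contribution that is \emph{downstream} of the negated layers from the contribution that is \emph{upstream}. Write $\mathcal N^\theta(X)$ as a composition $\Phi_{\mathrm{out}}\circ(\text{activations of }\mathscr L_{\mathrm{neg}})\circ\Phi_{\mathrm{in}}$, so that the gradient with respect to $\theta_\ell$ factors as a product of: (i) a Jacobian of the post-$\mathscr L_{\mathrm{neg}}$ subnetwork evaluated at the activations $\sigma(Y_{\mathrm{neg}})$ or $\sigma(-Y_{\mathrm{neg}})$, and (ii) a local Jacobian $\partial(\text{pre-activation of }\ell)/\partial\theta_\ell$, which for $\ell \le \mathscr L_{\mathrm{neg}}$ is the \emph{same} function of the input up to an overall sign $\epsilon=(-1)^{\ell\notin\mathscr L_{\mathrm{neg}}}$ (the sign bookkeeping is exactly why the upstream bound carries the factor $\epsilon$ and compares $\epsilon J^{\theta^*}_\ell$ with $J^{\theta'}_\ell$). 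The key structural fact to exploit is that negating $\theta_\ell$ for $\ell\in\mathscr L_{\mathrm{neg}}$ changes the pre-nonlinearity activation $Y_{\mathrm{neg}}$ into exactly $-Y_{\mathrm{neg}}$ (since pre-activations are linear in the incoming weights), and leaves everything functionally unchanged except through this sign flip — so all the perturbation's effect is funneled through the law of $Y_{\mathrm{neg}}$ versus $-Y_{\mathrm{neg}}$.

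Next I would bound the Wasserstein distance by a transport/coupling argument. For $\ell > \mathscr L_{\mathrm{neg}}$, couple $J^{\theta^*}_\ell$ and $J^{\theta'}_\ell$ using the \emph{same} input sample $X$: the two random matrices then differ only through whether the downstream subnetwork sees $\sigma(Y_{\mathrm{neg}})$ or $\sigma(-Y_{\mathrm{neg}})$. On the event $\{Y_{\mathrm{neg}}=-Y_{\mathrm{neg}}\}$ — more precisely, on a maximal coupling event where the two activation laws agree — the matrices coincide; off that event we bound the difference by a constant $A_\ell$ obtained from a supremum (or a suitable moment bound) of the relevant downstream Jacobian norms. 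Taking $A_\ell$ to absorb $\sup$ of $\|\nabla(\text{downstream})\|\cdot\|\partial Y_\ell/\partial\theta_\ell\|$ and using the standard fact that $\inf_{\text{couplings}}\Pr[U\neq V]=\mathrm{TV}(\mathrm{law}(U),\mathrm{law}(V))$, the expected transport cost is at most $A_\ell\,\mathrm{TV}(Y_{\mathrm{neg}};-Y_{\mathrm{neg}})$. For $\ell\le\mathscr L_{\mathrm{neg}}$ the gradient also depends on the \emph{local} input to layer $\ell$, so one must couple the joint variable $(X,Y_{\mathrm{neg}})$ against $(X,-Y_{\mathrm{neg}})$ — hence the stronger total variation $\mathrm{TV}((X,Y_{\mathrm{neg}});(X,-Y_{\mathrm{neg}}))$ appears — while the sign $\epsilon$ is pulled out of the local factor for layers outside $\mathscr L_{\mathrm{neg}}$.

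The main obstacle I anticipate is making the constants $A_\ell$ genuinely finite and independent of the coupling: this requires either a boundedness assumption on activations/derivatives (e.g.\ Lipschitz $\sigma$, bounded weights, compactly supported $X$) or replacing the crude $\sup$ with a $p$-th moment bound and invoking a Wasserstein–TV inequality of the form $\mathcal W(\mu,\nu)\le (\text{moment bound})\cdot\mathrm{TV}(\mu,\nu)^{1-1/p}$, which would change the stated exponent. A secondary subtlety is the poset structure: a layer $\ell$ incomparable to $\mathscr L_{\mathrm{neg}}$ in the DAG must be handled (presumably such layers have $J^{\theta^*}_\ell=J^{\theta'}_\ell$ outright, so $A_\ell$ can be taken $0$), and one must check the chain-rule factorization is valid at the non-differentiability points of $\sigma$, which is where the ``piecewise twice continuously differentiable'' hypothesis from the preliminaries is invoked. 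I would defer the precise form of $A_\ell$ and these regularity checks to the appendix, as the theorem statement already does.
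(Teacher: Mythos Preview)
Your proposal is correct and follows essentially the same route as the paper: factor the network through $\mathscr L_{\mathrm{neg}}$, observe that the Jacobian is a bounded function of $\Yneg$ (for $\ell>\mathscr L_{\mathrm{neg}}$) or of $(X,\Yneg)$ (for $\ell\leq\mathscr L_{\mathrm{neg}}$), and then invoke the maximal-coupling inequality $\mathcal W(f\#\mu,f\#\nu)\leq 2\|f\|_\infty\,\mathrm{TV}(\mu,\nu)$, which is exactly your ``agree on the coupling event, pay a sup-bound off it'' argument. The paper packages this inequality as a separate lemma, obtains the explicit constants $A_\ell$ from layer-wise Lipschitz products along maximal chains of the DAG (under bounded input support and Lipschitz activations, as you anticipated), and handles the poset issue by assuming $\mathscr L_{\mathrm{neg}}$ is a \emph{Cauchy domain} (it meets every maximal chain exactly once), which makes the upstream/downstream factorization globally valid and renders your ``incomparable layer'' case vacuous.
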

\begin{theo}\label{theo:negate_C2} 
The negation perturbation is LWOP  if $\mathscr L_{\mathrm{neg}}$ is an antichain of the poset $\mathscr L$ containing no maximal element, and each $\ell\in \mathscr L_{\mathrm{neg}}$ is activated by sigmoid-like, odd, or even functions (\eg, $\mathbf 1_{>0}$, $\tanh$, $\sin$, $x^2$). 
See Appendix~\ref{sec:appendix_pretraining} for details and proof.
\end{theo}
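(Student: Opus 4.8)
The plan is to show that negating the parameters of an antichain $\Lneg$ of non-maximal layers activated by sigmoid-like, odd, or even nonlinearities produces a network that is functionally equivalent (after absorbing signs into adjacent weights) to the original, so that layer-wise optimality is inherited verbatim. First I would set up the key algebraic identity: for a layer $\ell$ with activation $\phi_\ell$, write the contribution of $\ell$ to the downstream computation as $W^{\mathrm{out}}_\ell \,\phi_\ell(W^{\mathrm{in}}_\ell z + b_\ell)$ where $z$ is the incoming activation and $W^{\mathrm{out}}_\ell$ collects all weights reading from $\ell$. Negating $\theta_\ell$ replaces $W^{\mathrm{in}}_\ell z + b_\ell$ by $-(W^{\mathrm{in}}_\ell z + b_\ell)$. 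If $\phi_\ell$ is odd, $\phi_\ell(-u) = -\phi_\ell(u)$, so the sign can be pulled out and cancelled by flipping $W^{\mathrm{out}}_\ell \mapsto -W^{\mathrm{out}}_\ell$; if $\phi_\ell$ is even, $\phi_\ell(-u)=\phi_\ell(u)$ and nothing changes downstream; if $\phi_\ell$ is sigmoid-like in the sense $\phi_\ell(-u) = c - \phi_\ell(u)$ (e.g.\ $c=1$ for $\mathbf 1_{>0}$ and the logistic sigmoid), then $W^{\mathrm{out}}_\ell\phi_\ell(-u) = cW^{\mathrm{out}}_\ell\mathbf 1 - W^{\mathrm{out}}_\ell\phi_\ell(u)$, i.e.\ the change is a sign flip on $W^{\mathrm{out}}_\ell$ plus a constant shift that is absorbed into the biases of the (single layer of) immediate successors. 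In every case the negation of $\theta_\ell$ is undone by an explicit, invertible reparametrization of the \emph{successor} layers only.

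Next I would use the two structural hypotheses to make this layer-local argument global. Because $\Lneg$ is an antichain, no negated layer feeds another negated layer, so the required compensating changes on the successor layers never conflict or compound: each successor $\ell'$ may read from several negated layers, but the corresponding corrections act on disjoint blocks of $W^{\mathrm{in}}_{\ell'}$ (one block per incoming negated layer) plus an additive bias correction, and these commute. Because no $\ell\in\Lneg$ is maximal, each negated layer genuinely has successors into which the sign/constant can be pushed — this is exactly what rules out the degenerate case (negating the output layer changes the function irreparably). Assembling the per-layer reparametrizations gives an explicit diffeomorphism $\Phi:\Theta\to\Theta$, affine in the successor blocks and the identity elsewhere, such that $\mathcal N^{\Phi(\theta)} = \mathcal N^{\theta'}$ as functions for all $\theta$, where $\theta'$ is the negation of $\theta$. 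In particular $\mathcal L_D(\theta') = \mathcal L_D(\Phi^{-1}(\theta'))$ and the two loss landscapes are related by a diffeomorphism that preserves the block structure ``layer $\ell$ vs.\ its complement''.

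Finally I would translate this into the LWO statement. Suppose $\mathcal N^\theta$ is LWO: for a chosen layer $\ell$, freezing $\theta_\ell$, randomizing the rest, and fine-tuning reaches an acceptable optimum. I must show the same for $\mathcal N^{\theta'}$. If $\ell\notin\Lneg$ the frozen block is literally unchanged by negation; if $\ell\in\Lneg$, freezing $\theta'_\ell = -\theta^*_\ell$ is equivalent, via $\Phi$, to freezing $\theta^*_\ell$ up to a fixed sign that the reparametrization carries — and since $\Phi$ acts by an invertible affine map on the non-$\ell$ blocks, ``randomize the other layers then fine-tune'' in the $\theta'$ coordinates corresponds (after pushing through $\Phi^{-1}$) to ``randomize the other layers from a reparametrized distribution then fine-tune'' in the $\theta$ coordinates, with identical optimal loss values along trajectories because gradient descent is covariant under smooth reparametrization up to the (bounded, full-rank) Jacobian of $\Phi$ on those blocks. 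Hence an acceptable optimum is still reached, so $\mathcal N^{\theta'}$ is LWO and negation is LWOP. I expect the main obstacle to be the last step: making precise what ``randomizing the other layers and fine-tuning still yields an acceptable optimum'' means and checking it is stable under the affine reparametrization $\Phi$ — in particular that the induced pushforward of the randomization distribution is still an admissible randomization, and that the fine-tuning dynamics (SGD) reach a comparable loss level, which needs either a covariance argument for gradient flow or an appeal to the fact that $\Phi$ restricted to the relevant blocks is affine with bounded condition number.
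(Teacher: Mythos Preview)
Your algebraic core---the compensation identity showing that negating a layer with odd, even, or sigmoid-like activation can be undone by an affine adjustment of the immediate successor---is exactly the paper's key lemma, and your use of the antichain and non-maximality hypotheses to avoid conflicts and guarantee a successor matches the paper's reduction to a singleton $\Lneg=\{\ell_1\}$.

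Where you diverge is in the final verification of LWOP. The paper does \emph{not} attempt to track the operational definition (freeze $\ell$, randomize the rest, fine-tune). Instead it introduces a quantitative \emph{layer-wise optimality norm}
\[
\|\theta\|_{*,\ell}\ :=\ \inf_{\alpha\ge 0,\ \theta^*\in\Theta^*}\|\theta_\ell-\alpha\theta^*_\ell\|_2,
\]
and proves that negation is ``LWO-Lipschitz'', i.e.\ $\|\sigma(\theta)\|_{*,\ell}\le\|\theta\|_{*,\ell}$ for every $\ell$. The argument is then a one-line distance computation: given $\theta^*\in\Theta^*$ with $\|\theta_{\ell_1}-\theta^*_{\ell_1}\|$ nearly optimal, the compensated parameters $\widetilde\theta^*$ (negate $\theta^*_{\ell_1}$, adjust $\theta^*_{\ell_2}$ via the lemma) satisfy $\mathcal N^{\widetilde\theta^*}=\mathcal N^{\theta^*}$, hence $\widetilde\theta^*\in\Theta^*$, and $\|{-\theta_{\ell_1}}-\widetilde\theta^*_{\ell_1}\|=\|\theta_{\ell_1}-\theta^*_{\ell_1}\|$. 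No dynamics, no randomization law, no covariance of SGD enters.

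Your self-diagnosed obstacle is real, not merely a detail: gradient descent is \emph{not} covariant under an affine reparametrization (the pushed-forward flow acquires the preconditioner $J_\Phi^\top J_\Phi$), and the pushforward of a standard initialization is generally not a standard initialization, so ``randomize then fine-tune'' in the two coordinate systems are genuinely different experiments. The paper avoids this entirely by recasting LWO as layer-wise distance to $\Theta^*$, which turns the whole thing into an existence statement about the optimal set rather than a statement about trajectories. Your route is sound up to the point you flagged; the cleanest fix is precisely the paper's: change the target, not the dynamics argument.
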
  
In the wide network limit, we expect ${\mathrm{TV}}(Y_\ell,-Y_\ell)\ll 1$ for all hidden $\ell$, meaning that Theorem~\ref{theo:jacobian} ensures the preservation of each $J_\ell$ spectrum for $\ell > \mathscr L_{\mathrm{neg}}$ through negation. For $\ell\leq \mathscr L_{\mathrm{neg}}$, gradients are non-exploding but possibly vanishing. Hence, \textbf{C2a}. Theorem~\ref{theo:negate_C2} provides \textbf{C2b} in many cases but does not cover ReLU-like activations, we conjecture that negation is “approximately” LWOP in this case. 

\subsection{Selecting Layers for Negation}
\label{subsec:hyperparameter}
Generally, we only negate the weights of the first layer, as this is sufficient to induce changes in low-level feature representations, leading to significant parameter updates in deeper layers during fine-tuning. As a result, high-level features containing user-specific information are effectively forgotten. Additionally, while negating multiple layers can strengthen unlearning, it also slows down recovery. For instance, consider negating two layers $\ell_1$ and $\ell_2$, where $\ell_1<\ell_2$ in the computational graph. On one hand, Theorem \ref{theo:negate_C2} suggests that the simplest recovery path involves modifying layers $\ell> \ell_1$. On the other hand, Theorem~\ref{theo:jacobian} indicates that layers $\ell\leq \ell_2$ may suffer from vanishing gradients. Consequently, layers in the range $\{\ell_1<\ell\leq \ell_2\}\neq \emptyset$ should be fine-tuned but are likely to have small gradients, making recovery slower. Additionally, negating both convolution and normalization layers sequentially is ineffective, as the negations cancel out\footnote{Negating both the scale $\sigma$ and mean $\mu$ in a normalization layer $N$ along with the convolution $C$ (without non-linearity) results in: $(-N)(-C)(x) = ((-Cx)-(-\mu))/(-\sigma) = (Cx-\mu)/\sigma = NC(x)$.}.
\section{Experiments} 
\label{sec_experimens}
In this section, we evaluate NoT across three datasets and three model architectures within federated settings, benchmarking it against seven baseline methods. The main experiments cover random data forgetting, backdoor attack mitigation, and empirical validation of our theoretical predictions. Additionally, we evaluate NoT within a centralized setting, benchmarking it against eight baselines. Results indicate that NoT achieves superior unlearning performance with low communication and computation costs. Additionally, we present an ablation study examining the impact of negating different layers of a model, various perturbations, and changing the ratio of data to forget.

\subsection{Experimental Setup}
\paragraph{Datasets and Models.} We evaluate NoT using CIFAR-10/100 \citep{krizhevsky2014cifar} and Caltech-101 \citep{li_andreeto_ranzato_perona_2022}, with three architectures: CNN (two convolution layers with layer normalization), ResNet-18 \citep{he2016deep}, and Vision Transformer (ViT-B/16) \citep{dosovitskiy2020image}.

\paragraph{Implementation Details.} For each dataset and architecture, we train a global model using FedAvg until convergence, then apply the unlearning algorithm to the converged model. Each communication round involves the participation of all clients, with data distributed IID among them unless stated otherwise. Each client’s data is divided into training and validation sets (80:20), while the test set is used to assess model accuracy. For Caltech-101, we initially split 80\% of samples for training and 20\% for testing, then partition the training data among clients. Further details are provided in Appendix~\ref{further_imp_details}.

\paragraph{Baselines.} We compare NoT with several baselines: \ding{182} \textit{Retrain}: Retraining from scratch with the retain data $D_r$, serving as the gold standard model due to exact unlearning; \ding{183} \textit{FT}: Fine-tuning the original converged model solely with $D_r$, relying on natural forgetting of $D_u$; \ding{184} \textit{FedEraser} \citep{liu2021federaser}; \ding{185} \textit{FUKD} \citep{wu2022federated}; \ding{186} \textit{PGD} \citep{halimi2022federated}; \ding{187} \textit{MoDE} \citep{zhao2023federated}; and \ding{188} \textit{FCU} \citep{deng2024enable}. 

\begin{table*}[t!]
\centering
\caption{\textbf{Client-wise federated unlearning in an IID setting.} Performance comparison of NoT with baselines in a 10-client setup, where the first client requests unlearning. The best average gap is marked in \textcolor{red}{red}.}
\vspace{-3mm}
\resizebox{17.5cm}{!}{
\begin{tabular}{llllllllll}
\toprule
\multirow{2}{*}{\shortstack[c]{\textbf{Dataset} \\ \textbf{ \& Model}}} & \multirow{2}{*}{\shortstack[c]{\textbf{Method}}} & \multicolumn{3}{c}{\textbf{Accuracy (\%)}} & \multicolumn{1}{c}{\textbf{Privacy (\%)}} & \multicolumn{1}{c}{\multirow{2}{*}{\shortstack[c]{\textbf{Avg.} \\ \textbf{Gap $\downarrow$}}}} & \multicolumn{2}{c}{\textbf{Cost (Bytes \& FLOPs)}} \\
\cmidrule(r){3-5} \cmidrule(r){6-6} \cmidrule(r){8-9}  
&& \multicolumn{1}{c}{\textbf{Retain (\textcolor{blue}{$\Delta \downarrow$})}} & \multicolumn{1}{c}{\textbf{Forget (\textcolor{blue}{$\Delta \downarrow$})}} & \multicolumn{1}{c}{\textbf{Test (\textcolor{blue}{$\Delta \downarrow$})}} & \multicolumn{1}{c}{\textbf{MIA (\textcolor{blue}{$\Delta \downarrow$})}} & & \multicolumn{1}{c}{\textbf{Comm. $\downarrow$}} & \multicolumn{1}{c}{\textbf{Comp. $\downarrow$}} \\
\midrule
% ########### CIFAR-10, CNN ############################
\multirow{8}{*}{\shortstack[c]{CIFAR-10 \\ CNN}} & Retrain & 91.66\tiny{$\pm$ 0.12} \normalsize{(\textcolor{blue}{0.00}}) & 83.05\tiny{$\pm$ 0.23} \normalsize{(\textcolor{blue}{0.00}}) & 82.32\tiny{$\pm$ 0.30} \normalsize{(\textcolor{blue}{0.00}}) & 50.23\tiny{$\pm$ 0.39} \normalsize{(\textcolor{blue}{0.00}}) & \normalsize{\textcolor{blue}{0.00}} & 1.35$e^{10}$ & 5.81$e^{16}$ \\
\cline{2-9}
& FT & 92.48\tiny{$\pm$ 0.20} \normalsize{(\textcolor{blue}{0.82}}) & 85.56\tiny{$\pm$ 0.36} \normalsize{(\textcolor{blue}{2.51}}) & 82.36\tiny{$\pm$ 0.08} \normalsize{(\textcolor{blue}{0.04}}) & 50.90\tiny{$\pm$ 0.71} \normalsize{(\textcolor{blue}{0.67}}) & \normalsize{\textcolor{blue}{1.01}} & 9.39$e^{09}$ & 4.06$e^{16}$ \\
& FedEraser & 88.19\tiny{$\pm$ 0.16} \normalsize{(\textcolor{blue}{3.47}}) & 81.71\tiny{$\pm$ 0.23} \normalsize{(\textcolor{blue}{1.34}}) & 80.87\tiny{$\pm$ 0.37} \normalsize{(\textcolor{blue}{1.45}}) & 50.17\tiny{$\pm$ 0.26} \normalsize{(\textcolor{blue}{0.06}}) & \normalsize{\textcolor{blue}{1.58}} & 1.34$e^{10}$ & 5.79$e^{16}$ \\
& FUKD & 82.69\tiny{$\pm$ 0.05} \normalsize{(\textcolor{blue}{8.97}}) & 79.31\tiny{$\pm$ 0.12} \normalsize{(\textcolor{blue}{3.74}}) & 78.71\tiny{$\pm$ 0.12} \normalsize{(\textcolor{blue}{3.61}}) & 50.17\tiny{$\pm$ 0.49} \normalsize{(\textcolor{blue}{0.06}}) & \normalsize{\textcolor{blue}{4.09}} & 1.33$e^{10}$ & 5.77$e^{16}$ \\
& PGD & 92.62\tiny{$\pm$ 0.13} \normalsize{(\textcolor{blue}{0.96}}) & 85.36\tiny{$\pm$ 0.30} \normalsize{(\textcolor{blue}{2.31}}) & 82.50\tiny{$\pm$ 0.02} \normalsize{(\textcolor{blue}{0.18}}) & 50.70\tiny{$\pm$ 0.45} \normalsize{(\textcolor{blue}{0.47}}) & \normalsize{\textcolor{blue}{0.98}} & 1.19$e^{10}$ & 5.13$e^{16}$ \\
& MoDE & 92.56\tiny{$\pm$ 0.13} \normalsize{(\textcolor{blue}{0.90}}) & 85.25\tiny{$\pm$ 0.62} \normalsize{(\textcolor{blue}{2.20}}) & 82.31\tiny{$\pm$ 0.35} \normalsize{(\textcolor{blue}{0.01}}) & 50.70\tiny{$\pm$ 0.41} \normalsize{(\textcolor{blue}{0.47}}) & \normalsize{\textcolor{blue}{0.90}} & 1.10$e^{10}$ & 4.77$e^{16}$ \\
& FCU & 92.46\tiny{$\pm$ 0.11} \normalsize{(\textcolor{blue}{0.80}}) & 84.84\tiny{$\pm$ 0.22} \normalsize{(\textcolor{blue}{1.79}}) & 82.48\tiny{$\pm$ 0.21} \normalsize{(\textcolor{blue}{0.16}}) & 50.70\tiny{$\pm$ 0.36} \normalsize{(\textcolor{blue}{0.47}}) & \normalsize{\textcolor{blue}{0.81}} & 1.33$e^{10}$ & 5.75$e^{16}$ \\
\cline{2-9}
\rowcolor{lightgray!50} \cellcolor{white} & NoT (Ours) & 91.69\tiny{$\pm$ 0.02} \normalsize{(\textcolor{blue}{0.03}}) & 83.86\tiny{$\pm$ 0.17} \normalsize{(\textcolor{blue}{0.81}}) & 82.65\tiny{$\pm$ 0.14} \normalsize{(\textcolor{blue}{0.33}}) & 50.23\tiny{$\pm$ 0.21} \normalsize{(\textcolor{blue}{0.00}}) & \normalsize{\textcolor{red}{0.29}} & 7.09$e^{09}$ & 3.06$e^{16}$ \\
\midrule
% ########### CIFAR-100, CNN ############################
\multirow{8}{*}{\shortstack[c]{CIFAR-100 \\ CNN}} & Retrain & 72.32\tiny{$\pm$ 0.11} \normalsize{(\textcolor{blue}{0.00}}) & 53.31\tiny{$\pm$ 0.87} \normalsize{(\textcolor{blue}{0.00}}) & 54.28\tiny{$\pm$ 0.25} \normalsize{(\textcolor{blue}{0.00}}) & 49.70\tiny{$\pm$ 0.64} \normalsize{(\textcolor{blue}{0.00}}) & \normalsize{\textcolor{blue}{0.00}} & 1.38$e^{10}$ & 5.96$e^{16}$ \\
\cline{2-9}
 & FT & 73.68\tiny{$\pm$ 0.06} \normalsize{(\textcolor{blue}{1.36}}) & 56.11\tiny{$\pm$ 0.45} \normalsize{(\textcolor{blue}{2.80}}) & 55.46\tiny{$\pm$ 0.08} \normalsize{(\textcolor{blue}{1.18}}) & 49.77\tiny{$\pm$ 1.11} \normalsize{(\textcolor{blue}{0.07}}) & \normalsize{\textcolor{blue}{1.35}} & 1.33$e^{10}$ & 5.77$e^{16}$ \\
 & FedEraser & 67.25\tiny{$\pm$ 0.44} \normalsize{(\textcolor{blue}{5.07}}) & 51.02\tiny{$\pm$ 0.05} \normalsize{(\textcolor{blue}{2.29}}) & 51.51\tiny{$\pm$ 0.62} \normalsize{(\textcolor{blue}{2.77}}) & 49.60\tiny{$\pm$ 0.57} \normalsize{(\textcolor{blue}{0.10}}) & \normalsize{\textcolor{blue}{2.56}} & 1.38$e^{10}$ & 5.96$e^{16}$ \\
 & FUKD & 55.99\tiny{$\pm$ 0.03} \normalsize{(\textcolor{blue}{16.33}}) & 45.20\tiny{$\pm$ 0.04} \normalsize{(\textcolor{blue}{8.11}}) & 47.32\tiny{$\pm$ 0.12} \normalsize{(\textcolor{blue}{6.96}}) & 51.13\tiny{$\pm$ 0.24} \normalsize{(\textcolor{blue}{1.43}}) & \normalsize{\textcolor{blue}{8.21}} & 1.38$e^{10}$ & 5.95$e^{16}$ \\
 & PGD & 73.68\tiny{$\pm$ 0.11} \normalsize{(\textcolor{blue}{1.36}}) & 56.00\tiny{$\pm$ 0.47} \normalsize{(\textcolor{blue}{2.69}}) & 55.21\tiny{$\pm$ 0.07} \normalsize{(\textcolor{blue}{0.93}}) & 49.83\tiny{$\pm$ 0.95} \normalsize{(\textcolor{blue}{0.13}}) & \normalsize{\textcolor{blue}{1.28}} & 1.21$e^{10}$ & 5.22$e^{16}$ \\
 & MoDE & 73.36\tiny{$\pm$ 0.45} \normalsize{(\textcolor{blue}{1.04}}) & 55.64\tiny{$\pm$ 0.37} \normalsize{(\textcolor{blue}{2.33}}) & 55.16\tiny{$\pm$ 0.19} \normalsize{(\textcolor{blue}{0.88}}) & 49.67\tiny{$\pm$ 0.94} \normalsize{(\textcolor{blue}{0.03}}) & \normalsize{\textcolor{blue}{1.07}} & 1.20$e^{10}$ & 5.19$e^{16}$ \\
 & FCU & 73.40\tiny{$\pm$ 0.11} \normalsize{(\textcolor{blue}{1.08}}) & 56.68\tiny{$\pm$ 0.08} \normalsize{(\textcolor{blue}{3.37}}) & 55.37\tiny{$\pm$ 0.08} \normalsize{(\textcolor{blue}{1.09}}) & 50.03\tiny{$\pm$ 0.19} \normalsize{(\textcolor{blue}{0.33}}) & \normalsize{\textcolor{blue}{1.47}} & 1.04$e^{10}$ & 4.49$e^{16}$ \\
\cline{2-9}
\rowcolor{lightgray!50} \cellcolor{white} & NoT (Ours) & 72.25\tiny{$\pm$ 0.08} \normalsize{(\textcolor{blue}{0.07}}) & 55.22\tiny{$\pm$ 0.61} \normalsize{(\textcolor{blue}{1.91}}) & 55.23\tiny{$\pm$ 0.39} \normalsize{(\textcolor{blue}{0.95}}) & 49.63\tiny{$\pm$ 0.97} \normalsize{(\textcolor{blue}{0.07}}) & \normalsize{\textcolor{red}{0.75}} & 1.33$e^{10}$ & 5.73$e^{16}$ \\
\midrule
% ########### CIFAR-10, ResNet-18 ############################
\multirow{6}{*}{\shortstack[c]{CIFAR-10 \\ ResNet-18}} & Retrain & 100.00\tiny{$\pm$ 0.00} \normalsize{(\textcolor{blue}{0.00}}) & 87.66\tiny{$\pm$ 0.64} \normalsize{(\textcolor{blue}{0.00}}) & 87.73\tiny{$\pm$ 0.35} \normalsize{(\textcolor{blue}{0.00}}) & 49.37\tiny{$\pm$ 0.29} \normalsize{(\textcolor{blue}{0.00}}) & \normalsize{\textcolor{blue}{0.00}} & 1.23$e^{12}$ & 5.66$e^{18}$ \\
\cline{2-9}
& FT & 99.45\tiny{$\pm$ 0.77} \normalsize{(\textcolor{blue}{0.55}}) & 97.36\tiny{$\pm$ 2.22} \normalsize{(\textcolor{blue}{9.70}}) & 86.61\tiny{$\pm$ 1.54} \normalsize{(\textcolor{blue}{1.12}}) & 56.87\tiny{$\pm$ 0.84} \normalsize{(\textcolor{blue}{7.50}}) & \normalsize{\textcolor{blue}{4.72}} & 5.94$e^{11}$ & 2.73$e^{18}$ \\
& PGD & 99.51\tiny{$\pm$ 0.69} \normalsize{(\textcolor{blue}{0.49}}) & 97.67\tiny{$\pm$ 2.48} \normalsize{(\textcolor{blue}{10.01}}) & 86.82\tiny{$\pm$ 1.89} \normalsize{(\textcolor{blue}{0.91}}) & 56.70\tiny{$\pm$ 1.31} \normalsize{(\textcolor{blue}{7.33}}) & \normalsize{\textcolor{blue}{4.68}} & 6.04$e^{11}$ & 2.78$e^{18}$ \\
& MoDE & 99.80\tiny{$\pm$ 0.20} \normalsize{(\textcolor{blue}{0.20}}) & 91.18\tiny{$\pm$ 0.68} \normalsize{(\textcolor{blue}{3.52}}) & 87.11\tiny{$\pm$ 0.65} \normalsize{(\textcolor{blue}{0.62}}) & 52.37\tiny{$\pm$ 0.66} \normalsize{(\textcolor{blue}{3.00}}) & \normalsize{\textcolor{blue}{1.83}} & 5.88$e^{11}$ & 2.71$e^{18}$ \\
& FCU & 100.00\tiny{$\pm$ 0.00} \normalsize{(\textcolor{blue}{0.00}}) & 87.51\tiny{$\pm$ 0.45} \normalsize{(\textcolor{blue}{0.15}}) & 85.93\tiny{$\pm$ 0.07} \normalsize{(\textcolor{blue}{1.80}}) & 50.80\tiny{$\pm$ 0.28} \normalsize{(\textcolor{blue}{1.43}}) & \normalsize{\textcolor{red}{0.84}} & 5.73$e^{11}$ & 2.64$e^{18}$ \\
\cline{2-9}%\noalign{\vspace{0.25ex}}
\rowcolor{lightgray!50} \cellcolor{white} & NoT (Ours) & 99.77\tiny{$\pm$ 0.31} \normalsize{(\textcolor{blue}{0.23}}) & 91.62\tiny{$\pm$ 2.06} \normalsize{(\textcolor{blue}{3.96}}) & 87.63\tiny{$\pm$ 1.61} \normalsize{(\textcolor{blue}{0.10}}) & 52.20\tiny{$\pm$ 0.83} \normalsize{(\textcolor{blue}{2.83}}) & \normalsize{\textcolor{blue}{1.78}} & 5.42$e^{11}$ & 2.49$e^{18}$ \\

\midrule

% ########### CIFAR-100, ResNet ############################
\multirow{6}{*}{\shortstack[c]{CIFAR-100 \\ ResNet-18}} & Retrain & 99.96\tiny{$\pm$ 0.00} \normalsize{(\textcolor{blue}{0.00}}) & 59.96\tiny{$\pm$ 0.61} \normalsize{(\textcolor{blue}{0.00}}) & 60.66\tiny{$\pm$ 0.63} \normalsize{(\textcolor{blue}{0.00}}) & 50.30\tiny{$\pm$ 0.30} \normalsize{(\textcolor{blue}{0.00}}) & \normalsize{\textcolor{blue}{0.00}} & 7.34$e^{11}$ & 3.38$e^{18}$ \\
\cline{2-9}
& FT & 99.85\tiny{$\pm$ 0.12} \normalsize{(\textcolor{blue}{0.11}}) & 88.80\tiny{$\pm$ 3.22} \normalsize{(\textcolor{blue}{28.84}}) & 60.41\tiny{$\pm$ 1.77} \normalsize{(\textcolor{blue}{0.25}}) & 64.33\tiny{$\pm$ 1.10} \normalsize{(\textcolor{blue}{14.03}}) & \normalsize{\textcolor{blue}{10.81}} & 7.28$e^{11}$ & 3.35$e^{18}$ \\
& PGD & 99.49\tiny{$\pm$ 0.41} \normalsize{(\textcolor{blue}{0.47}}) & 75.30\tiny{$\pm$ 1.35} \normalsize{(\textcolor{blue}{15.34}}) & 61.12\tiny{$\pm$ 0.42} \normalsize{(\textcolor{blue}{0.46}}) & 57.33\tiny{$\pm$ 0.73} \normalsize{(\textcolor{blue}{7.03}}) & \normalsize{\textcolor{blue}{5.83}} & 6.67$e^{11}$ & 3.07$e^{18}$ \\
& MoDE & 99.69\tiny{$\pm$ 0.29} \normalsize{(\textcolor{blue}{0.27}}) & 73.53\tiny{$\pm$ 3.62} \normalsize{(\textcolor{blue}{13.57}}) & 60.74\tiny{$\pm$ 2.33} \normalsize{(\textcolor{blue}{0.08}}) & 57.00\tiny{$\pm$ 2.02} \normalsize{(\textcolor{blue}{6.70}}) & \normalsize{\textcolor{blue}{5.16}} & 6.76$e^{11}$ & 3.12$e^{18}$ \\
& FCU & 99.92\tiny{$\pm$ 0.03} \normalsize{(\textcolor{blue}{0.04}}) & 67.14\tiny{$\pm$ 0.20} \normalsize{(\textcolor{blue}{7.18}}) & 60.52\tiny{$\pm$ 0.26} \normalsize{(\textcolor{blue}{0.14}}) & 52.97\tiny{$\pm$ 0.76} \normalsize{(\textcolor{blue}{2.67}}) & \normalsize{\textcolor{red}{2.51}} & 3.70$e^{11}$ & 1.71$e^{18}$ \\
\cline{2-9}
\rowcolor{lightgray!50} \cellcolor{white} & NoT (Ours) & 99.35\tiny{$\pm$ 0.64} \normalsize{(\textcolor{blue}{0.61}}) & 72.03\tiny{$\pm$ 4.20} \normalsize{(\textcolor{blue}{12.07}}) & 61.98\tiny{$\pm$ 2.47} \normalsize{(\textcolor{blue}{1.32}}) & 55.20\tiny{$\pm$ 0.80} \normalsize{(\textcolor{blue}{4.90}}) & \normalsize{\textcolor{blue}{4.73}} & 6.09$e^{11}$ & 2.80$e^{18}$ \\
\midrule
% ########### Caltech-101, ViT ############################
\multirow{6}{*}{\shortstack[c]{Caltech-101 \\ ViT}} & Retrain & 99.73\tiny{$\pm$ 0.04} \normalsize{(\textcolor{blue}{0.00}}) & 48.29\tiny{$\pm$ 0.44} \normalsize{(\textcolor{blue}{0.00}}) & 48.02\tiny{$\pm$ 0.72} \normalsize{(\textcolor{blue}{0.00}}) & 49.67\tiny{$\pm$ 3.47} \normalsize{(\textcolor{blue}{0.00}}) & \normalsize{\textcolor{blue}{0.00}} & 1.76$e^{12}$ & 1.37$e^{21}$ \\
\cline{2-9}
& FT & 99.96\tiny{$\pm$ 0.00} \normalsize{(\textcolor{blue}{0.23}}) & 94.23\tiny{$\pm$ 0.51} \normalsize{(\textcolor{blue}{45.94}}) & 48.75\tiny{$\pm$ 0.27} \normalsize{(\textcolor{blue}{0.73}}) & 73.80\tiny{$\pm$ 0.94} \normalsize{(\textcolor{blue}{24.13}}) & \normalsize{\textcolor{blue}{17.76}} & 1.63$e^{12}$ & 1.28$e^{21}$ \\
& PGD & 73.34\tiny{$\pm$ 14.31} \normalsize{(\textcolor{blue}{26.39}}) & 61.44\tiny{$\pm$ 12.13} \normalsize{(\textcolor{blue}{13.15}}) & 44.22\tiny{$\pm$ 2.41} \normalsize{(\textcolor{blue}{3.80}}) & 61.43\tiny{$\pm$ 6.47} \normalsize{(\textcolor{blue}{11.76}}) & \normalsize{\textcolor{blue}{13.78}} & 4.78$e^{10}$ & 3.94$e^{19}$ \\
& MoDE & 99.82\tiny{$\pm$ 0.09} \normalsize{(\textcolor{blue}{0.09}}) & 52.01\tiny{$\pm$ 4.13} \normalsize{(\textcolor{blue}{3.72}}) & 48.02\tiny{$\pm$ 0.59} \normalsize{(\textcolor{blue}{0.00}}) & 52.27\tiny{$\pm$ 2.90} \normalsize{(\textcolor{blue}{2.60}}) & \normalsize{\textcolor{blue}{1.60}} & 1.73$e^{12}$ & 1.36$e^{21}$ \\
& FCU & 99.07\tiny{$\pm$ 0.10} \normalsize{(\textcolor{blue}{0.66}}) & 51.83\tiny{$\pm$ 0.61} \normalsize{(\textcolor{blue}{3.54}}) & 48.62\tiny{$\pm$ 0.33} \normalsize{(\textcolor{blue}{0.60}}) & 51.30\tiny{$\pm$ 0.22} \normalsize{(\textcolor{blue}{1.63}}) & \normalsize{\textcolor{blue}{1.61}} & 8.56$e^{11}$ & 8.36$e^{20}$ \\
\cline{2-9}
\rowcolor{lightgray!50} \cellcolor{white} & NoT (Ours) & 99.70\tiny{$\pm$ 0.02} \normalsize{(\textcolor{blue}{0.03}}) & 50.81\tiny{$\pm$ 0.73} \normalsize{(\textcolor{blue}{2.52}}) & 47.83\tiny{$\pm$ 0.27} \normalsize{(\textcolor{blue}{0.19}}) & 50.07\tiny{$\pm$ 2.04} \normalsize{(\textcolor{blue}{0.40}}) & \normalsize{\textcolor{red}{0.79}} & 8.08$e^{11}$ & 6.31$e^{20}$ \\
\bottomrule
\end{tabular}
}
\label{table_dif_dataset_arch}
\end{table*}

\begin{table*}[t!]
\centering
\caption{\textbf{Client-wise federated unlearning in a non-IID setting.} Performance comparison of NoT with baselines in a 10-client setup, where the first client requests unlearning. Client data follows a Dirichlet distribution with $\beta=0.1$. The best average gap is marked in \textcolor{red}{red}.}
\vspace{-3mm}
\resizebox{17.5cm}{!}{
\begin{tabular}{llllllllll}
\toprule
\multirow{2}{*}{\shortstack[c]{\textbf{Dataset} \\ \textbf{ \& Model}}} & \multirow{2}{*}{\shortstack[c]{\textbf{Method}}} & \multicolumn{3}{c}{\textbf{Accuracy (\%)}} & \multicolumn{1}{c}{\textbf{Privacy (\%)}} & \multicolumn{1}{c}{\multirow{2}{*}{\shortstack[c]{\textbf{Avg.} \\ \textbf{Gap $\downarrow$}}}} & \multicolumn{2}{c}{\textbf{Cost (Bytes \& FLOPs)}} \\
\cmidrule(r){3-5} \cmidrule(r){6-6} \cmidrule(r){8-9}  
&& \multicolumn{1}{c}{\textbf{Retain (\textcolor{blue}{$\Delta \downarrow$})}} & \multicolumn{1}{c}{\textbf{Forget (\textcolor{blue}{$\Delta \downarrow$})}} & \multicolumn{1}{c}{\textbf{Test (\textcolor{blue}{$\Delta \downarrow$})}} & \multicolumn{1}{c}{\textbf{MIA (\textcolor{blue}{$\Delta \downarrow$})}} & & \multicolumn{1}{c}{\textbf{Comm. $\downarrow$}} & \multicolumn{1}{c}{\textbf{Comp. $\downarrow$}} \\
\midrule
% ########### Caltech-101, ViT ############################
\multirow{6}{*}{\shortstack[c]{Caltech-101 \\ ViT}} & Retrain & 98.42\tiny{$\pm$ 0.45} \normalsize{(\textcolor{blue}{0.00}}) & 43.80\tiny{$\pm$ 2.15} \normalsize{(\textcolor{blue}{0.00}}) & 45.56\tiny{$\pm$ 0.61} \normalsize{(\textcolor{blue}{0.00}}) & 52.20\tiny{$\pm$ 1.79} \normalsize{(\textcolor{blue}{0.00}}) & \normalsize{\textcolor{blue}{0.00}} & 1.77$e^{12}$ & 9.88$e^{21}$ \\
\cline{2-9}%\noalign{\vspace{0.25ex}}
 & FT & 99.80\tiny{$\pm$ 0.06} \normalsize{(\textcolor{blue}{1.38}}) & 80.04\tiny{$\pm$ 2.18} \normalsize{(\textcolor{blue}{36.24}}) & 47.75\tiny{$\pm$ 0.08} \normalsize{(\textcolor{blue}{2.19}}) & 67.87\tiny{$\pm$ 1.73} \normalsize{(\textcolor{blue}{15.67}}) & \normalsize{\textcolor{blue}{13.87}} & 1.48$e^{12}$ & 8.24$e^{21}$ \\
 & PGD & 99.72\tiny{$\pm$ 0.08} \normalsize{(\textcolor{blue}{1.30}}) & 75.94\tiny{$\pm$ 2.54} \normalsize{(\textcolor{blue}{32.14}}) & 47.25\tiny{$\pm$ 0.50} \normalsize{(\textcolor{blue}{1.69}}) & 65.73\tiny{$\pm$ 0.82} \normalsize{(\textcolor{blue}{13.53}}) & \normalsize{\textcolor{blue}{12.16}} & 1.48$e^{12}$ & 8.25$e^{21}$ \\
 & MoDE & 97.87\tiny{$\pm$ 1.19} \normalsize{(\textcolor{blue}{1.19}}) & 47.69\tiny{$\pm$ 2.53} \normalsize{(\textcolor{blue}{6.39}}) & 46.81\tiny{$\pm$ 0.67} \normalsize{(\textcolor{blue}{1.71}}) & 51.70\tiny{$\pm$ 2.35} \normalsize{(\textcolor{blue}{3.00}}) & \normalsize{\textcolor{blue}{3.07}} & 1.64$e^{12}$ & 9.18$e^{21}$ \\
 & FCU & 99.34\tiny{$\pm$ 0.02} \normalsize{(\textcolor{blue}{0.92}}) & 52.01\tiny{$\pm$ 0.57} \normalsize{(\textcolor{blue}{8.21}}) & 45.53\tiny{$\pm$ 0.57} \normalsize{(\textcolor{blue}{0.03}}) & 54.80\tiny{$\pm$ 1.26} \normalsize{(\textcolor{blue}{2.60}}) & \normalsize{\textcolor{blue}{2.94}} & 1.48$e^{12}$ & 9.44$e^{21}$ \\
\cline{2-9}%%\noalign{\vspace{0.25ex}}
\rowcolor{lightgray!50} \cellcolor{white} & NoT (Ours) & 97.71\tiny{$\pm$ 0.41} \normalsize{(\textcolor{blue}{0.71}}) & 45.20\tiny{$\pm$ 2.40} \normalsize{(\textcolor{blue}{1.40}}) & 45.93\tiny{$\pm$ 0.54} \normalsize{(\textcolor{blue}{0.37}}) & 51.23\tiny{$\pm$ 1.49} \normalsize{(\textcolor{blue}{0.97}}) & \normalsize{\textcolor{red}{0.86}} & 8.90$e^{11}$ & 4.95$e^{21}$ \\
\bottomrule
\end{tabular}
}
\label{table_dif_dataset_arch_noniid}
\end{table*}

\paragraph{Evaluation Metrics.} Following prior works \cite{jia2023model, fan2024salun}, we assess NoT’s effectiveness and efficiency with the following metrics: \ding{182} \textit{Retain}, \ding{183} \textit{Forget}, and \ding{184} \textit{Test Accuracies (\%)}, measuring the model's performance on $D_r$, $D_u$, and test data, respectively. \ding{185} \textit{MIA} (Membership Inference Attack \cite{hu2022membership}) (\%): indicates the extent to which $D_u$ remains recognizable in the model \ding{186} \textit{Communication} and \ding{187} \textit{Computation Costs}: quantify the total communication (in bytes) and FLOPs needed for unlearning and recovery. Retain and test accuracies measure how well the model retains knowledge about $D_r$, while forget accuracy and MIA evaluate how effectively the model forgets $D_u$. We calculate delta (\textcolor{blue}{$\Delta$}) values and the \textcolor{blue}{average gap} compared to Retrain, with lower values indicating better performance. Achieving a balance across all metrics is essential to demonstrate NoT’s effectiveness and efficiency.

\begin{table*}[t!]
\centering
\caption{\textbf{Class-wise federated unlearning.} Performance comparison of NoT with baselines in a 10-client setup, where samples of class 0 are to be forgotten from all clients. The data distribution among clients is \textbf{IID}. The best average gap is marked in \textcolor{red}{red}.}
\vspace{-3mm}
\resizebox{17.5cm}{!}{
\begin{tabular}{llllllllll}
\toprule
\multirow{2}{*}{\shortstack[c]{\textbf{Dataset} \\ \textbf{ \& Model}}} & \multirow{2}{*}{\shortstack[c]{\textbf{Method}}} & \multicolumn{3}{c}{\textbf{Accuracy (\%)}} & \multicolumn{1}{c}{\textbf{Privacy (\%)}} & \multicolumn{1}{c}{\multirow{2}{*}{\shortstack[c]{\textbf{Avg.} \\ \textbf{Gap $\downarrow$}}}} & \multicolumn{2}{c}{\textbf{Cost (Bytes \& FLOPs)}} \\
\cmidrule(r){3-5} \cmidrule(r){6-6} \cmidrule(r){8-9}  
&& \multicolumn{1}{c}{\textbf{Retain (\textcolor{blue}{$\Delta \downarrow$})}} & \multicolumn{1}{c}{\textbf{Forget (\textcolor{blue}{$\Delta \downarrow$})}} & \multicolumn{1}{c}{\textbf{Test (\textcolor{blue}{$\Delta \downarrow$})}} & \multicolumn{1}{c}{\textbf{MIA (\textcolor{blue}{$\Delta \downarrow$})}} & & \multicolumn{1}{c}{\textbf{Comm. $\downarrow$}} & \multicolumn{1}{c}{\textbf{Comp. $\downarrow$}} \\
\midrule
% ########### Caltech-101, ViT ############################
\multirow{4}{*}{\shortstack[c]{Caltech-101 \\ ViT}} & Retrain & 99.32\tiny{$\pm$ 0.11} \normalsize{(\textcolor{blue}{0.00}}) & 0.00\tiny{$\pm$ 0.00} \normalsize{(\textcolor{blue}{0.00}}) & 46.85\tiny{$\pm$ 0.35} \normalsize{(\textcolor{blue}{0.00}}) & 92.07\tiny{$\pm$ 1.47} \normalsize{(\textcolor{blue}{0.00}}) & \normalsize{\textcolor{blue}{0.00}} & 1.48$e^{12}$ & 8.24$e^{21}$ \\
\cline{2-9}%\noalign{\vspace{0.25ex}}
& FT & 99.93\tiny{$\pm$ 0.03} \normalsize{(\textcolor{blue}{0.61}}) & 77.39\tiny{$\pm$ 2.37} \normalsize{(\textcolor{blue}{77.39}}) & 46.69\tiny{$\pm$ 0.24} \normalsize{(\textcolor{blue}{0.16}}) & 71.53\tiny{$\pm$ 0.74} \normalsize{(\textcolor{blue}{20.54}}) & \normalsize{\textcolor{blue}{24.68}} & 1.19$e^{12}$ & 6.60$e^{21}$ \\
& MoDE & 98.74\tiny{$\pm$ 0.89} \normalsize{(\textcolor{blue}{0.73}}) & 0.00\tiny{$\pm$ 0.00} \normalsize{(\textcolor{blue}{0.00}}) & 46.97\tiny{$\pm$ 0.42} \normalsize{(\textcolor{blue}{0.18}}) & 84.70\tiny{$\pm$ 6.16} \normalsize{(\textcolor{blue}{8.20}}) & \normalsize{\textcolor{blue}{2.28}} & 1.35$e^{12}$ & 7.54$e^{21}$ \\
\cline{2-9}
\rowcolor{lightgray!50} \cellcolor{white} & NoT (Ours) & 99.36\tiny{$\pm$ 0.10} \normalsize{(\textcolor{blue}{0.04}}) & 0.70\tiny{$\pm$ 0.49} \normalsize{(\textcolor{blue}{0.70}}) & 46.54\tiny{$\pm$ 0.17} \normalsize{(\textcolor{blue}{0.31}}) & 84.83\tiny{$\pm$ 1.37} \normalsize{(\textcolor{blue}{7.24}}) & \normalsize{\textcolor{red}{2.07}} & 8.90$e^{11}$ & 4.95$e^{21}$ \\
\bottomrule
\end{tabular}
}
\label{table_dif_dataset_arch_classforgetting}
\end{table*}

\begin{table*}[t!]
\centering
\caption{\textbf{Instance-wise federated unlearning.} Performance comparison of NoT with baselines in a 10-client setup, where 10\% of each client's data is randomly selected to be forgotten. The data distribution among clients is \textbf{IID}. The best average gap is marked in \textcolor{red}{red}.}
\vspace{-3mm}
\resizebox{17.5cm}{!}{
\begin{tabular}{llllllllll}
\toprule
\multirow{2}{*}{\shortstack[c]{\textbf{Dataset} \\ \textbf{ \& Model}}} & \multirow{2}{*}{\shortstack[c]{\textbf{Method}}} & \multicolumn{3}{c}{\textbf{Accuracy (\%)}} & \multicolumn{1}{c}{\textbf{Privacy (\%)}} & \multicolumn{1}{c}{\multirow{2}{*}{\shortstack[c]{\textbf{Avg.} \\ \textbf{Gap $\downarrow$}}}} & \multicolumn{2}{c}{\textbf{Cost (Bytes \& FLOPs)}} \\
\cmidrule(r){3-5} \cmidrule(r){6-6} \cmidrule(r){8-9}  
&& \multicolumn{1}{c}{\textbf{Retain (\textcolor{blue}{$\Delta \downarrow$})}} & \multicolumn{1}{c}{\textbf{Forget (\textcolor{blue}{$\Delta \downarrow$})}} & \multicolumn{1}{c}{\textbf{Test (\textcolor{blue}{$\Delta \downarrow$})}} & \multicolumn{1}{c}{\textbf{MIA (\textcolor{blue}{$\Delta \downarrow$})}} & & \multicolumn{1}{c}{\textbf{Comm. $\downarrow$}} & \multicolumn{1}{c}{\textbf{Comp. $\downarrow$}} \\
\midrule
% ########### Caltech-101, ViT ############################
\multirow{4}{*}{\shortstack[c]{Caltech-101 \\ ViT}} & Retrain & 99.71\tiny{$\pm$ 0.01} \normalsize{(\textcolor{blue}{0.00}}) & 47.58\tiny{$\pm$ 1.09} \normalsize{(\textcolor{blue}{0.00}}) & 47.98\tiny{$\pm$ 0.23} \normalsize{(\textcolor{blue}{0.00}}) & 49.70\tiny{$\pm$ 1.44} \normalsize{(\textcolor{blue}{0.00}}) & \normalsize{\textcolor{blue}{0.00}} & 1.77$e^{12}$ & 9.88$e^{21}$ \\
\cline{2-9}
& FT & 99.96\tiny{$\pm$ 0.02} \normalsize{(\textcolor{blue}{0.25}}) & 96.61\tiny{$\pm$ 0.91} \normalsize{(\textcolor{blue}{49.03}}) & 48.91\tiny{$\pm$ 0.19} \normalsize{(\textcolor{blue}{0.93}}) & 75.73\tiny{$\pm$ 1.39} \normalsize{(\textcolor{blue}{26.03}}) & \normalsize{\textcolor{blue}{19.06}} & 1.48$e^{12}$ & 8.24$e^{21}$ \\
& MoDE & 97.48\tiny{$\pm$ 1.58} \normalsize{(\textcolor{blue}{1.02}}) & 51.94\tiny{$\pm$ 5.86} \normalsize{(\textcolor{blue}{2.49}}) & 48.16\tiny{$\pm$ 0.40} \normalsize{(\textcolor{blue}{0.35}}) & 51.33\tiny{$\pm$ 5.27} \normalsize{(\textcolor{blue}{0.17}}) & \normalsize{\textcolor{blue}{1.01}} & 1.05$e^{12}$ & 5.90$e^{21}$ \\
\cline{2-9}%%\noalign{\vspace{0.25ex}}
\rowcolor{lightgray!50} \cellcolor{white} & NoT (Ours) & 99.85\tiny{$\pm$ 0.03} \normalsize{(\textcolor{blue}{0.14}}) & 50.25\tiny{$\pm$ 1.24} \normalsize{(\textcolor{blue}{2.67}}) & 47.97\tiny{$\pm$ 0.23} \normalsize{(\textcolor{blue}{0.01}}) & 50.73\tiny{$\pm$ 1.75} \normalsize{(\textcolor{blue}{1.03}}) & \normalsize{\textcolor{red}{0.96}} & 9.19$e^{11}$ & 5.50$e^{21}$ \\

\bottomrule
\end{tabular}
}
\label{table_dif_dataset_arch_instanceforgetting}
\end{table*}

\subsection{Results}
\label{sec_results}

\begin{figure}[t!]
\vspace{-10pt}
\centering
\includegraphics[width=\linewidth]{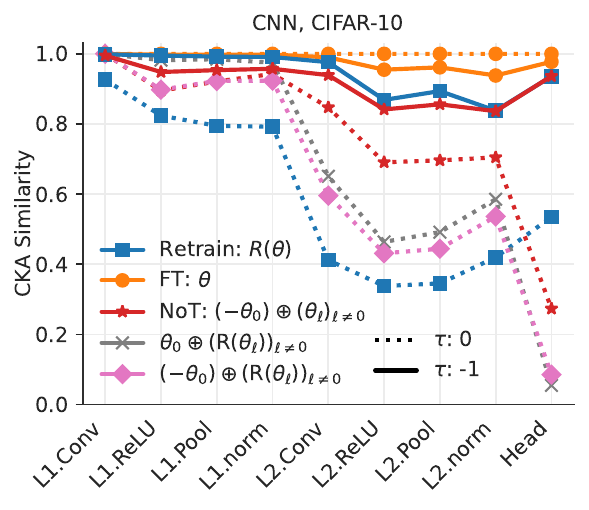}
\vspace*{-7mm}
\caption{\textbf{CKA of layer activations} for various models compared to the original model ($\theta=\theta^*$) before fine-tuning (FT@$\tau$:0). The first and last communication rounds are denoted by $\tau$:0 and -1. $(-\theta_0)\oplus \left( \mathrm{R}(\theta_\ell)\right)_{\ell\neq 0}$ denotes a model with negated first-layer weights ($\ell$:0) and randomized rest $R(\cdot)$.
}
\label{fig:cka_cnn}
\end{figure}

\paragraph{Federated Unlearning: Comparing NoT with Baselines.} Table~\ref{table_dif_dataset_arch} shows that NoT performance outperforms other baselines in client-wise FU across architectures and datasets, achieving a low average gap and competitive communication and computation costs. While FCU shows lower average gap in some cases, such as with ResNet-18, its performance varies across architectures, unlike NoT’s consistent balance between forgetting effectiveness (Forget Accuracy, MIA), model fidelity (Retain, Test Accuracy), and competitively low costs. Although Table~\ref{table_dif_dataset_arch} does not report storage, FedEraser and FUKD require considerable storage that grows with communication rounds, making them impractical for ResNet-18 and ViT due to storage constraints. In contrast, NoT requires no additional storage. FedEraser and FUKD also perform worse than Retrain because, in our setting, the target client joins the federation from the first round, and as those baselines need to refer back to the first checkpoint, this makes full retraining more efficient. Table \ref{table_dif_dataset_arch_noniid} presents results for client-wise FU in a non-IID setting, where data across clients follow a Dirichlet distribution with $\beta=0.1$. Furthermore, results for class-wise and instance-wise forgetting are provided in Tables~\ref{table_dif_dataset_arch_classforgetting} and \ref{table_dif_dataset_arch_instanceforgetting}. These results highlight NoT’s superior performance across different settings, demonstrating its effectiveness and efficiency compared to baseline methods.

\begin{table*}[t!]
\centering
\caption{\textbf{Backdoor attack in FL.} Performance comparison of NoT with baselines in a 10-client setup, where one client with 80\% backdoored samples requests unlearning. The best average gap is marked in \textcolor{red}{red}.}
\vspace{-3mm}
\resizebox{17.5cm}{!}{
\begin{tabular}{llllllllll}
\toprule
\multirow{2}{*}{\shortstack[c]{\textbf{Dataset} \\ \textbf{ \& Model}}} & \multirow{2}{*}{\shortstack[c]{\textbf{Method}}} & \multicolumn{3}{c}{\textbf{Accuracy (\%)}} & \multicolumn{1}{c}{\textbf{Privacy (\%)}} & \multicolumn{1}{c}{\multirow{2}{*}{\shortstack[c]{\textbf{Avg.} \\ \textbf{Gap $\downarrow$}}}} & \multicolumn{2}{c}{\textbf{Cost (Bytes \& FLOPs)}} \\
\cmidrule(r){3-5} \cmidrule(r){6-6} \cmidrule(r){8-9}  
&& \multicolumn{1}{c}{\textbf{Retain (\textcolor{blue}{$\Delta \downarrow$})}} & \multicolumn{1}{c}{\textbf{Forget (\textcolor{blue}{$\Delta \downarrow$})}} & \multicolumn{1}{c}{\textbf{Test (\textcolor{blue}{$\Delta \downarrow$})}} & \multicolumn{1}{c}{\textbf{MIA (\textcolor{blue}{$\Delta \downarrow$})}} & & \multicolumn{1}{c}{\textbf{Comm. $\downarrow$}} & \multicolumn{1}{c}{\textbf{Comp. $\downarrow$}} \\
\midrule
% ########### CIFAR-10, ViT ############################
\multirow{6}{*}{\shortstack[c]{Caltech-101 \\ ViT}} & Retrain & 99.73\tiny{$\pm$ 0.08} \normalsize{(\textcolor{blue}{0.00}}) & 15.74\tiny{$\pm$ 0.34} \normalsize{(\textcolor{blue}{0.00}}) & 48.31\tiny{$\pm$ 0.24} \normalsize{(\textcolor{blue}{0.00}}) & 73.17\tiny{$\pm$ 1.55} \normalsize{(\textcolor{blue}{0.00}}) & \normalsize{\textcolor{blue}{0.00}} & 1.76$e^{12}$ & 1.37$e^{21}$ \\
\cline{2-9}
& FT & 99.93\tiny{$\pm$ 0.02} \normalsize{(\textcolor{blue}{0.20}}) & 23.24\tiny{$\pm$ 0.96} \normalsize{(\textcolor{blue}{7.50}}) & 47.85\tiny{$\pm$ 0.07} \normalsize{(\textcolor{blue}{0.46}}) & 60.60\tiny{$\pm$ 0.57} \normalsize{(\textcolor{blue}{12.57}}) & \normalsize{\textcolor{blue}{5.18}} & 1.74$e^{12}$ & 1.36$e^{21}$ \\
& PGD & 99.41\tiny{$\pm$ 0.37} \normalsize{(\textcolor{blue}{0.32}}) & 62.82\tiny{$\pm$ 29.77} \normalsize{(\textcolor{blue}{47.08}}) & 45.95\tiny{$\pm$ 1.28} \normalsize{(\textcolor{blue}{2.36}}) & 70.07\tiny{$\pm$ 6.27} \normalsize{(\textcolor{blue}{3.10}}) & \normalsize{\textcolor{blue}{13.21}} & 1.56$e^{12}$ & 1.22$e^{21}$ \\
& MoDE & 98.70\tiny{$\pm$ 0.92} \normalsize{(\textcolor{blue}{1.03}}) & 30.81\tiny{$\pm$ 13.87} \normalsize{(\textcolor{blue}{15.07}}) & 45.54\tiny{$\pm$ 2.04} \normalsize{(\textcolor{blue}{2.77}}) & 63.55\tiny{$\pm$ 10.35} \normalsize{(\textcolor{blue}{9.62}}) & \normalsize{\textcolor{blue}{7.12}} & 1.76$e^{12}$ & 1.38$e^{21}$ \\
& FCU & 99.51\tiny{$\pm$ 0.08} \normalsize{(\textcolor{blue}{0.22}}) & 15.50\tiny{$\pm$ 0.53} \normalsize{(\textcolor{blue}{0.24}}) & 48.46\tiny{$\pm$ 0.21} \normalsize{(\textcolor{blue}{0.15}}) & 73.10\tiny{$\pm$ 1.13} \normalsize{(\textcolor{blue}{0.07}}) & \normalsize{\textcolor{blue}{0.17}} & 1.06$e^{12}$ & 9.98$e^{20}$ \\
\cline{2-9}%\noalign{\vspace{0.25ex}}
\rowcolor{lightgray!50} \cellcolor{white} & NoT (Ours) & 99.60\tiny{$\pm$ 0.02} \normalsize{(\textcolor{blue}{0.13}}) & 15.62\tiny{$\pm$ 0.17} \normalsize{(\textcolor{blue}{0.12}}) & 47.95\tiny{$\pm$ 0.07} \normalsize{(\textcolor{blue}{0.36}}) & 73.17\tiny{$\pm$ 1.38} \normalsize{(\textcolor{blue}{0.00}}) & \normalsize{\textcolor{red}{0.15}} & 1.43$e^{12}$ & 1.11$e^{21}$ \\
\bottomrule
\end{tabular}
}
\label{table_backdoor_attack}
\end{table*}

\paragraph{Backdoor Attack.} We further evaluate unlearning effectiveness by removing the influence of backdoor triggers \citep{gu2019badnets}. Following prior works \citep{wu2022federated, halimi2022federated, zhao2023federated},  a fraction of the target client images is poisoned with a 3×3 pixel pattern trigger (using the Adversarial Robustness Toolbox \citep{nicolae2019adversarial}) and relabeled as ‘airplane’ (excluding pre-existing airplane labels). ViT and Caltech-101 are used in this experiment. A global model is then trained from scratch using all clients, including the poisoned one, until convergence. Due to the target client, the global FL model becomes vulnerable to the backdoor trigger. A successful unlearning method should eliminate the influence of the poisoned client. Table~\ref{table_backdoor_attack} shows that NoT achieves the lowest average gap with minimal communication and computation costs, outperforming baselines in removing backdoor influence.

\paragraph{Negation is LWOP and Breaks Co-Adaptation.} We assess the impact of negation through LWOP and the disruption of co-adaptation by comparing CKA similarities \citep{pmlr-v97-kornblith19a} across three models—FT, Retrain, and NoT—analyzed before ($\tau$:0) and after ($\tau$:-1) fine-tuning (see Figure~\ref{fig:cka_cnn}). The FT model at $\tau$:0 serves as the reference. Initially comparing NoT to FT at $\tau$:0; we observe a high CKA similarity at the negated layer (L1.conv), which is consistent with LWO of NoT model at $\tau$:0, hence LWOP of negation. However, as layer depth increases, divergence grows, indicating a breakdown in co-adaptation. Post fine-tuning, both Retrain and NoT at $\tau$:-1 exhibit similar CKA, diverging from FT but closely aligning with each other, underscoring the resemblance of NoT to Retrain. CKA comparisons between models obtained by randomizing (using a consistent seed) all layers except the first in both FT and NoT at $\tau$:0 are also performed: the observed high CKA similarity further supports LWO. See Appendix~\ref{appendix:CKA} for extra CKA comparisons. Direct validation of LWO (Appendix~\ref{appendix:LWO}) involves freezing and negating the first layer, while randomizing the others, then fine-tuning; this yields a model with test accuracy comparable to training from scratch, consistent with our predictions. Finally, the dimensionality reduction of the gradient descent search space is confirmed using PCA (Appendix~\ref{appendix:dim_reduction}). 

\begin{table*}[t!]
\centering
\caption{\textbf{Centralized unlearning.} Performance comparison of NoT with baselines, with the best average gap marked in \textcolor{red}{red}. The forget data is randomly selected and constitutes 10\% of the train data.}
\vspace{-3mm}
\resizebox{17.5cm}{!}{
\begin{tabular}{llllllll}
\toprule
\multirow{2}{*}{\shortstack[c]{\textbf{Dataset} \\ \textbf{ \& Model}}} & \multirow{2}{*}{\shortstack[c]{\textbf{Method}}} & \multicolumn{3}{c}{\textbf{Accuracy (\%)}} & \multicolumn{1}{c}{\textbf{Privacy (\%)}} & \multicolumn{1}{c}{\multirow{2}{*}{\shortstack[c]{\textbf{Avg.} \\ \textbf{Gap $\downarrow$}}}} & \multirow{2}{*}{\shortstack[c]{\textbf{Comp. Cost} \\ \textbf{(FLOPs) $\downarrow$}}} \\
\cmidrule(r){3-5} \cmidrule(r){6-6} %\cmidrule(r){8-8}  
&& \multicolumn{1}{c}{\textbf{Retain (\textcolor{blue}{$\Delta \downarrow$})}} & \multicolumn{1}{c}{\textbf{Forget (\textcolor{blue}{$\Delta \downarrow$})}} & \multicolumn{1}{c}{\textbf{Test (\textcolor{blue}{$\Delta \downarrow$})}} & \multicolumn{1}{c}{\textbf{MIA (\textcolor{blue}{$\Delta \downarrow$})}} & & \\
\midrule
\multirow{9}{*}{\shortstack[c]{CIFAR-10 \\ ResNet-18}} & Retrain & 100.00\tiny{$\pm$ 0.00} \normalsize{(\textcolor{blue}{0.00}}) & 91.72\tiny{$\pm$ 0.14} \normalsize{(\textcolor{blue}{0.00}}) & 92.03\tiny{$\pm$ 0.12} \normalsize{(\textcolor{blue}{0.00}}) & 49.32\tiny{$\pm$ 0.23} \normalsize{(\textcolor{blue}{0.00}}) & \normalsize{\textcolor{blue}{0.00}} & 9.58$e^{15}$ \\ 
\cline{2-8}
& FT & 98.63\tiny{$\pm$ 0.24} \normalsize{(\textcolor{blue}{1.37}}) & 95.92\tiny{$\pm$ 0.53} \normalsize{(\textcolor{blue}{4.20}}) & 89.84\tiny{$\pm$ 0.27} \normalsize{(\textcolor{blue}{2.19}}) & 53.49\tiny{$\pm$ 0.45} \normalsize{(\textcolor{blue}{4.17}}) & \normalsize{\textcolor{blue}{2.98}} & 7.31$e^{14}$ \\
& RandL & 94.34\tiny{$\pm$ 0.32} \normalsize{(\textcolor{blue}{5.66}}) & 88.85\tiny{$\pm$ 0.46} \normalsize{(\textcolor{blue}{2.87}}) & 89.53\tiny{$\pm$ 0.21} \normalsize{(\textcolor{blue}{2.50}}) & 54.60\tiny{$\pm$ 0.43} \normalsize{(\textcolor{blue}{5.28}}) & \normalsize{\textcolor{blue}{4.07}} & 2.07$e^{15}$ \\
& GA \citelinktext{thudi2022unrolling}{\small{(EuroS\&P, 2022)}} 
 & 98.76\tiny{$\pm$ 0.07} \normalsize{(\textcolor{blue}{1.24}}) & 93.39\tiny{$\pm$ 0.19} \normalsize{(\textcolor{blue}{1.67}}) & 89.73\tiny{$\pm$ 0.14} \normalsize{(\textcolor{blue}{2.30}}) & 52.21\tiny{$\pm$ 0.24} \normalsize{(\textcolor{blue}{2.89}}) & \normalsize{\textcolor{blue}{2.02}} & 2.09$e^{15}$  \\
& BadT \citelinktext{chundawat2023can}{\small{(AAAI, 2023)}} & 99.89\tiny{$\pm$ 0.03} \normalsize{(\textcolor{blue}{0.11}}) & 98.52\tiny{$\pm$ 0.30} \normalsize{(\textcolor{blue}{6.80}}) & 90.14\tiny{$\pm$ 0.17} \normalsize{(\textcolor{blue}{1.89}}) & 48.82\tiny{$\pm$ 0.12} \normalsize{(\textcolor{blue}{0.50}}) & \normalsize{\textcolor{blue}{2.32}} & 5.69$e^{14}$ \\
& $\ell_1$-sparse \citelinktext{jia2023model}{\small{(NIPS, 2023)}} & 99.98\tiny{$\pm$ 0.00} \normalsize{(\textcolor{blue}{0.02}}) & 91.94\tiny{$\pm$ 0.03} \normalsize{(\textcolor{blue}{0.22}}) & 92.15\tiny{$\pm$ 0.21} \normalsize{(\textcolor{blue}{0.12}}) & 49.63\tiny{$\pm$ 0.07} \normalsize{(\textcolor{blue}{0.31}}) & \normalsize{\textcolor{red}{0.19}} & 2.98$e^{15}$   \\
& SSD \citelinktext{foster2024fast}{\small{(AAAI, 2024)}} & 98.82\tiny{$\pm$ 0.85} \normalsize{(\textcolor{blue}{1.18}}) & 99.05\tiny{$\pm$ 0.72} \normalsize{(\textcolor{blue}{7.33}}) & 89.70\tiny{$\pm$ 1.40} \normalsize{(\textcolor{blue}{2.33}}) & 58.25\tiny{$\pm$ 1.51} \normalsize{(\textcolor{blue}{8.93}}) & \normalsize{\textcolor{blue}{4.94}} & 7.46$e^{13}$ \\
& SalUn \citelinktext{fan2024salun}{\small{(ICLR, 2024)}}& 98.29\tiny{$\pm$ 0.19} \normalsize{(\textcolor{blue}{1.71}}) & 94.01\tiny{$\pm$ 0.45} \normalsize{(\textcolor{blue}{2.29}}) & 90.78\tiny{$\pm$ 0.13} \normalsize{(\textcolor{blue}{1.25}}) & 48.18\tiny{$\pm$ 1.34} \normalsize{(\textcolor{blue}{1.14}}) & \normalsize{\textcolor{blue}{1.64}} & 1.22$e^{15}$  \\
\cline{2-8}
\rowcolor{lightgray!50} \cellcolor{white} & \textbf{NoT (Ours)} & 99.69\tiny{$\pm$ 0.05} \normalsize{(\textcolor{blue}{0.31}}) & 92.41\tiny{$\pm$ 0.12} \normalsize{(\textcolor{blue}{0.69}}) & 92.18\tiny{$\pm$ 0.07} \normalsize{(\textcolor{blue}{0.15}}) & 49.52\tiny{$\pm$ 0.11} \normalsize{(\textcolor{blue}{0.20}}) & \normalsize{\textcolor{blue}{0.34}} & 2.19$e^{15}$  \\
\bottomrule
\end{tabular}
}
\label{tbl_centralized_results}
\end{table*}

\begin{figure*}[!t]
    \begin{minipage}[t]{0.31\textwidth}
     \centering
     \includegraphics[trim={0.25cm 0.19cm 0.3cm 0.17cm},clip,width=\linewidth]{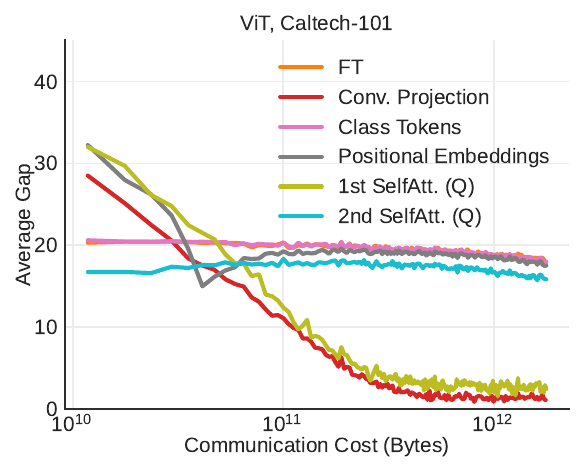}
     \vspace*{-6mm}
    \caption{\textbf{Effect of negating different ViT layers.} Negating the convolution projection layer resulted in the best unlearning performance.}
    \label{fig_negate_diff_layers}
   \end{minipage} \hfill
    \begin{minipage}[t]{0.31\textwidth}
        \centering
        \includegraphics[trim={0.25cm 0.19cm 0.3cm 0.17cm},clip,width=\linewidth]{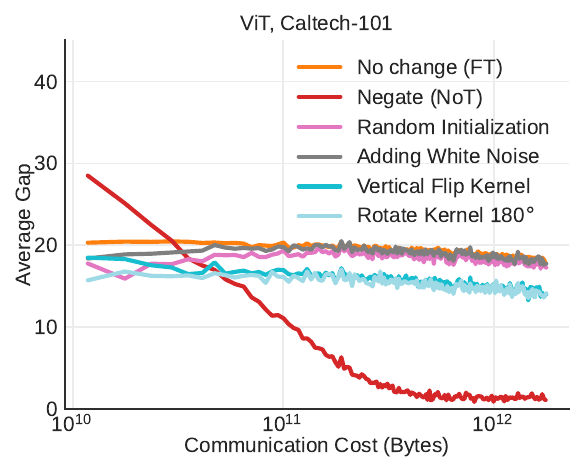}
        \vspace*{-6mm}
        \caption{\textbf{Effect of different perturbations on ViT convolution projection layer.} Applying weight negation is the best perturbation for inducing unlearning.}
    \label{fig_diff_perturbations}
    \end{minipage} \hfill
\begin{minipage}[t]{0.31\textwidth}
        \centering
        \includegraphics[trim={0.25cm 0.19cm 0.3cm 0.17cm},clip,width=\linewidth]{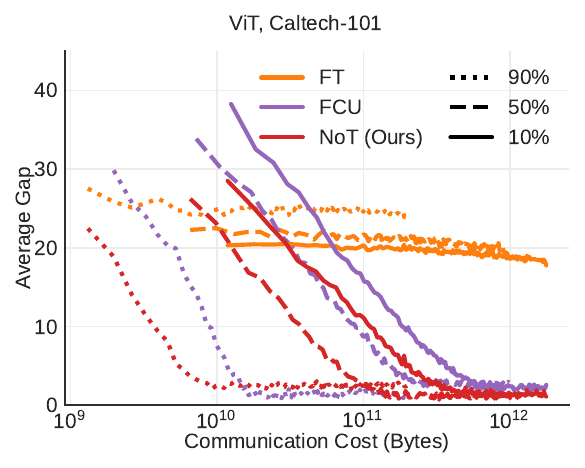}
        \vspace*{-6mm}
        \caption{\textbf{Effect of varying forget data ratios (\ie, target clients).} NoT attains the best unlearning with the least cost compared to baselines under different ratios.}
        \label{fig_diff_forgetdata50}
    \end{minipage} \hfill
\end{figure*}

\paragraph{Centralized Unlearning: Comparing NoT with Baselines.} To validate NoT in a centralized setting, we compare it to baselines: \ding{182} \textit{Retrain}; \ding{183} \textit{FT}; \ding{184} \textit{RandL} (Random Label): The forget set ($D_u$) is randomly relabeled, followed by fine-tuning the model on the updated dataset; \ding{185} \textit{GA} (Gradient Ascent) \citep{thudi2022unrolling}: Fine-tuning on $D_u$ by increasing loss; \ding{186} \textit{BadT} \citep{chundawat2023can}; \ding{187} \textit{$\ell_1$-sparse} \citep{jia2023model}; \ding{188} \textit{SSD} \citep{foster2024fast}; and \ding{189} \textit{SalUn} \citep{fan2024salun}. Table~\ref{tbl_centralized_results} confirms that NoT achieves the best balance of efficiency and accuracy without extra storage or access to $D_u$. While $\ell_1$-sparse\footnote{Our theory applies to $\ell_1$-sparse \citep{jia2023model} as its sparsification step is a perturbation satisfying C1 and C2.} achieves lower average gap, it requires more than 30\% more computations. These results affirm NoT’s broader applicability beyond federated settings.

\subsection{Ablation Study} 
\label{sec_ablation}
\paragraph{Negation of Different Layers.} Figure~\ref{fig_negate_diff_layers} shows that negating ViT’s convolution projection layer achieves the best performance. Negating the positional embedding layer initially raises average gap but eventually recovers, resembling natural forgetting (FT). Negating the query weights of the first self-attention head is effective but falls short of the projection layer. These findings suggest that early-layer negation is most effective, as the largest forgetting occurs in subsequent layers in the computational graph\footnote{With residual connections, the last layer may ‘follow’ the first layer.} due to a \textit{break in co-adaptation}. This is evident in Figure~\ref{fig_negate_diff_layers}, where negating the second self-attention head’s query weights results in significantly less unlearning than the first. 

\paragraph{Different Perturbations.}\label{sec_perturbation} Figure~\ref{fig_diff_perturbations} evaluates various perturbations on ViT’s convolution projection layer. Weight negation achieves the lowest average gap, while other perturbations (\eg, random initialization, adding noise) perform similarly to FT (no perturbation). Kernel flips or rotations are likely LWOP and provide slight improvement but remain less effective than negation.
% confirming its optimality for inducing unlearning among the tested methods. 
Future research could investigate alternative LWOP perturbations with the potential for further enhancing unlearning performance.

\paragraph{Different Forget Data ($D_u$) Ratios.} Figure~\ref{fig_diff_forgetdata50} examines NoT's performance across varying forget ratios (\ie, more target clients). Forget ratios of 10\%, 50\%, and 90\% represent 1, 5, and 9 clients (out of 10) requesting unlearning, respectively. For all ratios, a Retrain model is computed using the new corresponding retain data. Across all ratios, NoT closely matches Retrain's performance with minimal communication costs. This is due to the reduction in retain data as forget data increases. In contrast, FCU’s computation cost rises significantly as forget ratios increase, while NoT remains efficient due to not needing access to $D_u$.
\section{Limitations}
\label{sec_limitations}
While NoT effectively unlearns targeted data contributions, it also leads to some loss of useful knowledge, necessitating access to retain data for recovery—similar to other strong baseline methods. This access to retain data is essential for optimal model performance.  Moreover, NoT has been evaluated only on vision-centric classification tasks with smaller datasets (CIFAR-10/100, Caltech-101), and has not been tested on larger-scale datasets, such as Imagenet \citep{russakovsky2015imagenet}.
\section{Conclusion}
\label{sec_conclusion}
We proposed NoT, a novel and efficient federated unlearning (FU) algorithm that requires no additional storage or access to target data. NoT achieves unlearning by negating specific layer-wise parameters, disrupting co-adaptation across layers. We also proposed a theoretical framework, supported by empirical results, that demonstrates that layer-wise negation effectively induces unlearning, while the model’s good optimization state post-negation enables rapid recovery during fine-tuning. NoT has shown strong unlearning and recovery performance across diverse architectures and datasets, surpassing existing FU baselines and outperforming traditional machine unlearning methods.
{
    \small
    \bibliographystyle{ieeenat_fullname}
    \bibliography{main}

\begin{thebibliography}{71}
\providecommand{\natexlab}[1]{#1}
\providecommand{\url}[1]{\texttt{#1}}
\expandafter\ifx\csname urlstyle\endcsname\relax
  \providecommand{\doi}[1]{doi: #1}\else
  \providecommand{\doi}{doi: \begingroup \urlstyle{rm}\Url}\fi

\bibitem[Amari(1997)]{amari1997information}
SI Amari.
\newblock Information geometry.
\newblock \emph{Contemporary Mathematics}, 203:\penalty0 81--96, 1997.

\bibitem[Anderson(1963)]{anderson1963asymptotic}
Theodore~Wilbur Anderson.
\newblock Asymptotic theory for principal component analysis.
\newblock \emph{The Annals of Mathematical Statistics}, 34\penalty0 (1):\penalty0 122--148, 1963.

\bibitem[Bachlechner et~al.(2021)Bachlechner, Majumder, Mao, Cottrell, and McAuley]{bachlechner2021rezero}
Thomas Bachlechner, Bodhisattwa~Prasad Majumder, Henry Mao, Gary Cottrell, and Julian McAuley.
\newblock Rezero is all you need: Fast convergence at large depth.
\newblock In \emph{Uncertainty in Artificial Intelligence}, pages 1352--1361. PMLR, 2021.

\bibitem[Burkholz and Dubatovka(2019)]{burkholz2019initialization}
Rebekka Burkholz and Alina Dubatovka.
\newblock Initialization of relus for dynamical isometry.
\newblock \emph{Advances in Neural Information Processing Systems}, 32, 2019.

\bibitem[Chen et~al.(2018)Chen, Pennington, and Schoenholz]{chen2018dynamical}
Minmin Chen, Jeffrey Pennington, and Samuel Schoenholz.
\newblock Dynamical isometry and a mean field theory of rnns: Gating enables signal propagation in recurrent neural networks.
\newblock In \emph{International Conference on Machine Learning}, pages 873--882. PMLR, 2018.

\bibitem[Chundawat et~al.(2023)Chundawat, Tarun, Mandal, and Kankanhalli]{chundawat2023can}
Vikram~S Chundawat, Ayush~K Tarun, Murari Mandal, and Mohan Kankanhalli.
\newblock Can bad teaching induce forgetting? unlearning in deep networks using an incompetent teacher.
\newblock \emph{Proceedings of the AAAI Conference on Artificial Intelligence}, 37\penalty0 (6):\penalty0 7210--7217, 2023.

\bibitem[Cohn(2013)]{cohn2013measure}
Donald~L Cohn.
\newblock \emph{Measure theory}.
\newblock Springer, 2013.

\bibitem[Deng et~al.(2024)Deng, Luo, and Chen]{deng2024enable}
Zhipeng Deng, Luyang Luo, and Hao Chen.
\newblock Enable the right to be forgotten with federated client unlearning in medical imaging.
\newblock \emph{arXiv preprint arXiv:2407.02356}, 2024.

\bibitem[Dosovitskiy(2020)]{dosovitskiy2020image}
Alexey Dosovitskiy.
\newblock An image is worth 16x16 words: Transformers for image recognition at scale.
\newblock \emph{arXiv preprint arXiv:2010.11929}, 2020.

\bibitem[Erhan et~al.(2009)Erhan, Manzagol, Bengio, Bengio, and Vincent]{pmlr-v5-erhan09a}
Dumitru Erhan, Pierre-Antoine Manzagol, Yoshua Bengio, Samy Bengio, and Pascal Vincent.
\newblock The difficulty of training deep architectures and the effect of unsupervised pre-training.
\newblock In \emph{Proceedings of the Twelfth International Conference on Artificial Intelligence and Statistics}, pages 153--160, Hilton Clearwater Beach Resort, Clearwater Beach, Florida USA, 2009. PMLR.

\bibitem[Erhan et~al.(2010)Erhan, Courville, Bengio, and Vincent]{erhan2010does}
Dumitru Erhan, Aaron Courville, Yoshua Bengio, and Pascal Vincent.
\newblock Why does unsupervised pre-training help deep learning?
\newblock In \emph{Proceedings of the thirteenth international conference on artificial intelligence and statistics}, pages 201--208. JMLR Workshop and Conference Proceedings, 2010.

\bibitem[EU(2023)]{union2023complete}
European~Union EU.
\newblock Complete guide to general data protection regulation compliance.
\newblock \emph{https://gdpr.eu/}, 2023.

\bibitem[Fan et~al.(2024)Fan, Liu, Zhang, Wong, Wei, and Liu]{fan2024salun}
Chongyu Fan, Jiancheng Liu, Yihua Zhang, Eric Wong, Dennis Wei, and Sijia Liu.
\newblock Salun: Empowering machine unlearning via gradient-based weight saliency in both image classification and generation.
\newblock In \emph{The Twelfth International Conference on Learning Representations}, 2024.

\bibitem[Foster et~al.(2024)Foster, Schoepf, and Brintrup]{foster2024fast}
Jack Foster, Stefan Schoepf, and Alexandra Brintrup.
\newblock Fast machine unlearning without retraining through selective synaptic dampening.
\newblock \emph{Proceedings of the AAAI Conference on Artificial Intelligence}, 38\penalty0 (11):\penalty0 12043--12051, 2024.

\bibitem[Futuyma(2013)]{FUTUYMA201370}
D.J. Futuyma.
\newblock Coevolution.
\newblock In \emph{Brenner's Encyclopedia of Genetics (Second Edition)}, pages 70--75. Academic Press, San Diego, second edition edition, 2013.

\bibitem[Gao et~al.(2024)Gao, Ma, Wang, Sun, Li, Ji, Cheng, and Chen]{gao2024verifi}
Xiangshan Gao, Xingjun Ma, Jingyi Wang, Youcheng Sun, Bo Li, Shouling Ji, Peng Cheng, and Jiming Chen.
\newblock Verifi: Towards verifiable federated unlearning.
\newblock \emph{IEEE Transactions on Dependable and Secure Computing}, 2024.

\bibitem[Geman(1980)]{geman1980limit}
Stuart Geman.
\newblock A limit theorem for the norm of random matrices.
\newblock \emph{The Annals of Probability}, 8\penalty0 (2):\penalty0 252--261, 1980.

\bibitem[Golatkar et~al.(2020{\natexlab{a}})Golatkar, Achille, and Soatto]{golatkar2020eternal}
Aditya Golatkar, Alessandro Achille, and Stefano Soatto.
\newblock Eternal sunshine of the spotless net: Selective forgetting in deep networks.
\newblock In \emph{Proceedings of the IEEE/CVF Conference on Computer Vision and Pattern Recognition}, pages 9304--9312, 2020{\natexlab{a}}.

\bibitem[Golatkar et~al.(2020{\natexlab{b}})Golatkar, Achille, and Soatto]{golatkar2020forgetting}
Aditya Golatkar, Alessandro Achille, and Stefano Soatto.
\newblock Forgetting outside the box: Scrubbing deep networks of information accessible from input-output observations.
\newblock In \emph{Computer Vision--ECCV 2020: 16th European Conference, Glasgow, UK, August 23--28, 2020, Proceedings, Part XXIX 16}, pages 383--398. Springer, 2020{\natexlab{b}}.

\bibitem[Gu et~al.(2019)Gu, Dolan-Gavitt, and Garg]{gu2019badnets}
Tianyu Gu, Brendan Dolan-Gavitt, and Siddharth Garg.
\newblock Badnets: Identifying vulnerabilities in the machine learning model supply chain, 2019.

\bibitem[Halimi et~al.(2022)Halimi, Kadhe, Rawat, and Baracaldo]{halimi2022federated}
Anisa Halimi, Swanand Kadhe, Ambrish Rawat, and Nathalie Baracaldo.
\newblock Federated unlearning: How to efficiently erase a client in fl?
\newblock \emph{arXiv preprint arXiv:2207.05521}, 2022.

\bibitem[He et~al.(2016)He, Zhang, Ren, and Sun]{he2016deep}
Kaiming He, Xiangyu Zhang, Shaoqing Ren, and Jian Sun.
\newblock Deep residual learning for image recognition.
\newblock In \emph{Proceedings of the IEEE conference on computer vision and pattern recognition}, pages 770--778, 2016.

\bibitem[He et~al.(2019)He, Girshick, and Doll{\'a}r]{he2019rethinking}
Kaiming He, Ross Girshick, and Piotr Doll{\'a}r.
\newblock Rethinking imagenet pre-training.
\newblock In \emph{Proceedings of the IEEE/CVF international conference on computer vision}, pages 4918--4927, 2019.

\bibitem[Hinton et~al.(2012)Hinton, Srivastava, Krizhevsky, Sutskever, and Salakhutdinov]{hinton2012improving}
Geoffrey~E Hinton, Nitish Srivastava, Alex Krizhevsky, Ilya Sutskever, and Ruslan~R Salakhutdinov.
\newblock Improving neural networks by preventing co-adaptation of feature detectors.
\newblock \emph{arXiv preprint arXiv:1207.0580}, 2012.

\bibitem[Hochreiter(1998)]{hochreiter1998vanishing}
Sepp Hochreiter.
\newblock The vanishing gradient problem during learning recurrent neural nets and problem solutions.
\newblock \emph{International Journal of Uncertainty, Fuzziness and Knowledge-Based Systems}, 6\penalty0 (02):\penalty0 107--116, 1998.

\bibitem[Hu et~al.(2022)Hu, Salcic, Sun, Dobbie, Yu, and Zhang]{hu2022membership}
Hongsheng Hu, Zoran Salcic, Lichao Sun, Gillian Dobbie, Philip~S Yu, and Xuyun Zhang.
\newblock Membership inference attacks on machine learning: A survey.
\newblock \emph{ACM Computing Surveys (CSUR)}, 54\penalty0 (11s):\penalty0 1--37, 2022.

\bibitem[Jacot-Guillarmod(2022)]{jacot2022theory}
Arthur~Ulysse Jacot-Guillarmod.
\newblock Theory of deep learning: Neural tangent kernel and beyond.
\newblock Technical report, EPFL, 2022.

\bibitem[Jia et~al.(2023)Jia, Liu, Ram, Yao, Liu, Liu, Sharma, and Liu]{jia2023model}
Jinghan Jia, Jiancheng Liu, Parikshit Ram, Yuguang Yao, Gaowen Liu, Yang Liu, Pranay Sharma, and Sijia Liu.
\newblock Model sparsity can simplify machine unlearning.
\newblock In \emph{Thirty-seventh Conference on Neural Information Processing Systems}, 2023.

\bibitem[Karakida et~al.(2019)Karakida, Akaho, and Amari]{pmlr-v89-karakida19a}
Ryo Karakida, Shotaro Akaho, and Shun-ichi Amari.
\newblock Universal statistics of fisher information in deep neural networks: Mean field approach.
\newblock In \emph{Proceedings of the Twenty-Second International Conference on Artificial Intelligence and Statistics}, pages 1032--1041. PMLR, 2019.

\bibitem[Karoui(2007)]{101214009117906000000917}
Noureddine~El Karoui.
\newblock {Tracy–Widom limit for the largest eigenvalue of a large class of complex sample covariance matrices}.
\newblock \emph{The Annals of Probability}, 35\penalty0 (2):\penalty0 663 -- 714, 2007.

\bibitem[Karoui(2008)]{10.1214/07-AOS581}
Noureddine~El Karoui.
\newblock {Spectrum estimation for large dimensional covariance matrices using random matrix theory}.
\newblock \emph{The Annals of Statistics}, 36\penalty0 (6):\penalty0 2757 -- 2790, 2008.

\bibitem[Khalil et~al.(2024)Khalil, Estiri, Beitollahi, Asadi, Hemati, Li, Zhang, and Chen]{khalil2024dfml}
Yasser~H Khalil, Amir~H Estiri, Mahdi Beitollahi, Nader Asadi, Sobhan Hemati, Xu Li, Guojun Zhang, and Xi Chen.
\newblock Dfml: Decentralized federated mutual learning.
\newblock \emph{arXiv preprint arXiv:2402.01863}, 2024.

\bibitem[Kim and Ahn(2000)]{10.1007/978-1-4471-0509-1_20}
Daijin Kim and Sunha Ahn.
\newblock An optimal vq codebook design using the co-adaptation of learning and evolution.
\newblock In \emph{Soft Computing in Industrial Applications}, pages 225--239, London, 2000. Springer London.

\bibitem[Kornblith et~al.(2019)Kornblith, Norouzi, Lee, and Hinton]{pmlr-v97-kornblith19a}
Simon Kornblith, Mohammad Norouzi, Honglak Lee, and Geoffrey Hinton.
\newblock Similarity of neural network representations revisited.
\newblock In \emph{Proceedings of the 36th International Conference on Machine Learning}, pages 3519--3529. PMLR, 2019.

\bibitem[Krizhevsky et~al.(2014)Krizhevsky, Nair, Hinton, et~al.]{krizhevsky2014cifar}
Alex Krizhevsky, Vinod Nair, Geoffrey Hinton, et~al.
\newblock The cifar-10 dataset.
\newblock \emph{online: http://www. cs. toronto. edu/kriz/cifar. html}, 55\penalty0 (5):\penalty0 2, 2014.

\bibitem[Li et~al.(2022)Li, Andreeto, Ranzato, and Perona]{li_andreeto_ranzato_perona_2022}
Fei-Fei Li, Marco Andreeto, Marc'Aurelio Ranzato, and Pietro Perona.
\newblock Caltech 101, 2022.

\bibitem[Liao et~al.(2018)Liao, Drummond, Reid, and Carneiro]{liao2018approximate}
Zhibin Liao, Tom Drummond, Ian Reid, and Gustavo Carneiro.
\newblock Approximate fisher information matrix to characterize the training of deep neural networks.
\newblock \emph{IEEE transactions on pattern analysis and machine intelligence}, 42\penalty0 (1):\penalty0 15--26, 2018.

\bibitem[Liu et~al.(2021)Liu, Ma, Yang, Wang, and Liu]{liu2021federaser}
Gaoyang Liu, Xiaoqiang Ma, Yang Yang, Chen Wang, and Jiangchuan Liu.
\newblock Federaser: Enabling efficient client-level data removal from federated learning models.
\newblock In \emph{2021 IEEE/ACM 29th international symposium on quality of service (IWQOS)}, pages 1--10. IEEE, 2021.

\bibitem[Liu et~al.(2023)Liu, Jiang, Shen, Peng, Lam, and Yuan]{liu2023survey}
Ziyao Liu, Yu Jiang, Jiyuan Shen, Minyi Peng, Kwok-Yan Lam, and Xingliang Yuan.
\newblock A survey on federated unlearning: Challenges, methods, and future directions.
\newblock \emph{arXiv preprint arXiv:2310.20448}, 2023.

\bibitem[Liu et~al.(2024)Liu, Ye, Chen, and Lam]{liu2024threats}
Ziyao Liu, Huanyi Ye, Chen Chen, and Kwok-Yan Lam.
\newblock Threats, attacks, and defenses in machine unlearning: A survey.
\newblock \emph{arXiv preprint arXiv:2403.13682}, 2024.

\bibitem[Merel et~al.(2013)Merel, Fox, Jebara, and Paninski]{merel2013multi}
Josh~S Merel, Roy Fox, Tony Jebara, and Liam Paninski.
\newblock A multi-agent control framework for co-adaptation in brain-computer interfaces.
\newblock \emph{Advances in Neural Information Processing Systems}, 26, 2013.

\bibitem[Mingo and Speicher(2017)]{mingo2017free}
James~A Mingo and Roland Speicher.
\newblock \emph{Free probability and random matrices}.
\newblock Springer, 2017.

\bibitem[Nguyen et~al.(2021)Nguyen, Ding, Pathirana, Seneviratne, Li, and Poor]{nguyen2021federated}
Dinh~C Nguyen, Ming Ding, Pubudu~N Pathirana, Aruna Seneviratne, Jun Li, and H~Vincent Poor.
\newblock Federated learning for internet of things: A comprehensive survey.
\newblock \emph{IEEE Communications Surveys \& Tutorials}, 23\penalty0 (3):\penalty0 1622--1658, 2021.

\bibitem[Nicolae et~al.(2019)Nicolae, Sinn, Tran, Buesser, Rawat, Wistuba, Zantedeschi, Baracaldo, Chen, Ludwig, Molloy, and Edwards]{nicolae2019adversarial}
Maria-Irina Nicolae, Mathieu Sinn, Minh~Ngoc Tran, Beat Buesser, Ambrish Rawat, Martin Wistuba, Valentina Zantedeschi, Nathalie Baracaldo, Bryant Chen, Heiko Ludwig, Ian~M. Molloy, and Ben Edwards.
\newblock Adversarial robustness toolbox v1.0.0, 2019.

\bibitem[Noci et~al.(2022)Noci, Anagnostidis, Biggio, Orvieto, Singh, and Lucchi]{noci2022signal}
Lorenzo Noci, Sotiris Anagnostidis, Luca Biggio, Antonio Orvieto, Sidak~Pal Singh, and Aurelien Lucchi.
\newblock Signal propagation in transformers: Theoretical perspectives and the role of rank collapse.
\newblock \emph{Advances in Neural Information Processing Systems}, 35:\penalty0 27198--27211, 2022.

\bibitem[Pan et~al.(2024)Pan, Wang, Li, Zheng, Wang, Tang, and Zhao]{pan2024federated}
Zibin Pan, Zhichao Wang, Chi Li, Kaiyan Zheng, Boqi Wang, Xiaoying Tang, and Junhua Zhao.
\newblock Federated unlearning with gradient descent and conflict mitigation.
\newblock \emph{arXiv preprint arXiv:2412.20200}, 2024.

\bibitem[Pennington et~al.(2017)Pennington, Schoenholz, and Ganguli]{pennington2017resurrecting}
Jeffrey Pennington, Samuel Schoenholz, and Surya Ganguli.
\newblock Resurrecting the sigmoid in deep learning through dynamical isometry: theory and practice.
\newblock \emph{Advances in neural information processing systems}, 30, 2017.

\bibitem[Rattray et~al.(1998)Rattray, Saad, and Amari]{rattray1998natural}
Magnus Rattray, David Saad, and Shun-ichi Amari.
\newblock Natural gradient descent for on-line learning.
\newblock \emph{Physical review letters}, 81\penalty0 (24):\penalty0 5461, 1998.

\bibitem[Ren et~al.(2024)Ren, Yu, Peng, Tang, Li, Gao, Tan, Zhao, Li, Li, et~al.]{ren2024advances}
Chao Ren, Han Yu, Hongyi Peng, Xiaoli Tang, Anran Li, Yulan Gao, Alysa~Ziying Tan, Bo Zhao, Xiaoxiao Li, Zengxiang Li, et~al.
\newblock Advances and open challenges in federated learning with foundation models.
\newblock \emph{arXiv preprint arXiv:2404.15381}, 2024.

\bibitem[Revuz and Yor(2013)]{revuz2013continuous}
Daniel Revuz and Marc Yor.
\newblock \emph{Continuous martingales and Brownian motion}.
\newblock Springer Science \& Business Media, 2013.

\bibitem[Romandini et~al.(2024)Romandini, Mora, Mazzocca, Montanari, and Bellavista]{romandini2024federated}
Nicol{\`o} Romandini, Alessio Mora, Carlo Mazzocca, Rebecca Montanari, and Paolo Bellavista.
\newblock Federated unlearning: A survey on methods, design guidelines, and evaluation metrics.
\newblock \emph{arXiv preprint arXiv:2401.05146}, 2024.

\bibitem[Russakovsky et~al.(2015)Russakovsky, Deng, Su, Krause, Satheesh, Ma, Huang, Karpathy, Khosla, Bernstein, et~al.]{russakovsky2015imagenet}
Olga Russakovsky, Jia Deng, Hao Su, Jonathan Krause, Sanjeev Satheesh, Sean Ma, Zhiheng Huang, Andrej Karpathy, Aditya Khosla, Michael Bernstein, et~al.
\newblock Imagenet large scale visual recognition challenge.
\newblock \emph{International journal of computer vision}, 115:\penalty0 211--252, 2015.

\bibitem[Sato et~al.(2019)Sato, Ishikawa, Liu, and Tanaka]{sato2019breaking}
Ikuro Sato, Kohta Ishikawa, Guoqing Liu, and Masayuki Tanaka.
\newblock Breaking inter-layer co-adaptation by classifier anonymization.
\newblock \emph{arXiv preprint arXiv:1906.01150}, 2019.

\bibitem[Saxe et~al.(2013)Saxe, McClelland, and Ganguli]{saxe2013exact}
Andrew~M Saxe, James~L McClelland, and Surya Ganguli.
\newblock Exact solutions to the nonlinear dynamics of learning in deep linear neural networks.
\newblock \emph{arXiv preprint arXiv:1312.6120}, 2013.

\bibitem[Shi et~al.(2012)Shi, Tu, Zhang, Liu, and Wei]{shi2012survey}
Zhiguo Shi, Jun Tu, Qiao Zhang, Lei Liu, and Junming Wei.
\newblock A survey of swarm robotics system.
\newblock In \emph{Advances in Swarm Intelligence: Third International Conference, ICSI 2012, Shenzhen, China, June 17-20, 2012 Proceedings, Part I 3}, pages 564--572. Springer, 2012.

\bibitem[Stephan et~al.(2017)Stephan, Hoffman, Blei, et~al.]{stephan2017stochastic}
Mandt Stephan, Matthew~D Hoffman, David~M Blei, et~al.
\newblock Stochastic gradient descent as approximate bayesian inference.
\newblock \emph{Journal of Machine Learning Research}, 18\penalty0 (134):\penalty0 1--35, 2017.

\bibitem[Tarnowski et~al.(2019)Tarnowski, Warcho{\l}, Jastrzobski, Tabor, and Nowak]{tarnowski2019dynamical}
Wojciech Tarnowski, Piotr Warcho{\l}, Stanis{\l}aw Jastrzobski, Jacek Tabor, and Maciej Nowak.
\newblock Dynamical isometry is achieved in residual networks in a universal way for any activation function.
\newblock In \emph{The 22nd International Conference on Artificial Intelligence and Statistics}, pages 2221--2230. PMLR, 2019.

\bibitem[Tarun et~al.(2023)Tarun, Chundawat, Mandal, and Kankanhalli]{tarun2023fast}
Ayush~K Tarun, Vikram~S Chundawat, Murari Mandal, and Mohan Kankanhalli.
\newblock Fast yet effective machine unlearning.
\newblock \emph{IEEE Transactions on Neural Networks and Learning Systems}, 2023.

\bibitem[Thudi et~al.(2022)Thudi, Deza, Chandrasekaran, and Papernot]{thudi2022unrolling}
Anvith Thudi, Gabriel Deza, Varun Chandrasekaran, and Nicolas Papernot.
\newblock Unrolling sgd: Understanding factors influencing machine unlearning.
\newblock In \emph{2022 IEEE 7th European Symposium on Security and Privacy (EuroS\&P)}, pages 303--319. IEEE, 2022.

\bibitem[Villani(2021)]{villani2021topics}
C{\'e}dric Villani.
\newblock \emph{Topics in optimal transportation}.
\newblock American Mathematical Soc., 2021.

\bibitem[Wan et~al.(2024)Wan, Qu, Ni, Xiang, Gao, and Hossain]{wan2024data}
Yichen Wan, Youyang Qu, Wei Ni, Yong Xiang, Longxiang Gao, and Ekram Hossain.
\newblock Data and model poisoning backdoor attacks on wireless federated learning, and the defense mechanisms: A comprehensive survey.
\newblock \emph{IEEE Communications Surveys \& Tutorials}, 2024.

\bibitem[Wang et~al.(2022)Wang, Cai, Han, Zhou, and Gong]{wang2022stnet}
Mingjie Wang, Hao Cai, Xian-Feng Han, Jun Zhou, and Minglun Gong.
\newblock Stnet: Scale tree network with multi-level auxiliator for crowd counting.
\newblock \emph{IEEE Transactions on Multimedia}, 25:\penalty0 2074--2084, 2022.

\bibitem[Wu et~al.(2022)Wu, Zhu, and Mitra]{wu2022federated}
Chen Wu, Sencun Zhu, and Prasenjit Mitra.
\newblock Federated unlearning with knowledge distillation.
\newblock \emph{arXiv preprint arXiv:2201.09441}, 2022.

\bibitem[Xiao et~al.(2018)Xiao, Bahri, Sohl-Dickstein, Schoenholz, and Pennington]{pmlr-v80-xiao18a}
Lechao Xiao, Yasaman Bahri, Jascha Sohl-Dickstein, Samuel Schoenholz, and Jeffrey Pennington.
\newblock Dynamical isometry and a mean field theory of {CNN}s: How to train 10,000-layer vanilla convolutional neural networks.
\newblock In \emph{Proceedings of the 35th International Conference on Machine Learning}, pages 5393--5402. PMLR, 2018.

\bibitem[Xu et~al.(2024)Xu, Wu, Wang, and Jia]{xu2024machine}
Jie Xu, Zihan Wu, Cong Wang, and Xiaohua Jia.
\newblock Machine unlearning: Solutions and challenges.
\newblock \emph{IEEE Transactions on Emerging Topics in Computational Intelligence}, 2024.

\bibitem[Yao et~al.(2021)Yao, Yu, Gong, and Liu]{yao2021understanding}
Yu Yao, Baosheng Yu, Chen Gong, and Tongliang Liu.
\newblock Understanding how pretraining regularizes deep learning algorithms.
\newblock \emph{IEEE Transactions on Neural Networks and Learning Systems}, 34\penalty0 (9):\penalty0 5828--5840, 2021.

\bibitem[Yin et~al.(1988)Yin, Bai, and Krishnaiah]{yin1988limit}
Yong-Qua Yin, Zhi-Dong Bai, and Pathak~R Krishnaiah.
\newblock On the limit of the largest eigenvalue of the large dimensional sample covariance matrix.
\newblock \emph{Probability theory and related fields}, 78:\penalty0 509--521, 1988.

\bibitem[Yosinski et~al.(2014)Yosinski, Clune, Bengio, and Lipson]{yosinski2014transferable}
Jason Yosinski, Jeff Clune, Yoshua Bengio, and Hod Lipson.
\newblock How transferable are features in deep neural networks?
\newblock \emph{Advances in neural information processing systems}, 27, 2014.

\bibitem[Zhang et~al.(2023)Zhang, Zhu, Zhang, Xiong, and Zhou]{zhang2023fedrecovery}
Lefeng Zhang, Tianqing Zhu, Haibin Zhang, Ping Xiong, and Wanlei Zhou.
\newblock Fedrecovery: Differentially private machine unlearning for federated learning frameworks.
\newblock \emph{IEEE Transactions on Information Forensics and Security}, 18:\penalty0 4732--4746, 2023.

\bibitem[Zhang et~al.(2022)Zhang, Gao, He, Zhang, Krishnamachari, and Avestimehr]{zhang2022federated}
Tuo Zhang, Lei Gao, Chaoyang He, Mi Zhang, Bhaskar Krishnamachari, and A~Salman Avestimehr.
\newblock Federated learning for the internet of things: Applications, challenges, and opportunities.
\newblock \emph{IEEE Internet of Things Magazine}, 5\penalty0 (1):\penalty0 24--29, 2022.

\bibitem[Zhao et~al.(2023)Zhao, Wang, Qi, Huang, Wei, and Zhang]{zhao2023federated}
Yian Zhao, Pengfei Wang, Heng Qi, Jianguo Huang, Zongzheng Wei, and Qiang Zhang.
\newblock Federated unlearning with momentum degradation.
\newblock \emph{IEEE Internet of Things Journal}, 2023.

\end{thebibliography}
}

% WARNING: do not forget to delete the supplementary pages from your submission 
\clearpage
\setcounter{page}{1}
\maketitlesupplementary

\section{Co-Adaptation}
\label{appendix:coadapt}
The term \textit{co-adaptation} originates from evolutionary biology and describes phenomena where living entities—from proteins to entire species—are not merely adapted in isolation
but within an environment composed of other living entities. The entities are adapted to an ecological niche, which itself evolves through the adaptations of other entities. As a result, multiple living entities adapt together through coevolution, leading to a state of co-adaptation \cite{FUTUYMA201370}.  

In computer science, the term has been adopted more loosely to describe scenarios where different agents—such as human-machine interfaces \cite{merel2013multi}, robots \cite{shi2012survey}, software components \cite{10.1007/978-1-4471-0509-1_20}, network layers \cite{sato2019breaking}, and artificial neurons \cite{hinton2012improving,wang2022stnet}—are not considered optimal in isolation but as part of a multi-agent system.

Our use of this terminology requires further clarification beyond this intuition. A typical neural network trained via gradient descent to minimize a loss function is considered optimal with respect to the given loss. This optimality is defined over the parameter space of a fixed architecture. However, in our work, we perturb models layer-wise rather than as a whole. The optimality of an individual layer is not meaningful in isolation because it does not minimize the loss independently; instead, it contributes to loss minimization in conjunction with other layers. 

In our work, we manipulate parameters from multiple optimal models, introducing the following definitions:
\begin{defi}[Grafting]
    Let $\mathcal N^{\theta}$ denote a model with a  set of layers $\mathscr L$, where $\theta^{(i)}$ represents a set of parameters.
    
    A grafting of $\left(\mathcal N^{\theta^{(i)}}\right)_{i\in I}$ is a set of parameters $\overline \theta$ such that: 
    \begin{equation}
        \forall \ell\in\mathscr L, \exists i\in I, \overline \theta_\ell = \theta_\ell^{(i)}.
    \end{equation} 
\end{defi}

\begin{defi}[Co-Adapted Layers] Let $\mathcal L$ be a loss, $\mathcal N^\theta$ a model with layer set $\mathscr L$, and $\overline \theta$ a grafting of models. 

Layer parameters $\overline \theta_\ell$ for $\ell\in \mathscr L$ are said to be co-adapted if $\mathcal N^{\overline \theta}$ is optimal with respect to $\mathcal{L}$. 

By abuse of language, we refer to the layers themselves as co-adapted if their parameters are co-adapted.
\end{defi}

If the grafting originates from a singleton of an optimal model ($|I|=1$), the layers are trivially co-adapted. In our study, we perturb the layers of a model to ensure layer-wise optimality is preserved, while creating a non-optimal grafting of multiple optimal models. This process leads to layers losing their co-adaptation.

\section{NoT Algorithm} \label{not_algo}
Algorithm~\ref{alg_NoT} details our proposed unlearning method, which leverages weight negation.
\vspace{-0.9em}
\begin{algorithm}[H]
   \caption{NoT}
   \label{alg_NoT}
\begin{algorithmic}[1]
   \State {\bfseries Input:} Initialize global model $\mathcal N$. Each client $k\in \mathcal{P} := \{1,...,n\}$ has data $D^k$. Current round  is denoted as $\tau$, number of local iterations $I$, and learning rate $\eta$.
 \State \textcolor{blue}{\textbackslash\textbackslash \ Client Side}
         \For{client $k \in \mathcal{P}$}
            \State Client $k$ decides a target set $D^k_u\subset D^k$ and defines $D_r^k := D^k \setminus D^k_u$. 
            \If{client $k$ has $D_u^k\neq \emptyset$}
                \State Send unlearning request to server.
            \EndIf
        \EndFor
        \State \textcolor{purple}{\textbackslash\textbackslash \ Server Side}
        \If{unlearning request from clients $\mathcal{K} \subset \mathcal{P}$}
            \State Negate parameters of selected layers in $\mathcal N$.
            % \State New participants $\mathcal{P} = \mathcal{P}\setminus \mathcal{K}$
        
   \For{communication round $\tau = 1, \cdots, \mathcal{T}$}
        \State $\mathcal{M} = \{\}$
        \For{client $k \in \mathcal{P}\setminus \mathcal{K}$}
            \State $\mathcal{M}[k] \gets \Call{ClientUpdate}{\mathcal N, k, I}$.
            % \Comment{Send $W$ to client $n$}
        \EndFor
        \State $\mathcal N \gets$ Aggregate $\mathcal{M}$
        % \State
       \EndFor
        \EndIf
       
        \State \textcolor{blue}{\textbackslash\textbackslash \ Client Side}
        \Function{ClientUpdate}{$\mathcal N, k, I$}
            \For{local iteration $i = 1, ..., I$} \Comment{Local training}
                \For{minibatch $B^k \in$ local data $D^k_r$}
                    \State $\mathcal N \gets \mathcal N - \eta \nabla \mathcal{L}_{B^k}(\mathcal N)$   
                \EndFor
            \EndFor
            
            \State \Return $\mathcal N$
        
        \EndFunction
        
\end{algorithmic}
\end{algorithm}

\section{PyTorch Code} \label{code_not}
The PyTorch implementation of the main component of NoT, which applies layer-wise negation to a model.  
\lstinputlisting[language=Python]{code/not_code.py}
\section{Extra Theoretical Results and Mathematical Proofs} 
\subsection{Unlearning Time Lower Bound}
\label{unelarning_time_lowerbound}
\begin{theo}\label{theo:unlearning_time_constraint_exact} Let $\mathcal N^{\theta}$ be a model, and let $(D_r,D_u)$ denote a pair of datasets. Given an initial parameter set $\theta^0\in \Theta$, assume $\mathcal N^{\theta^0}$ is trained using Stochastic Gradient Langevin Descent (SGLD) to minimize $\mathcal L_{D_r}$ starting from $\theta^0$. The parameter evolution is described as: $d\theta^t  = -\nabla_{\theta^t} \mathcal L dt+\Sigma(\theta^t,t)\cdot dW$, where training time $t\geq 0$. Then the following holds:
\begin{equation}
 t \geq \frac{\mathbb E( \delta (\theta^t) - (\theta^0) )^2 }{L^2 \left[|\mathcal L_{D_r}(\theta^0)-\mathbb E\mathcal L_{D_r}(\theta^t)|+A\right]},    
\end{equation}
where: \begin{eqnarray}
 L&:=&\sup_{\theta_1\neq \theta_2}\frac{|\delta(\theta_1)-\delta(\theta_2)|}{\|\theta_1-\theta_2\|}\\
    A &=& \frac{1}{2}\int_0^t  \left|\mathrm{Tr}\left( \Sigma(\theta^s,s)^2 \cdot \nabla^2 \mathcal L_{D_r}(\theta^s) \right)\right|ds
\end{eqnarray}
\end{theo}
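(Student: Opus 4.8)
The plan is to reduce the statement to two ingredients: a purely pathwise Lipschitz estimate that converts the loss-gap change into a bound on parameter displacement, and an energy identity obtained by applying It\^o's formula to the potential $\mathcal L_{D_r}$ along the SGLD trajectory. First I would record that, by the very definition of $L$, the function $\delta$ is globally $L$-Lipschitz, so along every realization of the process $\big(\delta(\theta^t)-\delta(\theta^0)\big)^2 \le L^2\|\theta^t-\theta^0\|^2$; taking expectations yields $\mathbb E\big(\delta(\theta^t)-\delta(\theta^0)\big)^2 \le L^2\,\mathbb E\|\theta^t-\theta^0\|^2$. It then suffices to prove the displacement bound $\mathbb E\|\theta^t-\theta^0\|^2 \le t\big[\,|\mathcal L_{D_r}(\theta^0)-\mathbb E\mathcal L_{D_r}(\theta^t)|+A\,\big]$, after which dividing by $t$ and rearranging for $t$ gives the asserted inequality.

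To establish the displacement bound, write $\theta^t-\theta^0 = -\int_0^t \nabla\mathcal L_{D_r}(\theta^s)\,ds + \int_0^t \Sigma(\theta^s,s)\,dW_s$. For the drift contribution, Cauchy--Schwarz in the time variable gives $\big\|\int_0^t\nabla\mathcal L_{D_r}(\theta^s)\,ds\big\|^2 \le t\int_0^t \|\nabla\mathcal L_{D_r}(\theta^s)\|^2\,ds$, so it remains to control the time-integrated squared gradient. Here I would apply It\^o's formula to $s\mapsto\mathcal L_{D_r}(\theta^s)$: the $ds$-part of $d\mathcal L_{D_r}(\theta^s)$ equals $-\|\nabla\mathcal L_{D_r}(\theta^s)\|^2\,ds + \tfrac12\,\mathrm{Tr}\!\big(\Sigma(\theta^s,s)^2\,\nabla^2\mathcal L_{D_r}(\theta^s)\big)\,ds$, and the stochastic integral is a true martingale under the integrability/localization hypotheses on $\Sigma$ and on $\nabla^2\mathcal L_{D_r}$ along the solution (deferred to the appendix). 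Taking expectations and rearranging gives the energy identity
\[
\int_0^t \mathbb E\|\nabla\mathcal L_{D_r}(\theta^s)\|^2\,ds \;=\; \mathcal L_{D_r}(\theta^0)-\mathbb E\mathcal L_{D_r}(\theta^t) \;+\; \tfrac12\int_0^t \mathbb E\,\mathrm{Tr}\!\big(\Sigma(\theta^s,s)^2\,\nabla^2\mathcal L_{D_r}(\theta^s)\big)\,ds,
\]
and bounding the right side by its absolute value produces exactly $|\mathcal L_{D_r}(\theta^0)-\mathbb E\mathcal L_{D_r}(\theta^t)| + A$. In the deterministic case $\Sigma\equiv 0$ this already finishes the argument: there is no diffusion term, the chain rule gives $\int_0^t\|\nabla\mathcal L_{D_r}(\theta^s)\|^2\,ds = \mathcal L_{D_r}(\theta^0)-\mathcal L_{D_r}(\theta^t)$, hence $\|\theta^t-\theta^0\|^2 \le t\,(\mathcal L_{D_r}(\theta^0)-\mathcal L_{D_r}(\theta^t))$ and $A=0$.

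The main obstacle is the genuinely stochastic case, where $\mathbb E\|\theta^t-\theta^0\|^2$ also receives the diffusion term $\int_0^t\Sigma(\theta^s,s)\,dW_s$ and its cross term with the drift. I would handle this by It\^o-expanding $\|\theta^s-\theta^0\|^2$ directly, namely $d\|\theta^s-\theta^0\|^2 = -2(\theta^s-\theta^0)\!\cdot\!\nabla\mathcal L_{D_r}(\theta^s)\,ds + 2(\theta^s-\theta^0)\!\cdot\!\Sigma\,dW_s + \mathrm{Tr}(\Sigma\Sigma^{\top})\,ds$, then splitting the drift cross term with a Young inequality, inserting the gradient-energy bound above, and closing via Gr\"onwall, with the quadratic-variation term $\int_0^t\mathbb E\,\mathrm{Tr}(\Sigma\Sigma^{\top})\,ds$ absorbed into the $A$-type correction using the precise relations between $\Sigma$ and $\nabla^2\mathcal L_{D_r}$ along the path. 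A secondary technical point to be careful about is that $\delta=|\mathcal L_{D_r}-\mathcal L_{D_u}|$ need not be twice differentiable, so It\^o's formula must be invoked only for the smooth potential $\mathcal L_{D_r}$ (piecewise twice differentiable by the standing assumptions), never for $\delta$; the Lipschitz estimate on $\delta$ is used purely pathwise and needs no regularity beyond the definition of $L$. Combining everything gives $\mathbb E\big(\delta(\theta^t)-\delta(\theta^0)\big)^2 \le L^2 t\big[|\mathcal L_{D_r}(\theta^0)-\mathbb E\mathcal L_{D_r}(\theta^t)|+A\big]$, which is the claimed lower bound on $t$.
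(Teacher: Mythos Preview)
Your overall strategy coincides with the paper's: the Lipschitz bound $|\delta(\theta^t)-\delta(\theta^0)|\le L\|\theta^t-\theta^0\|$, followed by Cauchy--Schwarz in $L^2([0,t])$ on the drift integral to get $\|\theta^t-\theta^0\|^2\le t\int_0^t\|\nabla\mathcal L_{D_r}(\theta^s)\|^2\,ds$, and then It\^o's formula applied to $\mathcal L_{D_r}(\theta^s)$ to rewrite the integrated squared gradient as $\mathcal L_{D_r}(\theta^0)-\mathbb E\mathcal L_{D_r}(\theta^t)$ plus the Hessian correction. In the deterministic case $\Sigma\equiv 0$ your argument is complete and identical to the paper's.

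Where you diverge is in the stochastic case. You correctly observe that $\theta^t-\theta^0$ now contains the martingale piece $\int_0^t\Sigma\,dW_s$, and you propose a separate It\^o expansion of $\|\theta^s-\theta^0\|^2$ with Young's inequality and Gr\"onwall to control it. The paper does \emph{not} do this: after writing the It\^o expansion of $\mathcal L_{D_r}(\theta^t)$, it simply states that ``the computation then unfolds the same way, then taking the expectancy and absolute value,'' i.e.\ it reuses the deterministic Cauchy--Schwarz step without explicitly accounting for the diffusion's contribution to the displacement. So your extra machinery goes beyond the paper's proof rather than matching it. However, your final move---absorbing $\int_0^t\mathbb E\,\mathrm{Tr}(\Sigma\Sigma^{\top})\,ds$ into $A$ via ``precise relations between $\Sigma$ and $\nabla^2\mathcal L_{D_r}$''---is not supported by any hypothesis in the statement, and the paper's $A$ is defined solely through the Hessian trace $\frac{1}{2}\int_0^t|\mathrm{Tr}(\Sigma^2\nabla^2\mathcal L_{D_r})|\,ds$, not through the quadratic variation. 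In short: your core line of argument is the paper's, your deterministic case is complete, and the point you flag in the stochastic case is one the paper glosses over rather than resolves; your proposed Gr\"onwall closure would yield a different $A$ than the one stated.
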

\begin{proof}
To begin with, let us prove the formula in the case of Deterministic Gradient Descent ($\Sigma = 0$). 
We have: 
\begin{align}
    d\theta^t &= -\nabla \mathcal L_{D_r}(\theta^t)dt \\
    d\mathcal L_{D_r}(\theta^t) &= -\nabla \mathcal L_{D_r}(\theta^t) \cdot \nabla \mathcal L_{D_r}(\theta^t) dt\\ &= -\|\nabla \mathcal L_{D_r}(\theta^t)\|^2dt\\
    \|\delta (\theta^t)-\delta(\theta^0)\| &\leq  \|\theta^t-\theta^0\| \|\delta \|_{\mathrm{Lip}}
\end{align}
where $ \|\delta \|_{\mathrm{Lip}} = \sup_{\theta_1\neq \theta_2} \frac{|\delta(\theta_1)-\delta(\theta_2)|}{\|\theta_1-\theta_2\|}$.

Then, 
\begin{align}
    \|\theta^t - \theta^0 \| &= (\theta^t - \theta^0) \cdot  \frac{ (\theta^t - \theta^0)}{\| (\theta^t - \theta^0)\|} \\ 
    &= \int_{0}^t (-\nabla \mathcal L_{D_r}(\theta^s)) \cdot   \frac{ (\theta^t - \theta^0)}{\| (\theta^t - \theta^0)\|} ds \\
    &\leq  \sqrt{\underbrace{\int_{0}^t \|\nabla \mathcal L_{D_r}(\theta^s)\|^2 ds}_{=|\mathcal L_{D_r} (\theta^t) -\mathcal L_{D_r} (\theta^0)| }   \underbrace{\int_0 ^t \left\|\frac{ \theta^t - \theta^0)}{\| (\theta^t - \theta^0)\|}\right\|^2 ds}_{=t}}  \nonumber \\
    \end{align}
    Therefore,
   \begin{equation}
     \|\theta^t - \theta^0 \| ^2 \leq    t |\mathcal L_{D_r} (\theta^t) -\mathcal L_{D_r} (\theta^0)|,
\end{equation}
and then: 
\begin{equation}
  \frac{   \|\delta(\theta^t) - \delta(\theta^0)\| }{\|\delta \|_{\mathrm{Lip}} |\mathcal L_{D_r} (\theta^t) -\mathcal L_{D_r} (\theta^0)| } \leq t.    
\end{equation}
The third line is obtained applying Cauchy-Schwarz inequality in the Hilbert space $L^2([0,t],\mathbb R^n)$.
The result is proved for $\Sigma =0$. When $\Sigma \neq 0$, we have: 
\begin{align}
    d\theta^t &= -\nabla \mathcal L_{D_r}(\theta^t)dt + \Sigma(\theta^t,t)\cdot dW \\
    d\mathcal L_{D_r}(\theta^t) &= -\|\nabla \mathcal L_{D_r}(\theta^t)\|^2dt + \nabla \mathcal L_{D_r} \cdot \Sigma(\theta^t,t) \cdot dW \nonumber \\&\quad 
    + \frac{1}{2}\mathrm{Tr}\left( \Sigma(\theta^t,t)^2 \cdot \nabla^2 \mathcal L_{D_r}(\theta^t) \right)dt
    \end{align}
    \begin{equation}
          \|\delta (\theta^t)-\delta  (\theta^0)\| \leq  \|\theta^t-\theta^0\| \times \|\delta\|_{\mathrm{Lip}}
    \end{equation}

Where the second line is obtained from the first applying It\^o Formula (Chapter IV in \cite{revuz2013continuous}).
The computation then unfolds the same way, then taking the expectancy and absolute value.

\end{proof}

\subsection{Activation Distance Maximization}
\label{sec:negation_bounds}
Maximizing the distance $d$ between activations serves as a straightforward heuristic for increasing loss. Specifically, for a layer $\ell$ activated by a function $\sigma$, denoting by $\ell'$ a replacement for $\ell$, we aim to maximize $d(\sigma \ell x,\sigma \ell'x)$ while ensuring that $\|\sigma\ell x\|\simeq \|\sigma\ell 'x\|$. We deem reasonable to assume that the later is approximately satisfied: for wide networks  $\ell x$ is close to a centered Gaussian distribution at initialization and our main case of interest is negation which preserves such Gaussian distributions. 
Let us consider specific cases for different activation functions.
\begin{enumerate}
\item \textbf{ReLU Activation ($\sigma : a \mapsto \max(0,a)$):}
For any $y_0 \in \mathbb R_+^n$, if at least one coordinate of $y_0$ is non-zero, then \begin{equation}
    \max_{\|y\|=\|y_0\|,y\geq0} d(y_0,y)^2=  2\|y_0\|^2.
\end{equation} If all coordinates of $y_0$ are positive, then 
\begin{equation}
     \max_{\|y\|=\|y_0\|,y\geq0} d(y_0,y)^2=  \|y_0\|^2.
\end{equation} The maximum is reached when $y$ is orthogonal to $y_0$.
Thus, the optimal scenario involves orthogonalizing the post-nonlinearity activations.

\item \textbf{Binary Step Function ($\sigma : a \mapsto \mathbf 1_{a>0}+\frac{1}{2}\mathbf 1_{a=0})$:} this function approximates sigmoid and similar variations. Here, the maximum distance is achieved by applying the Boolean negation to $y_0$. Notably, $\mathbf{not}~ \sigma(\ell x)= \sigma (-\ell x) $, setting $\ell'=-\ell$ ensures the desired property.\footnote{By extension of the boolean operator we define $\mathbf{not}(x):=1-x$}

\item \textbf{Hyperbolic Tangent (or any Odd Activation Function):} For such activations, the maximum distance is attained by $y=-y_0$. Since $\forall x, \|\sigma(\ell x)\|=\|\sigma(-\ell x)\|$ for odd functions, setting $\ell'=-\ell$ satisfies the requirement.
\end{enumerate}

Regarding the ReLU case, we have the following Lemma which implies Theorem~\ref{theo:maximize_dist}:
\begin{lem} Denote $\sigma(x):=\max(x,0)$ and let $Y \in \mathbb R^n$ be a random vector. Assume $\mathbb E\left|\|\sigma(Y)\|^2-\|\sigma(-Y)\|^2\right|\leq \varepsilon$ then: \begin{equation}
    \mathbb E \|\sigma(Y)-\sigma(-Y)\|^2\geq \mathbb E\left[\max_{y\geq 0,\|y\|=\|\sigma(Y)\|} d(\sigma(Y),y)^2\right] - \varepsilon.
\end{equation}
\end{lem}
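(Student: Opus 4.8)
The plan is to exploit the \textbf{disjoint supports} of $\sigma(Y)$ and $\sigma(-Y)$. For each coordinate $i$, either $Y_i>0$, in which case $\sigma(Y)_i=Y_i>0$ while $\sigma(-Y)_i=0$, or $Y_i\le 0$, in which case $\sigma(Y)_i=0$; in both cases $\sigma(Y)_i\,\sigma(-Y)_i=0$. Hence $\langle\sigma(Y),\sigma(-Y)\rangle=0$ for every realization of $Y$, and the Pythagorean identity gives, pointwise,
\begin{equation}
\|\sigma(Y)-\sigma(-Y)\|^2=\|\sigma(Y)\|^2+\|\sigma(-Y)\|^2 .
\end{equation}
This is the one observation doing real work; everything after it is elementary.

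Next I would bound the inner maximization pointwise rather than evaluate it in closed form. Fix a realization of $Y$; for any $y\ge 0$ with $\|y\|=\|\sigma(Y)\|$ we have $\langle\sigma(Y),y\rangle\ge 0$ since both vectors are nonnegative, so
\begin{equation}
d(\sigma(Y),y)^2=\|\sigma(Y)\|^2+\|y\|^2-2\langle\sigma(Y),y\rangle\le 2\|\sigma(Y)\|^2 ,
\end{equation}
whence $\max_{y\ge 0,\ \|y\|=\|\sigma(Y)\|} d(\sigma(Y),y)^2\le 2\|\sigma(Y)\|^2$ pointwise. (This maximum is in fact the continuous function $2\|\sigma(Y)\|^2-2\big(\min_i\sigma(Y)_i\big)\|\sigma(Y)\|$, so measurability and integrability of the quantity inside the expectation are automatic; but I would deliberately not use the exact value, since the crude bound $\le 2\|\sigma(Y)\|^2$ is all that is needed and integrates cleanly.) Taking expectations, $\mathbb E\big[\max_{y\ge 0,\ \|y\|=\|\sigma(Y)\|} d(\sigma(Y),y)^2\big]\le 2\,\mathbb E\|\sigma(Y)\|^2$.

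Finally I would take expectations in the Pythagorean identity and rewrite the right-hand side as $\mathbb E\|\sigma(Y)\|^2+\mathbb E\|\sigma(-Y)\|^2=2\,\mathbb E\|\sigma(Y)\|^2-\big(\mathbb E\|\sigma(Y)\|^2-\mathbb E\|\sigma(-Y)\|^2\big)$, controlling the last bracket by the hypothesis via $\big|\mathbb E\|\sigma(Y)\|^2-\mathbb E\|\sigma(-Y)\|^2\big|\le\mathbb E\big|\,\|\sigma(Y)\|^2-\|\sigma(-Y)\|^2\,\big|\le\varepsilon$. Combining the three estimates,
\begin{equation}
\mathbb E\|\sigma(Y)-\sigma(-Y)\|^2\ \ge\ 2\,\mathbb E\|\sigma(Y)\|^2-\varepsilon\ \ge\ \mathbb E\Big[\max_{y\ge 0,\ \|y\|=\|\sigma(Y)\|} d(\sigma(Y),y)^2\Big]-\varepsilon ,
\end{equation}
which is the claim. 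I do not expect a serious obstacle here: the only point requiring care is the disjoint-support/orthogonality step, and avoiding the temptation to compute the inner maximum explicitly. The Lemma then feeds into Theorem~\ref{theo:maximize_dist} by also invoking the second hypothesis $\mathbb E|\,\|\sigma(Y_\ell)\|^2-\|\sigma(Y_\ell')\|^2\,|\le\varepsilon$ to compare the inner maximum with $\|\sigma(Y_\ell)-\sigma(Y_\ell')\|^2$ for an arbitrary perturbed activation $Y_\ell'$, at the cost of a further $\varepsilon$.
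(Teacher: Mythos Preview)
Your proof is correct. Both you and the paper start from the same key observation---the disjoint supports of $\sigma(Y)$ and $\sigma(-Y)$ give the pointwise identity $\|\sigma(Y)-\sigma(-Y)\|^2=\|\sigma(Y)\|^2+\|\sigma(-Y)\|^2$---but the arguments then diverge. The paper partitions the expectation according to $\{Y\ge 0\}$ versus $\{Y\not\ge 0\}$ and, on each event, compares $\|\sigma(Y)\|^2+\|\sigma(-Y)\|^2$ with the claimed exact value of the inner maximum (asserted to be $\|\sigma(Y)\|^2$ on $\{Y\ge 0\}$ and $2\|\sigma(Y)\|^2$ on $\{Y\not\ge 0\}$). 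You instead apply the single crude bound $\max\le 2\|\sigma(Y)\|^2$ uniformly and absorb the slack via the hypothesis. Your route is shorter and more robust: it needs only $\langle\sigma(Y),y\rangle\ge 0$ for nonnegative $y$, never an exact evaluation of the maximum. It also sidesteps a delicate point in the paper's $I_1$ step, since the assertion that the inner maximum equals $\|\sigma(Y)\|^2$ whenever all coordinates of $Y$ are positive is not correct in general (for $n=9$ and $Y=(1,\dots,1)$ one gets $\max=2n-2\sqrt n=12>9=\|\sigma(Y)\|^2$), so that case requires more care than the paper indicates. Your argument avoids this issue entirely while still delivering the stated inequality.
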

\begin{proof} 
\begin{eqnarray}
     \mathbb E \|\sigma(Y)-\sigma(-Y)\|^2=& \mathbb E\mathbf 1_{Y\geq 0}\left[ \|\sigma(Y)\|^2+\|\sigma(-Y)\|^2\right]&\nonumber\\ \label{equ:decomposition}& + \mathbb E\mathbf 1_{Y\ngeq 0}\left[ \|\sigma(Y)\|^2+\|\sigma(-Y)\|^2\right]& \nonumber\\\end{eqnarray}
Define $I_1, I_2$ the terms on the right hand side. We have:
     \begin{eqnarray}
       I_1&=&  \mathbb E\mathbf 1_{Y\geq 0}\left[ \|\sigma(Y)\|^2+\|\sigma(-Y)\|^2\right]  \\&=&    \mathbb E\mathbf 1_{Y\geq 0} \|\sigma(Y)\|^2 \\ 
      &=& \mathbb E\mathbf 1_{Y\geq 0}\max_{\|y\|=\|\sigma(Y)\|,y\geq0} d(\sigma(Y),y)^2 \\\label{equ:positive}\\
   I_2&=&\mathbb E\mathbf 1_{Y\ngeq 0}\left[ \|\sigma(Y)\|^2+\|\sigma(-Y)\|^2\right] \\
     &=& \mathbb E\mathbf 1_{Y\ngeq 0}\left[ \|\sigma(-Y)\|^2-\|\sigma(Y)\|^2\right]\nonumber\\ &&\quad +  \mathbb E \mathbf 1_{Y\ngeq 0} 2\|\sigma(Y)\|^2 \\
     &\geq&  \mathbb E\mathbf 1_{Y\ngeq 0}\max_{\|y\|=\|\sigma(Y)\|,y\geq0} d(\sigma(Y),y)^2 -\varepsilon\label{equ:negative}
\end{eqnarray}
Substituting Equation~\eqref{equ:positive} and Equation~\eqref{equ:negative} in Equation \eqref{equ:decomposition} yields the desired result.
\end{proof}

\textbf{Alternative perturbations approaches:} 

$\bullet$ \textbf{Orthogonal Linear Transformations:} Applying an orthogonal linear transformation to $\ell$ post-activation seems promising. However, ensuring the transform works well across all activation values $\ell x$ is generally infeasible if the activation distribution contains a basis of the output vector space.

$\bullet$ \textbf{Weight Randomization:} Randomizing the weights of a dense or convolution layer ($\ell$) produces $\ell'$. If $X$ being the random variable input to the layer, the random variables $\ell X$ and $\ell 'X$ have low correlation, especially in high dimensions. Therefore, we expect the expected distance $\mathbb E d(\sigma \ell x,\sigma \ell'x)^2$ to be close to  $\mathbb E d(\sigma Y_1,\sigma Y_2)^2$, where $Y_1$ and $Y_2$ are independent. 
However, this approach is less effective, as demonstrated in the following lemma:
\begin{lem} Denote $\sigma(x):=\max(x,0)$ and let $Y_1,Y_2$ be IID standard Gaussian random arrays. 
\begin{equation}
    \mathbb E \|\sigma(Y_1)-\sigma(Y_2)\|^2= \alpha \mathbb E\left[\max_{y\geq 0,\|y\|=\|\sigma(Y_1)\|} d(\sigma(Y_1),y)^2\right],
\end{equation}
with $\alpha=\left(1-\frac{1}{\pi}\right)\in [0,1[$.
\end{lem}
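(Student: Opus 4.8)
The statement is an equality between two quantities that can each be evaluated in closed form, so the natural route is to compute each side explicitly and match. Both sides reduce to one-dimensional Gaussian integrals once we decompose along coordinates. Write $Y_1=(Y_{1,i})_{i=1}^n$ and $Y_2=(Y_{2,i})_{i=1}^n$ with all $Y_{j,i}$ i.i.d. $\mathcal N(0,1)$, let $Z\sim\mathcal N(0,1)$, and denote by $\phi$ the standard Gaussian density.

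First I would handle the left-hand side. Expanding the square, $\|\sigma(Y_1)-\sigma(Y_2)\|^2=\|\sigma(Y_1)\|^2+\|\sigma(Y_2)\|^2-2\langle\sigma(Y_1),\sigma(Y_2)\rangle$; taking expectations coordinatewise and using independence of $Y_1$ and $Y_2$ in the cross term gives $\mathbb E\|\sigma(Y_1)-\sigma(Y_2)\|^2 = 2n\,\mathbb E[\sigma(Z)^2]-2n\,(\mathbb E[\sigma(Z)])^2$. The two moments are classical: $\mathbb E[\sigma(Z)^2]=\mathbb E[Z^2\mathbf 1_{Z>0}]=\tfrac12\mathbb E[Z^2]=\tfrac12$ by symmetry, and $\mathbb E[\sigma(Z)]=\int_0^\infty z\,\phi(z)\,dz=\phi(0)=\tfrac1{\sqrt{2\pi}}$. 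Hence the left-hand side equals $n\bigl(1-\tfrac1\pi\bigr)$.

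Next I would evaluate the right-hand side using the characterization of the maximal post-activation displacement already established for the ReLU case: for $y_0\ge 0$ one has $\max_{y\ge 0,\ \|y\|=\|y_0\|} d(y_0,y)^2 = 2\|y_0\|^2$, the maximum being attained at any admissible $y$ orthogonal to $y_0$. Applied to $y_0=\sigma(Y_1)$ this yields $\mathbb E\bigl[\max_{y\ge 0,\ \|y\|=\|\sigma(Y_1)\|} d(\sigma(Y_1),y)^2\bigr] = 2\,\mathbb E\|\sigma(Y_1)\|^2 = 2n\,\mathbb E[\sigma(Z)^2] = n$. Multiplying by $\alpha=1-\tfrac1\pi$ reproduces $n\bigl(1-\tfrac1\pi\bigr)$, matching the left-hand side, and since $\pi>1$ we indeed have $\alpha\in[0,1)$.

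The one point requiring care is the second step: the identity $\max_{y\ge 0,\|y\|=\|y_0\|}d(y_0,y)^2=2\|y_0\|^2$ needs an admissible maximizer orthogonal to $y_0$, which exists exactly when $\sigma(Y_1)$ has a vanishing coordinate; this fails only on the event $\{Y_{1,i}>0\ \forall i\}$, of probability $2^{-n}$, where the argument degenerates to the $\|\sigma(Y_1)\|^2$-type value. I would dispatch this exceptional event exactly as the $\mathbf 1_{Y\ge 0}$ term is treated in the preceding lemma (and, as there, it contributes a term negligible for the stated identity). Everything else is a routine substitution of the two Gaussian moments $\mathbb E[\sigma(Z)]=1/\sqrt{2\pi}$ and $\mathbb E[\sigma(Z)^2]=1/2$, so I expect the bookkeeping around that small exceptional event to be the only genuine obstacle.
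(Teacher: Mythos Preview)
The paper does not include a proof of this lemma, so there is nothing to compare against directly. Your route---compute each side in closed form using the moments $\mathbb E[\sigma(Z)^2]=\tfrac12$ and $\mathbb E[\sigma(Z)]=1/\sqrt{2\pi}$---is the natural one, and your left-hand side calculation $\mathbb E\|\sigma(Y_1)-\sigma(Y_2)\|^2=n(1-\tfrac1\pi)$ is correct.

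The issue you flag at the end is not a technicality you can ``dispatch'': it is a genuine obstruction to the \emph{exact} equality claimed. For the identity to hold with $\alpha=1-\tfrac1\pi$ you need $\mathbb E\bigl[\max_{y\ge0,\|y\|=\|\sigma(Y_1)\|}d(\sigma(Y_1),y)^2\bigr]=n$ exactly. But the pointwise bound $\max\le 2\|\sigma(Y_1)\|^2$ is strict on the event $\{Y_{1,i}>0\ \forall i\}$, which has probability $2^{-n}>0$, so the expectation is strictly below $n$ for every finite $n$. Using the paper's own case split (max equals $2\|\sigma(Y_1)\|^2$ when some coordinate of $\sigma(Y_1)$ vanishes, and $\|\sigma(Y_1)\|^2$ otherwise) one gets $\mathbb E[\max\ldots]=n(1-2^{-n})$, and then $\alpha\cdot n(1-2^{-n})\neq n(1-\tfrac1\pi)$; the extreme case $n=1$ makes this vivid, since there the constrained maximum is identically zero. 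So the exceptional event is negligible only asymptotically, and the lemma as written should be read as a large-$n$ statement (or as an approximate equality up to $O(n2^{-n})$), not an exact identity for fixed $n$. Your argument establishes precisely that asymptotic version; the only thing to fix is the framing: replace ``negligible for the stated identity'' with an explicit acknowledgement that the equality holds up to a correction of order $n2^{-n}$.
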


\subsection{Jacobian Bound}
\label{sec:spectrum}

In this section, we only consider feedforward models consisting of dense layers, concatenations, splits and non-trainable activation functions. By feedforward, we mean that the computational graph is Directed Acyclic, not necessarily sequential.

The assumption on layers is not very restrictive: one may formally rewrite most common architectures using only dense layers with tied weights. We write bounds assuming non-tied weights, but they may be generalized, noticing that the gradient with respect to some tied weights is the sum of the tied layers of the untied gradients.

Let $E,F$ be finite dimensional normed spaces. 
Let us recall that the Sobolev space $W^{k,\infty}(E,F)$ is the set of function $E\rightarrow F$ whose derivatives up to order $k$ are $L^{\infty}$. Define  $W^{k,\infty}_1(E,F)$ the set of functions whose differential is in $W^{k-1,\infty}(E, E^* \otimes F))$.
For functions defined on a convex open subset of a finite-dimensional normed space $E$ to another $F$, we define the semi-norms:
\begin{equation}
    \|f\|_{\mathrm{Lip}}:= \sup_{x\neq y} \frac{\|f(x)-f(y)\|}{\|x-y\|},
\end{equation} 
For functions in $\mathrm{Lip} + L^{\infty}$ we define: 
\begin{equation}\|f\|_{\mathrm{Lip},\infty}:= \inf_{g+h=f} \max\left( \|g\|_{\mathrm{Lip}} ; 2\|h\|_{L^\infty}\right).
\end{equation}
All models are considered as a function of the variable $x$, so Lipchitz and $L^\infty$ norms are computed with respect to the $x$ input, not the parameters $\theta$. 
Let us begin with some elementary bounds.
\begin{lem}\label{lem:Wasserstein2} Let $E,F$ be two finite dimensional normed spaces and let $\mathcal U$ be an open subset of $E$.   Let $f:\mathcal U \rightarrow \mathbb F$  and let $\mathcal P(\mathcal U), \mathcal P(F)$ be the space of probability distributions having a first moment on $\mathcal U$ and $F$ respectively (for the Borel $\sigma$-algebra). Then $f^* : \mathcal P(\mathcal U) \rightarrow \mathcal P(F), \mu \mapsto f\# \mu$ is $\|f\|_{\mathrm{Lip}}$-Lipchitz for the metric $\mathcal W$ and 1-Lipchitz for the metric ${\mathrm{TV}}$.
\end{lem}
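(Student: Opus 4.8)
The plan is to prove Lemma~\ref{lem:Wasserstein2} by verifying the two Lipschitz bounds separately, working directly from the definitions of the Wasserstein-1 and total variation distances via their coupling (transport plan) characterizations.

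First I would recall that for $\mu,\nu\in\mathcal P(\mathcal U)$, the Wasserstein-1 distance is $\mathcal W(\mu,\nu)=\inf_{\pi\in\Pi(\mu,\nu)}\int_{\mathcal U\times\mathcal U}\|x-y\|\,d\pi(x,y)$, where $\Pi(\mu,\nu)$ is the set of couplings with the prescribed marginals. Given any such coupling $\pi$, I would push it forward by the map $(f,f):\mathcal U\times\mathcal U\to F\times F$ to obtain a coupling $(f,f)\#\pi\in\Pi(f\#\mu,f\#\nu)$; this is the key construction and the fact that $(f,f)\#\pi$ has the right marginals is immediate. Then
\begin{align}
\mathcal W(f\#\mu,f\#\nu)&\leq \int_{F\times F}\|u-v\|\,d\big((f,f)\#\pi\big)(u,v)\nonumber\\
&=\int_{\mathcal U\times\mathcal U}\|f(x)-f(y)\|\,d\pi(x,y)\nonumber\\
&\leq \|f\|_{\mathrm{Lip}}\int_{\mathcal U\times\mathcal U}\|x-y\|\,d\pi(x,y).
\end{align}
Taking the infimum over $\pi\in\Pi(\mu,\nu)$ gives $\mathcal W(f\#\mu,f\#\nu)\leq \|f\|_{\mathrm{Lip}}\,\mathcal W(\mu,\nu)$. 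One should also check the first-moment condition is preserved, i.e.\ $f\#\mu\in\mathcal P(F)$ whenever $\mu\in\mathcal P(\mathcal U)$, which follows from $\|f(x)\|\leq \|f(x_0)\|+\|f\|_{\mathrm{Lip}}\|x-x_0\|$ for a fixed basepoint $x_0$.

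For the total variation bound I would use the coupling characterization as well: ${\mathrm{TV}}(\mu,\nu)=\inf_{\pi\in\Pi(\mu,\nu)}\pi(\{(x,y):x\neq y\})$. For any coupling $\pi$, the pushforward $(f,f)\#\pi$ satisfies $(f,f)\#\pi(\{(u,v):u\neq v\})\leq \pi(\{(x,y):x\neq y\})$, since $f(x)\neq f(y)$ forces $x\neq y$. Taking infima yields ${\mathrm{TV}}(f\#\mu,f\#\nu)\leq{\mathrm{TV}}(\mu,\nu)$, i.e.\ $1$-Lipschitzness. Alternatively one can argue via the definition ${\mathrm{TV}}(\mu,\nu)=\sup_{A}|\mu(A)-\nu(A)|$ over measurable $A$, noting that for $B\subseteq F$ measurable, $|f\#\mu(B)-f\#\nu(B)|=|\mu(f^{-1}B)-\nu(f^{-1}B)|\leq{\mathrm{TV}}(\mu,\nu)$.

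The main obstacle, such as it is, is a measure-theoretic bookkeeping point rather than a genuine difficulty: one must ensure that the dual/coupling formulation of $\mathcal W$ is valid on the (possibly non-complete, non-compact) open set $\mathcal U$ with only a first-moment assumption, and that $(f,f)$ is Borel measurable so the pushforward coupling is well-defined — here $f$ being Lipschitz, hence continuous, makes this automatic. I would cite the standard reference on optimal transport (\cite{villani2021topics}) for the coupling characterization of $\mathcal W$ and the Kantorovich--Rubinstein duality, and otherwise the argument is the two short computations above.
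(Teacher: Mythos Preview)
Your proof is correct and follows essentially the same approach as the paper: both arguments use the coupling characterizations of $\mathcal W$ and ${\mathrm{TV}}$, push a coupling $\pi$ of $(\mu,\nu)$ forward by $(f,f)$ to obtain a coupling of $(f\#\mu,f\#\nu)$, and then bound the cost using $\|f(x)-f(y)\|\leq\|f\|_{\mathrm{Lip}}\|x-y\|$ and $\mathbf 1_{f(x)\neq f(y)}\leq \mathbf 1_{x\neq y}$ respectively. Your version is slightly more careful (taking the infimum rather than invoking an optimal coupling, and checking the first-moment and measurability points), but the substance is identical.
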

\begin{proof} On the one hand, for any coupling $\pi$ of $\mu,\nu$ we have:
    \begin{eqnarray}
        \mathcal W(f\# \mu, f\# \nu) &\leq& \mathbb E_{(x,y)\sim \pi} \|f(x)-f(y)\| \\ 
        &\leq& \|f\|_{\mathrm{Lip}} \mathbb E_{(x,y)\sim \pi} \|x-y\|  
    \end{eqnarray}
    We may take $\pi$ so that $ \mathbb E_{(x,y)\sim \pi} \|x-y\| = \mathcal W(\mu,\nu)$. 
    
    On the other hand, for any coupling $\pi$ of $\mu,\nu$:
    \begin{eqnarray}
      {\mathrm{TV}}(f\# \mu, f\# \nu) &\leq& \mathbb E_{(x,y)\sim \pi} \mathbf 1_{f(x)\neq f(y)} \\ &\leq& \mathbb E_{(x,y)\sim \pi} \mathbf 1_{x\neq y} 
    \end{eqnarray}
    We may take $\pi$ so that $\mathbb E_{(x,y)\sim \pi} \mathbf 1_{x\neq y} = {\mathrm{TV}}(\mu, \nu)$. 
\end{proof}

\begin{lem}\label{lem:TV-Wasserstein} Let $f\in L^{\infty}(\mathcal U,F)$ for some $\mathcal U\subset E$ open. Then 
\begin{equation}
     f^* : \left(\mathcal P(\mathcal U), {\mathrm{TV}}\right)\rightarrow \left(\mathcal P(F), \mathcal W \right)
\end{equation}
is   $2\|f\|_{\infty}$-Lipchitz.
\end{lem}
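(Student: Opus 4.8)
The statement asserts that if $f \in L^{\infty}(\mathcal U, F)$, then the pushforward map $f^*: (\mathcal P(\mathcal U), {\mathrm{TV}}) \to (\mathcal P(F), \mathcal W)$ is $2\|f\|_{\infty}$-Lipschitz. The natural approach mirrors the coupling argument used in Lemma~\ref{lem:Wasserstein2}: fix $\mu, \nu \in \mathcal P(\mathcal U)$ and an arbitrary coupling $\pi$ of $(\mu,\nu)$. Since $\mathcal W(f\#\mu, f\#\nu)$ is an infimum over couplings of $(f\#\mu, f\#\nu)$, and the pushforward $(f\times f)\#\pi$ is one such coupling, we get the bound $\mathcal W(f\#\mu, f\#\nu) \le \mathbb E_{(x,y)\sim\pi} \|f(x) - f(y)\|$. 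The key difference from the earlier lemma is that we can no longer use a Lipschitz bound on $f$; instead, on the event $\{x \ne y\}$ we bound $\|f(x)-f(y)\| \le \|f(x)\| + \|f(y)\| \le 2\|f\|_{\infty}$, while on the event $\{x = y\}$ the integrand vanishes. Hence $\mathbb E_{(x,y)\sim\pi}\|f(x)-f(y)\| \le 2\|f\|_{\infty}\,\mathbb E_{(x,y)\sim\pi}\mathbf 1_{x\ne y}$.

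The remaining step is to pass from this bound, valid for every $\pi$, to the $\mathrm{TV}$ distance by choosing $\pi$ optimally. Exactly as in the proof of Lemma~\ref{lem:Wasserstein2}, there exists a coupling $\pi$ of $(\mu,\nu)$ with $\mathbb E_{(x,y)\sim\pi}\mathbf 1_{x\ne y} = {\mathrm{TV}}(\mu,\nu)$ — this is the standard maximal coupling realizing the total variation distance. Substituting gives $\mathcal W(f\#\mu, f\#\nu) \le 2\|f\|_{\infty}\,{\mathrm{TV}}(\mu,\nu)$, which is the claim.

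I expect no serious obstacle here; the only points requiring mild care are (i) confirming that $f\#\mu$ and $f\#\nu$ actually have a first moment so that $\mathcal W$ is finite and well-defined — this follows immediately since $f$ is bounded, so $f\#\mu$ is compactly supported (or at least supported in the ball of radius $\|f\|_{\infty}$); and (ii) ensuring the maximal coupling exists in this generality, which is a classical fact valid for arbitrary probability measures on a measurable space and is already implicitly invoked in Lemma~\ref{lem:Wasserstein2}. One should also note the measurability of $f$ (implicit in $f \in L^\infty$) guarantees $(f\times f)\#\pi$ is a legitimate coupling of $(f\#\mu, f\#\nu)$.
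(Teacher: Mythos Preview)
Your proposal is correct and matches the paper's approach essentially verbatim. The paper defers to the proof of Lemma~\ref{lem:Wasserstein1}, which carries out precisely this coupling argument for the more general decomposition $f=g+h$ with $g$ Lipschitz and $h$ bounded; specializing to $g=0$ recovers exactly your bound $\|f(x)-f(y)\|\le 2\|f\|_\infty\,\mathbf 1_{x\neq y}$ and your choice of the maximal coupling realizing ${\mathrm{TV}}(\mu,\nu)$.
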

\begin{proof}
    See proof of Lemma~\ref{lem:Wasserstein1}.
\end{proof}

\begin{defi} Let $(A,\leq)$ be a partially ordered set (poset). Define the length of $A$ as the longest chain of $(A,\leq)$: 
\begin{equation}
    \mathrm{len}(A) := \max \{ |\gamma| ~:~   \text{totally ordered } \gamma \subset A\}.
\end{equation} 
We also define the diameter of $A$ as the number of distinct maximal chains in $A$: 
\begin{align}
    \Lambda(A)&:=  |\Gamma| \quad ;\\
    \Gamma(\mathscr L)&:= \{\gamma \subset A ~|~ \gamma \text{ totally ordered and maximal} \}. \nonumber
\end{align}
\end{defi}
\begin{defi}
    Let $\mathcal N^\theta$ be a feedforward neural net of layer poset $(\mathscr L,\leq)$. Define 
    \begin{equation}
        \|\theta\|_{2,\mathscr L} := \max_{\gamma \in   \Gamma(\mathscr L)} \left(\prod_{i=1}^{|\gamma|} \|\theta_{\gamma_i}\|_2 \right)^{1/|\gamma|}.
    \end{equation}
\end{defi}
\begin{lem} \label{lem:bound_infty_N}Let $\mathcal N^\theta$ be a feedforward neural net of layer poset $(\mathscr L,\leq)$ and activation function set $\Sigma\subset\mathrm{Lip}$. Assume that layers are either dense, split, or concatenation, then 
\begin{equation}
    \|\mathcal N^\theta\|_{\mathrm{Lip}} \leq \Lambda(\mathscr L)  \max\left(\|\Sigma\|_{\mathrm{Lip}}\|\theta\|_{2,\mathscr L},1\right)^{\mathrm{len}(\mathscr L)}
\end{equation}
with $\|\Sigma\|_{\mathrm{Lip}}:= \max_{\sigma\in \Sigma} \|\sigma\|_{\mathrm{Lip}}$.
\end{lem}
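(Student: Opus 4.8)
The plan is to induct on the structure of the directed acyclic computational graph, propagating a Lipschitz bound from the input node along every maximal chain and summing the contributions at split/concatenation nodes. More precisely, for each layer $\ell \in \mathscr L$ I would consider the sub-network $\mathcal N_\ell$ computing the activation at $\ell$ as a function of the network input $x$, and prove by induction on $\mathrm{len}$ of the down-set $\{\ell' \le \ell\}$ that
\begin{equation}
\|\mathcal N_\ell\|_{\mathrm{Lip}} \le \Lambda(\{\ell'\le\ell\})\,\max\!\left(\|\Sigma\|_{\mathrm{Lip}}\|\theta\|_{2,\mathscr L},1\right)^{\mathrm{len}(\{\ell'\le\ell\})}.
\end{equation}
The base case is the input node, where $\mathcal N$ is the identity, $\Lambda = 1$, $\mathrm{len}=1$, and the bound is $\max(\cdot,1)\ge 1$, which holds. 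For the inductive step at a layer $\ell$ with predecessors $p_1,\dots,p_m$, I use that the composition of a $1$-Lipschitz coordinate projection/concatenation and a $\|\theta_\ell\|_2$-scaled linear map followed by an activation of Lipschitz constant $\le \|\Sigma\|_{\mathrm{Lip}}$ multiplies the Lipschitz constant of the incoming map by at most $\|\Sigma\|_{\mathrm{Lip}}\|\theta_\ell\|_2$ (for dense layers) or by $1$ (for splits/concatenations, which are $1$-Lipschitz), and that the Lipschitz constant of a map into a product space is bounded by the sum of the component Lipschitz constants.

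The three layer types are handled separately. For a \emph{dense} layer $\ell$ with single predecessor, $\mathcal N_\ell = \sigma_\ell \circ (\theta_\ell \cdot \mathcal N_{p})$, so $\|\mathcal N_\ell\|_{\mathrm{Lip}} \le \|\Sigma\|_{\mathrm{Lip}}\|\theta_\ell\|_2 \|\mathcal N_{p}\|_{\mathrm{Lip}}$; applying the induction hypothesis and bounding the geometric-mean factor $\|\theta_\ell\|_2$ times the product along a maximal chain of $\{\ell'\le p\}$ by the product along a maximal chain of $\{\ell'\le \ell\}$ (which has length one greater), the $\max(\cdot,1)$ exponent increases by exactly $1$ and $\Lambda$ does not decrease. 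For a \emph{split}, $\mathcal N_\ell$ is a coordinate restriction of $\mathcal N_{p}$, hence $\|\mathcal N_\ell\|_{\mathrm{Lip}}\le \|\mathcal N_{p}\|_{\mathrm{Lip}}$ and the down-sets coincide except for $\ell$ itself, costing at most the trivial factor $\max(\cdot,1)\ge 1$. For a \emph{concatenation} of predecessors $p_1,\dots,p_m$, $\|\mathcal N_\ell\|_{\mathrm{Lip}} \le \sum_i \|\mathcal N_{p_i}\|_{\mathrm{Lip}}$; here the maximal chains of $\{\ell'\le\ell\}$ are exactly the union over $i$ of the maximal chains of $\{\ell'\le p_i\}$ each extended by $\ell$, so $\Lambda(\{\ell'\le\ell\}) = \sum_i \Lambda(\{\ell'\le p_i\})$ while $\mathrm{len}(\{\ell'\le\ell\}) \ge \mathrm{len}(\{\ell'\le p_i\})+1 \ge \mathrm{len}(\{\ell'\le p_i\})$ for each $i$, so each summand is dominated and the sum telescopes into the claimed bound. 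Taking $\ell$ ranging over the output layer(s) and summing analogously over output components gives the statement for $\mathcal N^\theta$.

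The main obstacle is bookkeeping the geometric-mean normalization in $\|\theta\|_{2,\mathscr L}$ against the per-chain products: one must check that replacing the true product $\prod_{\ell\in\gamma}\|\theta_\ell\|_2$ along a maximal chain $\gamma$ by $\|\theta\|_{2,\mathscr L}^{|\gamma|} = \|\theta\|_{2,\mathscr L}^{\mathrm{len}}$ is a valid upper bound (it is, by definition of the $\max$ over chains, provided $|\gamma| = \mathrm{len}(\mathscr L)$ for the longest chain and one uses $\max(\cdot,1)\ge 1$ to pad shorter chains up to length $\mathrm{len}$), and that the split/concatenation layers, which carry no weights, are correctly treated as contributing a factor $1$ — consistent with setting $\|\theta_\ell\|_2$-type factors to be absorbed by the $\max(\cdot,1)$. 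A secondary subtlety is that a layer may feed several successors; since Lipschitz constants are computed on the graph of functions of $x$ and each path is accounted for separately in $\Lambda$, no double-counting issue arises, but this should be stated carefully. None of these steps requires the second derivative assumptions or the activation-parity hypotheses used elsewhere in the paper; only $\Sigma\subset\mathrm{Lip}$ is needed.
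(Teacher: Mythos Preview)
Your inductive approach is correct and amounts to the same bound, but the paper takes a more direct route. Rather than inducting on down-sets, it observes in one step that, since layers are dense, split, or concatenation, the network output is a projection of the concatenation of the sequential compositions along each maximal chain $\gamma \in \Gamma(\mathscr L)$; hence
\[
\|\mathcal N^\theta\|_{\mathrm{Lip}} \le \sum_{\gamma \in \Gamma} \bigl\|\sigma_{\gamma_{|\gamma|}} \theta_{\gamma_{|\gamma|}} \cdots \sigma_{\gamma_1} \theta_{\gamma_1}\bigr\|_{\mathrm{Lip}}
\le \sum_{\gamma \in \Gamma} \|\Sigma\|_{\mathrm{Lip}}^{|\gamma|} \prod_{i} \|\theta_{\gamma_i}\|_2
\le |\Gamma|\,\max\!\bigl(1,\|\Sigma\|_{\mathrm{Lip}}\|\theta\|_{2,\mathscr L}\bigr)^{\mathrm{len}(\mathscr L)},
\]
where the last step uses the definition of $\|\theta\|_{2,\mathscr L}$ and pads chains of length $<\mathrm{len}(\mathscr L)$ via $\max(\cdot,1)\ge 1$. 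Your induction, once unrolled, produces precisely this sum over maximal chains, so the content is identical; the paper just skips the per-node accounting of $\Lambda$ and $\mathrm{len}$ on down-sets by summing over global paths at once. Your version has the minor advantage of making explicit how split and concatenation nodes enter (as $1$-Lipschitz projections and as the place where the $\Lambda$-count becomes additive), which the paper leaves implicit in its one-line rewriting claim.
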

\begin{proof}
    Under these assumptions, the output of network may be rewritten as a projection of the concatenation of the output of sequential networks following the layers along maximal chains.  
    Therefore, with $\Gamma$ the set of maximal chains of $\mathscr L$, for any $x\in \mathbb R^{d_{\mathrm{in}}}$ 
    \begin{align}
        \|\mathcal N^\theta\|_{\mathrm{Lip}} &\leq \sum_{\gamma \in \Gamma} \|\sigma_{\gamma_{|\gamma|}} \theta_{\gamma_{|\gamma|}} \sigma_{\gamma_{|\gamma|-1}} \theta_{\gamma_{|\gamma|-1}}   \cdots \sigma_{\gamma_{1}} \theta_{\gamma_{1}} \|_{\mathrm{Lip}}\nonumber\\\\
        &\leq  \sum_{\gamma \in \Gamma} \left(\|\sigma_{\gamma_{|\gamma|}}\|_{\mathrm{Lip}} \|\theta_{\gamma_{|\gamma|}}\|_{\mathrm{Lip}} \|\sigma_{\gamma_{|\gamma|-1}} \| _{\mathrm{Lip}} \right.\nonumber\\ &\quad \left.\times  \|\theta_{\gamma_{|\gamma|-1}}\|_{\mathrm{Lip}}    \cdots \|\sigma_{\gamma_{1}} \|_{\mathrm{Lip}} \|\theta_{\gamma_{1}}\|_{\mathrm{Lip}}  \right) \\ &\leq 
          \sum_{\gamma \in \Gamma}\|\Sigma\|_{\mathrm{Lip}}^{|\gamma|}  \prod_{i=1}^{|\gamma|} \|\theta_{\gamma_i}\|_{2} \\ &\leq 
           \sum_{\gamma \in \Gamma}\|\Sigma\|_{\mathrm{Lip}}^{|\gamma|} \|\theta\|_{2,\mathscr L} ^{|\gamma|}\\
           &\leq  |\Gamma|\max \left(1,\|\Sigma\|_{\mathrm{Lip}}\|\theta\|_{2,\mathscr L}\right) ^{\mathrm{len}(\mathscr L)}.
    \end{align}
    The results follows for $\Lambda(\mathscr L):= |\Gamma|$.
\end{proof}

\begin{lem} \label{lem:bound_infty_nablaN}Let $\mathcal N^\theta$ be a feedforward neural net of layer poset $(\mathscr L,\leq)$ and activation function set $\Sigma\subset \mathrm{Lip}$. Assume $\mathcal N^{\theta}$ is defined on some open bounded domain $\mathcal U$ and assume that layers are either dense, split, or concatenation, then for any layer $\ell$:
\begin{equation}
    \|\nabla_{\theta_\ell}\mathcal N^\theta\|_{\infty} \leq \sup_{x\in \mathcal U} \|x\|\Lambda(\mathscr L)  \max\left(\|\Sigma\|_{\mathrm{Lip}}\|\theta\|_{2,\mathscr L},1\right)^{\mathrm{len}(\mathscr L)} 
\end{equation}
with $\|\Sigma\|_{\mathrm{Lip}}:= \max_{\sigma\in \Sigma} \|\sigma\|_{\mathrm{Lip}}$.
\end{lem}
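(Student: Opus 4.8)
The plan is to reduce the statement to Lemma~\ref{lem:bound_infty_N} by a single application of the chain rule (backpropagation). Since split and concatenation layers carry no trainable parameters, the bound is vacuous unless $\ell$ is dense, so I assume that and fix $x\in\mathcal U$. In the computational DAG, layer $\ell$ computes $z\mapsto\sigma_\ell(\theta_\ell z)$, where $z=z_\ell(x)$ is the activation produced by the upstream sub-network $\mathcal N_{<\ell}$ (all layers $\ell'<\ell$), and $\mathcal N^\theta(x)$ depends on $\theta_\ell$ only through $\ell$'s output $a=\sigma_\ell(\theta_\ell z)$. Backpropagation gives the gradient $3$-tensor
\begin{equation}
\nabla_{\theta_\ell}\mathcal N^\theta(x)=\big(D_a\mathcal N^\theta\big)\cdot\mathrm{diag}\!\big(\sigma_\ell'(\theta_\ell z)\big)\cdot\big(\,\cdot\otimes z_\ell(x)^{\!\top}\big),
\end{equation}
so, reading $\sigma_\ell'$ as an a.e.\ derivative for ReLU-type activations (which is what the $L^\infty$ norm in the statement refers to) and passing to operator norms,
\begin{equation}
\|\nabla_{\theta_\ell}\mathcal N^\theta(x)\|\le\|D_a\mathcal N^\theta\|\cdot\|\sigma_\ell'\|_\infty\cdot\|z_\ell(x)\|.
\end{equation}

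Then I would bound the three factors separately. First, $\|\sigma_\ell'\|_\infty\le\|\sigma_\ell\|_{\mathrm{Lip}}\le\|\Sigma\|_{\mathrm{Lip}}$. Second, because every activation in $\Sigma$ considered fixes $0$ and dense layers are linear, the upstream network sends $0$ to $0$, hence $\|z_\ell(x)\|\le\|\mathcal N_{<\ell}\|_{\mathrm{Lip}}\|x\|$. Third, freezing the branches parallel to $\ell$ turns $a\mapsto\mathcal N^\theta$ into a feedforward net on the sub-poset $\mathscr L_{>\ell}$ of layers strictly above $\ell$ (a frozen parallel input entering a downstream dense layer only adds a constant, which does not affect its Lipschitz constant), so $\|D_a\mathcal N^\theta\|\le\|\mathcal N_{>\ell}\|_{\mathrm{Lip}}$. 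Both $\mathcal N_{<\ell}$ and $\mathcal N_{>\ell}$ are again feedforward nets of dense/split/concatenation layers with activation set $\subseteq\Sigma$, on the sub-posets $\mathscr L_{<\ell}$ and $\mathscr L_{>\ell}$, so Lemma~\ref{lem:bound_infty_N} applies to each with $M:=\max(\|\Sigma\|_{\mathrm{Lip}}\|\theta\|_{2,\mathscr L},1)\ge1$ as base (using $\|\theta\|_{2,\mathscr L'}\le\|\theta\|_{2,\mathscr L}$ for any sub-poset $\mathscr L'$, since a maximal chain of $\mathscr L'$ extends to one of $\mathscr L$), giving $\|\mathcal N_{<\ell}\|_{\mathrm{Lip}}\le\Lambda(\mathscr L_{<\ell})M^{\mathrm{len}(\mathscr L_{<\ell})}$ and $\|\mathcal N_{>\ell}\|_{\mathrm{Lip}}\le\Lambda(\mathscr L_{>\ell})M^{\mathrm{len}(\mathscr L_{>\ell})}$.

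Multiplying the three estimates gives $\|\nabla_{\theta_\ell}\mathcal N^\theta(x)\|\le\|x\|\,\Lambda(\mathscr L_{<\ell})\,\Lambda(\mathscr L_{>\ell})\,M^{\,1+\mathrm{len}(\mathscr L_{<\ell})+\mathrm{len}(\mathscr L_{>\ell})}$. Since a maximal chain of $\mathscr L_{<\ell}$, followed by $\ell$, followed by a maximal chain of $\mathscr L_{>\ell}$, is a chain of $\mathscr L$, we get $1+\mathrm{len}(\mathscr L_{<\ell})+\mathrm{len}(\mathscr L_{>\ell})\le\mathrm{len}(\mathscr L)$, and as $M\ge1$ the exponent may be raised to $\mathrm{len}(\mathscr L)$; likewise restricting a maximal chain of $\mathscr L$ through $\ell$ to its parts below and above $\ell$ shows $\Lambda(\mathscr L_{<\ell})\Lambda(\mathscr L_{>\ell})\le\Lambda(\mathscr L)$. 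Taking $\sup_{x\in\mathcal U}$ then yields the claim. The main obstacle is precisely this bookkeeping: the gradient carries one fewer weight factor (no $\|\theta_\ell\|$) but one extra Lipschitz-activation factor ($\sigma_\ell'$) compared with a full forward chain, so the exponent only reaches $\mathrm{len}(\mathscr L)$ after invoking $M\ge1$, and the inequality $1+\mathrm{len}(\mathscr L_{<\ell})+\mathrm{len}(\mathscr L_{>\ell})\le\mathrm{len}(\mathscr L)$ may be strict; secondary care is needed for the parallel-branch freezing (so that Lemma~\ref{lem:bound_infty_N} literally applies to $\mathcal N_{>\ell}$), for non-smooth activations, and for the normalization $\sigma(0)=0$ that produces the clean prefactor $\sup_{x\in\mathcal U}\|x\|$.
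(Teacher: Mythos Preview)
Your strategy—factor via the chain rule into an upstream Lipschitz constant, the local $\sigma_\ell'$, and a downstream Lipschitz constant, then apply Lemma~\ref{lem:bound_infty_N} to each sub-network—is the natural reading of the paper's terse ``proceed as for Lemma~\ref{lem:bound_infty_N}'', just packaged more modularly. The genuine gap is the inequality $\|\theta\|_{2,\mathscr L'}\le\|\theta\|_{2,\mathscr L}$ for a sub-poset $\mathscr L'\subset\mathscr L$: this is \emph{false}. The norm $\|\theta\|_{2,\mathscr L}$ is a maximum of \emph{geometric means} along maximal chains, and extending a chain by layers of small norm lowers that geometric mean. Concretely, take $\mathscr L=\{1<2\}$ with $\|\theta_1\|_2=4$, $\|\theta_2\|_2=\tfrac14$: then $\|\theta\|_{2,\mathscr L}=(4\cdot\tfrac14)^{1/2}=1$, but $\mathscr L'=\mathscr L_{<2}=\{1\}$ gives $\|\theta\|_{2,\mathscr L'}=4$. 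Your justification ``a maximal chain of $\mathscr L'$ extends to one of $\mathscr L$'' establishes containment of chains, not monotonicity of their geometric means.

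This same two-layer example with identity activations in fact gives $\|\nabla_{\theta_2}\mathcal N^\theta(x)\|=\|\theta_1 x\|=4\|x\|$, while the claimed bound reads $\sup_x\|x\|\cdot 1\cdot\max(1,1)^2=\sup_x\|x\|$; so the lemma as stated appears slightly off, and the paper's one-line proof would hit the same obstruction at the step $\prod_{i\ne k}\|\theta_{\gamma_i}\|_2\le\|\theta\|_{2,\mathscr L}^{|\gamma|}$. A fix that rescues both your route and the statement is to replace $\|\theta\|_{2,\mathscr L}$ by the cruder $\max_{\ell\in\mathscr L}\|\theta_\ell\|_2$ (a uniform per-layer bound rather than a geometric mean), after which your sub-poset monotonicity is trivially true and the rest of your bookkeeping goes through verbatim. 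Your side remark that $\sigma(0)=0$ is needed for the clean prefactor $\sup_{x\in\mathcal U}\|x\|$ is correct and is likewise implicit in the paper's argument.
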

\begin{proof}
    Proceed as for Lemma~\ref{lem:bound_infty_N}.
\end{proof}

\begin{theo}\label{theo:jacobian_full} Assume $\mathcal N^{\theta}$ is given by a Lipchitz feedforward neural network having also Lipchitz derivative of the layer set $\mathscr L$ and let $\leq$ be the order relation on $\mathscr L$ induced by the computational graph. Let $J^{\theta}_\ell:= \nabla_{\theta_\ell}\mathcal N^{\theta}(X)$ be its  Jacobian on $X\sim \mathcal D$ for layer $\ell \in \mathscr L$, and let $Y$ be the concatenation of the outputs of the layers $\ell \in \mathscr L_{\mathrm{neg}}$. Assume that $\mathscr L_{\mathrm{neg}}$ is a Cauchy domain of $\mathscr L$ (i.e., $\mathscr L_{\mathrm{neg}}$  intersects exactly once every  maximal totally ordered subsets of $\mathscr L$), then:
\begin{align} 
 \mathcal W\left(J^\theta_\ell;J^{\theta'}_\ell\right) \leq& A_\ell ~  {\mathrm{TV}}(\Yneg; -\Yneg),& \forall\ell> \mathscr L_{\mathrm{neg}}\\ 
\mathcal W\left(\epsilon J^\theta_\ell;J^{\theta'}_\ell\right)\leq& A_\ell ~ \mathrm{TV} ( (X, \Yneg); (X,-\Yneg )),& \forall\ell\leq \mathscr L_{\mathrm{neg}}
\end{align}
where $\epsilon=(-1)^{\ell\notin  \mathscr L_{\mathrm{neg}} }$,   $\mathcal W$ and  ${\mathrm{TV}}$ are the  Wasserstein and total variation distances, and  $(A_\ell)_{\ell \in \mathscr L}$ are positive constants depending on $\theta$ and the support of $X$.
\end{theo}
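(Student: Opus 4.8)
The plan is to use the Cauchy‑domain hypothesis to re‑write \emph{both} random Jacobians, up to the sign $\epsilon$, as the push‑forward of one common deterministic map applied to $\Yneg$ versus $-\Yneg$ (or to $(X,\Yneg)$ versus $(X,-\Yneg)$), and then to invoke the Lipschitz estimates of Lemma~\ref{lem:Wasserstein2} and Lemma~\ref{lem:TV-Wasserstein} to convert a total‑variation gap into a Wasserstein gap. Throughout I work in the reformulation announced in the preamble of this section: every layer is a dense map (with possibly tied weights and biases, handled by summing untied gradients) composed with a fixed activation $\sigma$, together with splits and concatenations, and ``negating $\ell\in\mathscr L_{\mathrm{neg}}$'' means multiplying all parameters of $\ell$ by $-1$, so that the pre‑activation $Y_\ell=\theta_\ell Z_\ell$ becomes $-Y_\ell$.

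\textbf{Step 1: combinatorics of the cut.} Since $\mathscr L_{\mathrm{neg}}$ meets every maximal chain exactly once it is an antichain, every $\ell\in\mathscr L$ is comparable to some element of $\mathscr L_{\mathrm{neg}}$, and no $\ell$ can be simultaneously strictly above one element and strictly below another; hence $\mathscr L=\{\ell<\mathscr L_{\mathrm{neg}}\}\sqcup\mathscr L_{\mathrm{neg}}\sqcup\{\ell>\mathscr L_{\mathrm{neg}}\}$. Extending paths to maximal chains gives: (i) every directed path from the input to a layer $\ell>\mathscr L_{\mathrm{neg}}$ crosses $\mathscr L_{\mathrm{neg}}$ strictly before reaching $\ell$ (so the cut separates $X$ from such $\ell$); (ii) every path from a layer $\ell<\mathscr L_{\mathrm{neg}}$ to the output crosses $\mathscr L_{\mathrm{neg}}$ strictly after $\ell$; (iii) no path crosses $\mathscr L_{\mathrm{neg}}$ twice; and (iv) no path from the output of an $\ell\in\mathscr L_{\mathrm{neg}}\cup\{\ell>\mathscr L_{\mathrm{neg}}\}$ to the network output meets $\mathscr L_{\mathrm{neg}}$ (such a crossing layer would have to be $>\ell$, impossible). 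This is exactly what is needed for the sign bookkeeping and the $X$‑independence in the first line.

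\textbf{Step 2: one common map.} Fix $\ell$. Because $\mathscr L_{\mathrm{neg}}$ is an antichain, no negated layer sits upstream of a negated layer, so the inputs $Z_m$ of the layers $m\in\mathscr L_{\mathrm{neg}}$ are unchanged by negation; consequently negation sends $\Yneg$ to $-\Yneg$ exactly, and leaves every downstream parameter untouched. Expand $J^\theta_\ell=\nabla_{\theta_\ell}\mathcal N^\theta(X)$ by the chain rule as a sum, over directed paths from the output of $\ell$ to the network output, of products of per‑layer Jacobians, post‑composed with $\nabla_{\theta_\ell}(\text{output of }\ell)$. Under negation: each traversed cut layer $m$ contributes $-\mathrm{diag}(\sigma'(-Y_m))\,\theta_m$ in place of $\mathrm{diag}(\sigma'(Y_m))\,\theta_m$; every other factor becomes the same expression with each cut output $\sigma(Y_m)$ replaced by $\sigma(-Y_m)$; and $\nabla_{\theta_\ell}(\text{output of }\ell)$ is unchanged when $\ell\notin\mathscr L_{\mathrm{neg}}$ and becomes the same formula with $\sigma'(Y_\ell)$ replaced by $\sigma'(-Y_\ell)$ when $\ell\in\mathscr L_{\mathrm{neg}}$ (since we differentiate at the point $-\theta_\ell$). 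By Step 1 each surviving path carries exactly one extra $(-1)$ when $\ell<\mathscr L_{\mathrm{neg}}$ and none otherwise, so the global sign factor is precisely $\epsilon=(-1)^{\ell\notin\mathscr L_{\mathrm{neg}}}$ in the second regime and $+1$ in the first. Therefore there is a single measurable map $\Psi_\ell$ such that $\epsilon J^\theta_\ell=\Psi_\ell(X,\Yneg)$ and $J^{\theta'}_\ell=\Psi_\ell(X,-\Yneg)$ for $\ell\le\mathscr L_{\mathrm{neg}}$, while for $\ell>\mathscr L_{\mathrm{neg}}$ the separation (i) makes $\Psi_\ell$ independent of $X$, giving $J^\theta_\ell=\Psi_\ell(\Yneg)$ and $J^{\theta'}_\ell=\Psi_\ell(-\Yneg)$.

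\textbf{Step 3: conclusion.} Since $\mathcal N^\theta$ and its differential are Lipschitz, $\sigma$ and $\sigma'$ are continuous, hence each per‑layer factor, and thus $\Psi_\ell$, is continuous in its arguments; as $X$ has bounded support, $\Yneg$ lives in a bounded set, so $\Psi_\ell$ is bounded there, with an explicit bound in $\theta$ and $\sup\|X\|$ obtained from Lemma~\ref{lem:bound_infty_nablaN}. Put $A_\ell:=2\lVert\Psi_\ell\rVert_{\infty}$ on that domain. By Lemma~\ref{lem:TV-Wasserstein} the push‑forward $\mu\mapsto\Psi_\ell\#\mu$ is $2\lVert\Psi_\ell\rVert_\infty$‑Lipschitz from $(\mathcal P,\mathrm{TV})$ to $(\mathcal P,\mathcal W)$; applying it to the laws of $(X,\Yneg)$ and $(X,-\Yneg)$ (resp.\ of $\Yneg$ and $-\Yneg$) yields the two stated inequalities.

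\textbf{Main obstacle.} The delicate part is Step 2: promoting the informal ``chain rule over the DAG, one sign flip per path'' to a rigorous identity for arbitrary splits/concatenations and skip connections (residual branches are admissible precisely because they re‑merge downstream of the cut), and verifying that the bookkeeping of which activations undergo the $Y_m\mapsto-Y_m$ substitution is captured by one and the same map $\Psi_\ell$ — equivalently, that no path bypasses $\mathscr L_{\mathrm{neg}}$ (guaranteed by the Cauchy‑domain hypothesis) and that differentiating a negated layer at $-\theta_\ell$ does nothing beyond swapping $\sigma'(Y_\ell)$ for $\sigma'(-Y_\ell)$. A secondary nuisance is ReLU‑type activations, where $\sigma'$ exists only almost everywhere; this is absorbed by the standing piecewise‑$C^1$ assumption together with the fact that the relevant activations admit densities, so the gradient and the push‑forward identities hold $\mathbb P$‑almost surely.
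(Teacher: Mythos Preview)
Your proposal is correct and follows essentially the same approach as the paper: both express $\epsilon J^\theta_\ell$ and $J^{\theta'}_\ell$ as the push-forward of a single bounded map applied to $\Yneg$ versus $-\Yneg$ (or $(X,\Yneg)$ versus $(X,-\Yneg)$), then invoke Lemma~\ref{lem:TV-Wasserstein} together with the $L^\infty$ bound of Lemma~\ref{lem:bound_infty_nablaN} to obtain the constants $A_\ell$. The only difference is packaging: the paper uses the sub-model factorization $\mathcal N^\theta = \mathcal N^\theta_{\mathscr L_{\mathrm{neg}}\to\emptyset}\circ\mathcal N^\theta_{\emptyset\to\mathscr L_{\mathrm{neg}}}$ (well-defined precisely because $\mathscr L_{\mathrm{neg}}$ is a Cauchy domain) and reads off the common map and the sign $\epsilon$ directly from one application of the chain rule, whereas you unfold the chain rule as a sum over directed paths in the DAG and do the sign bookkeeping path by path.
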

\begin{proof}

Define  $\theta_{\neq \mathrm{neg}} = (\theta_\ell)_{\ell \notin \mathscr L_{{\mathrm{neg}}}}$ and for layer subset $\mathscr L_1\leq \mathscr L_1$ denote by 
$\mathcal N^{\theta}_{\mathscr L_1 \rightarrow \mathscr L_2}$ the sub-model of $\mathcal N^\theta$ taking as input the concatenation of the outputs of layers in $\mathscr L_1$ and outputs the concatenation of the output of layers in $\mathscr L_2$. We take the convention that if $\mathscr L_1=\emptyset$ means the input is the input of the model $\mathcal N^{\theta}$ and $\mathscr L_2=\emptyset$, the output is the output of model $\mathcal N^\theta$. With this convention, $\mathcal N^\theta_{\emptyset \rightarrow \emptyset} = \mathcal N^\theta$. The submodel $\mathcal N^{\theta}_{\mathscr L_1 \rightarrow \mathscr L_2}$ is well defined if $\mathscr L_1$ is a Cauchy subset but may not be otherwise.
Note that differentiation may only be taken with respect to the weights of a dense layer since split, concatenation and activation functions don't have weights under our assumptions.

First, let $\ell > \mathscr L_{\mathrm{neg}}$. We denote by $X'$ a random variable having the same law as $X$ but with a coupling to $X$ to be chosen later. Since $\mathscr L_{\mathrm{neg}}$ is a Cauchy domain:
\begin{align}
    \|J^\theta_\ell-J^{\theta'}_\ell\|&:=\left\| \left.\nabla_{\theta_\ell} \left(\mathcal N^{\theta}_{\mathscr L_{\mathrm{neg}} \rightarrow \emptyset}    \mathcal N^{\theta}_{\emptyset \rightarrow \mathscr L_{\mathrm{neg}}}\right)\right|_{x=X} 
\right.\nonumber
\\ &\quad  \quad  
- \left.\left.\nabla_{\theta_\ell} \left( \mathcal N^{\theta'}_{\mathscr L_{\mathrm{neg}} \rightarrow \emptyset}   \mathcal N^{\theta'}_{\emptyset \rightarrow \mathscr L_{\mathrm{neg}}}  \right)\right|_{x=X'} \right\| 
    \\ &=
     \left\|\left. \left(\nabla_{\theta_\ell} \mathcal N^{\theta}_{\mathscr L_{\mathrm{neg}} \rightarrow \emptyset}   \right) \right|_{x=\mathcal N^{\theta}_{\emptyset \rightarrow \mathscr L_{\mathrm{neg}}}(X)}
\right.\nonumber
\\ &\quad  \quad 
- \left.\left.\left(\nabla_{\theta_\ell} ( \mathcal N^{\theta}_{\mathscr L_{\mathrm{neg}} \rightarrow \emptyset}  \right) \right|_{x= \mathcal N^{\theta'}_{\emptyset \rightarrow \mathscr L_{\mathrm{neg}}}(X')}  \right\| 
\end{align}
By Lemmata~\ref{lem:Wasserstein2} and \ref{lem:TV-Wasserstein} we get:
\begin{align}
 \mathcal W\left(J^\theta_\ell~;~J^{\theta'}_\ell\right)&     \leq         
2\|\nabla_{\theta_\ell}  \mathcal N^{\theta'}_{\mathscr L_{\mathrm{neg}} \rightarrow \emptyset}\|_{\infty}\\& \quad \times
 {\mathrm{TV}}( \mathcal N^{\theta}_{\emptyset \rightarrow \mathscr L_{\mathrm{neg}}}(X)~;~ \mathcal N^{\theta'}_{\emptyset \rightarrow \mathscr L_{\mathrm{neg}}}(X'))
 \\&            =            
2\|\nabla_{\theta_\ell}  \mathcal N^{\theta}_{\mathscr L_{\mathrm{neg}} \rightarrow \emptyset}\|_{\infty}{\mathrm{TV}}( \Yneg; - \Yneg)
\end{align}
Then, by Lemma~\ref{lem:bound_infty_nablaN}, we may set: \begin{equation}
    A_\ell = 2\left(\sup_{x\in \mathcal U} \|x\|\right)\Lambda(
\mathscr L_{\mathrm{neg}}^+ )  \max\left(\|\Sigma\|_{\mathrm{Lip}}\|\theta\|_{2,\mathscr L},1\right)^{\mathrm{len}(\mathscr L_{\mathrm{neg}}^+)}
\end{equation}
with $\mathscr L_{\mathrm{neg}}^+ := \{\ell' \in \mathscr L~|~\ell' > \mathscr L_{\mathrm{neg}} \}$.

Second, let $\ell < \mathscr L_{\mathrm{neg}}$ and proceed in the same way: 
\begin{align}
    \|J^\theta_\ell+J^{\theta'}_\ell\|&:=\left\| \nabla_{\theta_\ell} \left(\mathcal N^{\theta}_{\mathscr L_{\mathrm{neg}} \rightarrow \emptyset}    \mathcal N^{\theta}_{\emptyset \rightarrow \mathscr L_{\mathrm{neg}}}\right)  (X)
\right.\nonumber
\\ &\quad \quad  
+ \left.\nabla_{\theta_\ell} \left( \mathcal N^{\theta'}_{\mathscr L_{\mathrm{neg}} \rightarrow \emptyset}   \mathcal N^{\theta'}_{\emptyset \rightarrow \mathscr L_{\mathrm{neg}}}\right)   (X')  \right\| 
    \\ &=
     \left\| \left.\nabla_{x}  \mathcal N^{\theta}_{\mathscr L_{\mathrm{neg}} \rightarrow \emptyset} \right|_{x= \Yneg}  \nabla_{\theta_\ell}\mathcal N^{\theta}_{\emptyset \rightarrow \mathscr L_{\mathrm{neg}}}   (X)
\right.\nonumber
\\ &\quad \quad  
- \left.\left.\nabla_{x}  \mathcal N^{\theta}_{\mathscr L_{\mathrm{neg}} \rightarrow \emptyset} \right|_{x=- \Yneg'}  \nabla_{\theta_\ell}\mathcal N^{\theta}_{\emptyset \rightarrow \mathscr L_{\mathrm{neg}}}  (X')  \right\| 
\end{align}
Beware that this time, the terms can not be rewritten as a function of only $ \Yneg$.
We thus apply Lemma~\ref{lem:Wasserstein2} to the whole term as a function of the couple variable $(X, \Yneg)$: 
\begin{equation}(x_0,y_0)\mapsto \left(\left.\nabla_{x}  \mathcal N^{\theta}_{\mathscr L_{\mathrm{neg}} \rightarrow \emptyset} \right|_{x=y_0} \right) \left(\nabla_{\theta_\ell}\left.\mathcal N^{\theta}_{\emptyset \rightarrow \mathscr L_{\mathrm{neg}}}\right|_{x=x_0}\right).\end{equation}

Then, Lemmata~\ref{lem:bound_infty_nablaN} and \ref{lem:bound_infty_N}, allow to conclude the same way with: 
\begin{equation}A_\ell = \left(\sup_{x\in \mathcal U} \|x\|\right)\Lambda(
\mathscr L )  \max\left(\|\Sigma\|_{\mathrm{Lip}}\|\theta\|_{2,\mathscr L},1\right)^{\mathrm{len}(\mathscr L)} \end{equation}

Finally, the case $\ell\in \mathscr L_{\mathrm{neg}}$ is treated the same way.
\end{proof}

One may also prove similar bounds replacing the total variation by Wasserstein distance. However, the constant $A_\ell$ then depends on the Lipchitz norm of the derivative of the activation functions. If the activation functions do not have Lipchitz derivative, total variation is necessary as we only have the following Lemma to control the Wasserstein distance of output distributions.

\begin{lem}\label{lem:Wasserstein1} Let $f\in \mathrm{Lip}(\mathcal U,F)+L^{\infty}(\mathcal U,F)$ for some $\mathcal U\subset E$ open. Then, 
\begin{equation}f^* : \left(\mathcal P(\mathcal U), \mathcal W + {\mathrm{TV}}\right)\rightarrow \left(\mathcal P(F), \mathcal W \right)\end{equation}
is   $\|f\|_{\mathrm{Lip},\infty}$-Lipchitz.

\end{lem}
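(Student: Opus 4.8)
\emph{Proof proposal.} The goal is to show $\mathcal W(f\#\mu,f\#\nu)\le \|f\|_{\mathrm{Lip},\infty}\,(\mathcal W(\mu,\nu)+{\mathrm{TV}}(\mu,\nu))$ for all $\mu,\nu\in\mathcal P(\mathcal U)$. First I fix an arbitrary decomposition $f=g+h$ with $g\in\mathrm{Lip}(\mathcal U,F)$ and $h$ a bounded Borel map, and write $M:=\max(\|g\|_{\mathrm{Lip}},2\|h\|_{L^\infty})$. Since the left-hand side does not depend on the decomposition, it suffices to prove $\mathcal W(f\#\mu,f\#\nu)\le M\,(\mathcal W(\mu,\nu)+{\mathrm{TV}}(\mu,\nu))$ and then take the infimum over decompositions. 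For any coupling $\pi$ of $(\mu,\nu)$, the pushforward $(f\times f)\#\pi$ couples $f\#\mu$ and $f\#\nu$, so using $\|g(x)-g(y)\|\le\|g\|_{\mathrm{Lip}}\|x-y\|$ and $\|h(x)-h(y)\|\le 2\|h\|_{L^\infty}\mathbf 1_{x\ne y}$,
\begin{equation}
\mathcal W(f\#\mu,f\#\nu)\le \mathbb E_\pi\|f(x)-f(y)\|\le \|g\|_{\mathrm{Lip}}\,\mathbb E_\pi\|x-y\|+2\|h\|_{L^\infty}\,\mathbb P_\pi(x\ne y).
\end{equation}
So everything reduces to finding a \emph{single} coupling $\pi$ realizing $\mathbb E_\pi\|x-y\|\le\mathcal W(\mu,\nu)$ and $\mathbb P_\pi(x\ne y)\le{\mathrm{TV}}(\mu,\nu)$ simultaneously.

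To build $\pi$, I use the maximal-coupling decomposition: write $\mu=\lambda+\mu^\perp$, $\nu=\lambda+\nu^\perp$ with $\lambda:=\mu\wedge\nu$, with $\mu^\perp\perp\nu^\perp$, and with $|\mu^\perp|=|\nu^\perp|={\mathrm{TV}}(\mu,\nu)=:t$ (the identity $t=1-|\mu\wedge\nu|=\inf_\pi\mathbb P_\pi(x\ne y)$ is standard and is exactly the convention used in the proof of Lemma~\ref{lem:Wasserstein2}). Let $\sigma$ be an optimal-transport coupling of $\mu^\perp$ and $\nu^\perp$ for the cost $\|x-y\|$, and set $\pi:=(\mathrm{Id},\mathrm{Id})\#\lambda+\sigma$, which is a coupling of $(\mu,\nu)$. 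On the diagonal part $x=y$, and since $\mu^\perp\perp\nu^\perp$ every coupling of them (in particular $\sigma$) charges no diagonal mass; hence $\mathbb P_\pi(x\ne y)=|\sigma|=t={\mathrm{TV}}(\mu,\nu)$. Moreover $\mathbb E_\pi\|x-y\|=\mathbb E_\sigma\|x-y\|$ equals the total optimal-transport cost between $\mu^\perp$ and $\nu^\perp$.

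The final ingredient is that this transport cost equals $\mathcal W(\mu,\nu)$, and this is where the $1$-Wasserstein structure is essential. By Kantorovich--Rubinstein duality (applied after rescaling $\mu^\perp,\nu^\perp$ to total mass $1$), the transport cost between $\mu^\perp$ and $\nu^\perp$ is $\sup\{\int\varphi\,d(\mu^\perp-\nu^\perp):\mathrm{Lip}(\varphi)\le 1\}$; since $\mu^\perp-\nu^\perp=\mu-\nu$ exactly, this equals $\sup\{\int\varphi\,d(\mu-\nu):\mathrm{Lip}(\varphi)\le 1\}=\mathcal W(\mu,\nu)$. Thus $\pi$ is at once $\mathcal W$-optimal and ${\mathrm{TV}}$-optimal, and substituting $\mathbb E_\pi\|x-y\|=\mathcal W(\mu,\nu)$, $\mathbb P_\pi(x\ne y)={\mathrm{TV}}(\mu,\nu)$ above gives $\mathcal W(f\#\mu,f\#\nu)\le\|g\|_{\mathrm{Lip}}\mathcal W(\mu,\nu)+2\|h\|_{L^\infty}{\mathrm{TV}}(\mu,\nu)\le M(\mathcal W(\mu,\nu)+{\mathrm{TV}}(\mu,\nu))$; taking the infimum over decompositions $f=g+h$ concludes. (Taking $g\equiv 0$ in this computation also proves Lemma~\ref{lem:TV-Wasserstein}.)

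The main obstacle, and the only ingredient going beyond Lemmata~\ref{lem:Wasserstein2}--\ref{lem:TV-Wasserstein}, is precisely the existence of a coupling simultaneously optimal for $\mathcal W$ and ${\mathrm{TV}}$, which hinges on $\mathcal W_1$ being insensitive to the common mass $\mu\wedge\nu$ (equivalently, being a norm on the difference of the measures). A naive coupling only yields $\|g\|_{\mathrm{Lip}}\mathcal W(\mu,\nu)+2\|h\|_{L^\infty}$, i.e.\ with the ${\mathrm{TV}}$ factor replaced by $1$, and this improvement genuinely fails for $\mathcal W_p$ with $p>1$. A minor secondary point is interpreting the $L^\infty$ class of $h$ against possibly singular $\mu,\nu$, which is handled by fixing a Borel representative and using the true supremum; this does not change $\|f\|_{\mathrm{Lip},\infty}$.
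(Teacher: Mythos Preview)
Your proof is correct and follows the same overall strategy as the paper: decompose $f=g+h$, bound $\|f(x)-f(y)\|$ by a Lipschitz term plus a bounded term, and use the coupling $\pi=(\mathrm{Id},\mathrm{Id})\#(\mu\wedge\nu)+\sigma$ with $\sigma$ an optimal transport plan between $(\mu-\nu)^+$ and $(\mu-\nu)^-$. The one substantive difference is in the key step $\mathcal W((\mu-\nu)^+,(\mu-\nu)^-)\le\mathcal W(\mu,\nu)$: the paper appeals to cyclical monotonicity of the optimal plan, while you obtain \emph{equality} directly from Kantorovich--Rubinstein duality and the identity $\mu^\perp-\nu^\perp=\mu-\nu$. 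Your argument is cleaner---it makes transparent why the construction is specific to $\mathcal W_1$ (as you observe), and it lands exactly on the constant $\|f\|_{\mathrm{Lip},\infty}$, whereas the paper's written proof ends with $(1+\|f\|_{\mathrm{Lip},\infty})$.
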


\begin{proof} For any coupling $\pi$ of $\mu,\nu$ and any decomposition $f=g+h$ with $g\in W_1^{1,\infty}$ and $h\in W_{0,\mathrm{LBV}}$, with $(x,y)\sim \pi$ we have:
\begin{align}
     \mathbb E\|f(x) - f(y)\| 
     &\leq   \mathbb E\left[\|g(x)-g(y)\| + \|h(x)-h(y)\| \right]\\
       &\leq   \mathbb E\left[ \|g\|_{\mathrm{Lip}} \|x-y\|+ \mathbf{1}_{x\neq y}\|h\|_\infty \right]
\end{align}
Since the inequality is true for any coupling $\pi$, we may choose a coupling of the form $\pi = D\# \min(\mu,\nu) + \pi'$ with $D:x\mapsto (x,x)$ and $\pi'$ a coupling between $(\mu-\nu)^+$ and $(\mu-\nu)^-$. Here $\mu\wedge\nu := \min\left(\frac{d\mu}{d(\mu+\nu)} ~;~ \frac{d\nu}{d(\mu+\nu)} \right)\times  (\mu+\nu)$ where the derivative denote the Radon-Nikodym derivative (see \cite{cohn2013measure} p125). We then have:\begin{eqnarray}
    \mathbb E\left[  \mathbf{1}_{x\neq y}\|h\|_\infty \right] &=& 2 \|h\|_\infty  {\mathrm{TV}}(\mu,\nu) 
\end{eqnarray}

and since $\inf_{\pi'}\mathbb E\left[\|x -y\|\right] =
\mathcal W((\mu-\nu)^+,(\mu-\nu)^-)$. Taking the infimum over $g+h=f$ and adding ${\mathrm{TV}}$ on both sides we get: 
\begin{align}
    \mathcal W(f\# \mu,f\#\nu) &\leq \inf_{f=g+h}\left(\|g\|_{\mathrm{Lip}} \mathcal W((\mu-\nu)^+~;~(\mu-\nu)^-)\right. \nonumber\\&\quad + \left.(1+2\|h\|_{L^\infty}) {\mathrm{TV}}(\mu,\nu) \right). 
\end{align}
Note that by cyclical monotonicity of the optimal transport plan (see \cite{villani2021topics} pp79-80), 
for all positive measures $\alpha,\beta,\gamma$ we have $\mathcal W(\alpha+\beta~;~ \alpha +\gamma) \geq \mathcal W(\alpha,\beta)$. Therefore, 
\begin{align}
    \mathcal W(\mu,\nu)&=\mathcal  \mathcal W(\mu\wedge\nu+(\mu-\nu)^+~;~\mu\wedge\nu+(\mu-\nu)^-)\nonumber \\&\geq  \mathcal W((\mu-\nu)^+~;~(\mu-\nu)^-)
\end{align}
so,
\begin{align}
    \mathcal W(f\# \mu,f\#\nu) &\leq \inf_{f=g+h}\max(\|g\|_{\mathrm{Lip}},(1+2\|h\|_{L^\infty}) )\nonumber \\&\quad \times \left(\mathcal W(\mu~;~\nu) + {\mathrm{TV}}(\mu,\nu) \right). 
\end{align}
Finally,
\begin{equation}
    \mathcal W(f\# \mu,f\#\nu) \leq \left(1+\|f\|_{\mathrm{Lip},\infty}\right) \left(\mathcal W(\mu~;~\nu) + {\mathrm{TV}}(\mu,\nu) \right). 
\end{equation}
\end{proof}

\subsection{Layer-Wise Optimality}
\label{sec:appendix_pretraining}
We introduce a new metric to quantify how far a layer of a model is from being effectively pretrained, as discussed in Section~\ref{sec:optmizable_state}: how much the layer has to be modified to become a layer with an optimal set of weights? 
\begin{defi}[Layer-wise optimality norm]
    Let $\mathcal N^{\theta}$ denote a model parameterized by $\theta\in \Theta$, and let $\ell$ be a layer of $\mathcal N^{\theta}$ with parameters $\theta_\ell$. The layer-wise optimality norm of layer $\ell$ is defined as:
        
\begin{equation}\|\theta\|_{*,\ell} := \inf_{\alpha \in \mathbb R_+,  \theta^*\in \Theta^*} \|\theta_\ell-\alpha \theta^*_\ell\|_2,  
\end{equation}
where $\theta^*$ represents the set of optimal parameters.

For a Lipchitz transformations $\sigma:\Theta\rightarrow \Theta$, acting solely on layer $\ell$, the layer-wise optimality norm of $\sigma$ is given by:  
\begin{equation}\|\sigma\|_{*,\ell}:= \inf_{A>0}\sup_{\theta\in \Theta} \left(  \|\mathcal N^{\sigma(\theta)}\|_{*,\ell}-A\|\mathcal N^{\theta}\|_{*,\ell}\right). 
\end{equation}
\end{defi}
\begin{defi} Let $\mathcal N^\theta$ be a model of layer set $\mathscr L$. A Lipchitz Lipchitz transformations $\sigma:\Theta\rightarrow \Theta$ is LWO-Lipchitz if:  
\begin{equation}
    \|\sigma\|_*:=\sum_{\ell \in \mathscr L}\|\sigma\|_{*,\ell}=0.
\end{equation}
    
\end{defi}
\paragraph{Remarks.} \ding{182} These definitions are meaningful only if  $\Theta^*\neq\emptyset$. Strictly speaking, $\|\cdot\|_{*,\ell}$ on $\Theta$ is not a norm but the  distance to a subset. \ding{183} The factor $\alpha$ ensures that the layer-wise optimality norm remains bounded, and it is reasonable since such a scaling factor is usually easy to recover via gradient descent. \ding{184} The norm $\|\sigma\|_{*,\ell}$  quantifies how well the transformation $\sigma$ preserves the optimality of layer $\ell$. LWO-Lipchitz property means $\sigma$ is Lipchitz for all the the layer-wise optimality norms. Thus, it preserves layer-wise optimality quantitatively.

A low layer-wise optimality norm does not guarantee rapid convergence but implies that the layer can be frozen while the rest of the model is trained from scratch. Even if not frozen, it is expected to reduce the effective dimensionality of the space explored by gradient descent. 

\subsubsection{Affine Compensation of Layer Negation.} A key point is that the usual sigmoid-like or odd activation functions $\psi$ satisfy an algebraic relation: \begin{equation}
    \forall x\in \mathbb R, ~\psi(-x)+\psi(x)=C.
\end{equation}
for some constant $C\in \mathbb R$.
\begin{lem}\label{lem:affine_compensate} Let $\ell_1,\ell_2$ be two linear layers, and let $\psi$ be an activation function. If $-\ell_1$ denotes the layer obtained by negating the parameters of $\ell_1$, then: 

If $\psi$ satisfies an algebraic relation of the form:     
\begin{multline}
    \exists a,b,c\in \mathbb R, \\ \quad \left(ab\neq0~\text{ and }~ \forall x\in \mathbb R, ~a\psi(x)+b\psi(-x)=c\right), 
\end{multline}  
there exists a linear layer $\ell_2'$ such that: 
\begin{equation} \ell_2\circ \psi \circ \ell_1  = \ell_2'\circ \psi \circ (-\ell_1).    
\end{equation}
Furthermore, if $\ell_2$ is convolutional, then $\ell_2'$ can also be chosen as convolutional.
\end{lem}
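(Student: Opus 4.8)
The plan is to carry out a direct algebraic substitution and then simply read off the layer $\ell_2'$. First I would write the two affine layers as $\ell_1(x)=W_1x+\beta_1$ and $\ell_2(y)=W_2y+\beta_2$, and observe that negating \emph{all} parameters of $\ell_1$ (both $W_1$ and $\beta_1$) produces the map $(-\ell_1)(x)=-W_1x-\beta_1=-\ell_1(x)$. Hence, applying the activation coordinatewise, $\psi\circ(-\ell_1)$ evaluated at $x$ is $\psi(-\ell_1(x))$, i.e.\ $\psi$ applied coordinatewise to the vector $-\ell_1(x)$.

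The main step is to linearize $\psi(-\ell_1(x))$ against $\psi(\ell_1(x))$ using the algebraic relation. Since $a\psi(t)+b\psi(-t)=c$ for all $t\in\mathbb R$ with $a\neq 0$, applying this coordinatewise to each entry of $\ell_1(x)$ gives
\[
\psi(\ell_1(x)) \;=\; \frac{c}{a}\,\mathbf 1 \;-\; \frac{b}{a}\,\psi(-\ell_1(x)),
\]
where $\mathbf 1$ denotes the all-ones vector in the output space of $\ell_1$. Substituting this into $\ell_2$ and collecting terms,
\[
\ell_2\big(\psi(\ell_1(x))\big)
= W_2\Big(\tfrac{c}{a}\mathbf 1 - \tfrac{b}{a}\psi(-\ell_1(x))\Big)+\beta_2
= -\tfrac{b}{a}\,W_2\,\psi(-\ell_1(x)) + \Big(\tfrac{c}{a}\,W_2\mathbf 1+\beta_2\Big).
\]
Thus defining the affine layer $\ell_2' : y\mapsto -\tfrac{b}{a}W_2\,y+\big(\tfrac{c}{a}W_2\mathbf 1+\beta_2\big)$ yields exactly $\ell_2\circ\psi\circ\ell_1=\ell_2'\circ\psi\circ(-\ell_1)$, proving the first claim. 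Note that when $c=0$ (odd $\psi$, or the even case $a=1,b=-1$) no new bias is created and $\ell_2'$ has the same bias $\beta_2$; otherwise the constant $c$ forces a bias term, which is harmless since a linear layer is understood to carry an affine bias.

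For the convolutional refinement, write $\ell_2$ as a convolution with kernel $K_2$ and per-output-channel bias $\beta_2$. Then $-\tfrac{b}{a}W_2$ is the convolution with kernel $-\tfrac{b}{a}K_2$, so it only remains to show that the extra term $\tfrac{c}{a}W_2\mathbf 1$ is again a valid convolutional (per-channel, spatially uniform) bias. This is where the one genuinely delicate point lies: $W_2\mathbf 1$ is the result of convolving the constant all-ones input, which is spatially constant on each output channel — equal to the sum of the kernel weights feeding that channel — provided one uses the standard convention under which convolution biases are spatially uniform (periodic boundary conditions, or equivalently working away from the boundary of the input support). Granting that, $\tfrac{c}{a}W_2\mathbf 1+\beta_2$ is a legitimate per-channel bias, so $\ell_2'$ is convolutional with kernel $-\tfrac{b}{a}K_2$ and bias obtained by adding $\tfrac{c}{a}$ times the channelwise kernel sums of $K_2$ to $\beta_2$. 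The boundary/padding bookkeeping in this last paragraph is the only obstacle; the rest is a routine identity.
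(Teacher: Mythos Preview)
Your argument is correct and follows the same approach as the paper: use the identity $\psi(t)=\tfrac{c}{a}-\tfrac{b}{a}\psi(-t)$ coordinatewise to rewrite $\psi\circ\ell_1$ as an affine function of $\psi\circ(-\ell_1)$, then absorb that affine map into $\ell_2$. The paper packages this more tersely by defining $\ell_3(y)=\tfrac{c}{a}-\tfrac{b}{a}y$ and setting $\ell_2'=\ell_2\circ\ell_3$, while you expand the composition explicitly; your treatment of the convolutional case (in particular the observation that $W_2\mathbf 1$ must be spatially uniform per channel, and the boundary/padding caveat) is actually more careful than the paper's one-line assertion.
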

\begin{proof} Let $a,b,c\in \mathbb R$ satisfy the property above. Define $\ell_3(x) = c-\frac{b}{a} x$. 
Then $\ell_2' = \ell_2\circ \ell_3$ satisfies the desired properties. The layer $\ell_2'$ is linear and convolutional if $\ell_2$ is convolutional.
\end{proof}

Theorem~\ref{theo:negate_C2} is a consequence of the following Theorem.
\begin{theo}
The negation perturbation is LWO-Lipchitz  if $\mathscr L_{\mathrm{neg}}$ is an antichain of the poset $\mathscr L$ containing no maximal element, and each $\ell\in \mathscr L_{\mathrm{neg}}$ is activated by sigmoid-like, odd, or even functions (\eg, $\mathbf 1_{>0}$, $\tanh$, $\sin$, $x^2$). 
\end{theo}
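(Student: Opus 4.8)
The plan is to prove the quantitative statement $\|\sigma\|_{*,\ell}=0$ for \emph{every} layer $\ell\in\mathscr L$, where $\sigma$ denotes the negation map and $\|\sigma\|_*=\sum_{\ell}\|\sigma\|_{*,\ell}$; I split according to whether a layer is negated. The case $\ell\notin\mathscr L_{\mathrm{neg}}$ is purely formal: negation leaves the $\ell$-th block untouched, so $\sigma(\theta)_\ell=\theta_\ell$ and $\|\sigma(\theta)\|_{*,\ell}=\|\theta\|_{*,\ell}$ for all $\theta$ (using the abuse $\|\mathcal N^\theta\|_{*,\ell}=\|\theta\|_{*,\ell}$); taking $A=1$ in the definition gives $\|\sigma\|_{*,\ell}\le 0$, and equality follows since $\sigma$ is an involution, so nothing further is needed.

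The substance is the case $\ell\in\mathscr L_{\mathrm{neg}}$, where $\sigma(\theta)_\ell=-\theta_\ell$. The key claim to establish is that $\Theta^*$ is closed under ``negate-then-affinely-compensate'': for every $\theta^*\in\Theta^*$ there is $\theta^{**}\in\Theta^*$ agreeing with $\theta^*$ outside $\mathscr L_{\mathrm{neg}}$ and its strict downstream cone, with $\theta^{**}_\ell=-\theta^*_\ell$ for all $\ell\in\mathscr L_{\mathrm{neg}}$. Granting this, for such $\ell$ and every $\theta$,
\begin{equation}
\|\sigma(\theta)\|_{*,\ell}=\inf_{\alpha\ge0,\ \eta\in\Theta^*}\|{-}\theta_\ell-\alpha\,\eta_\ell\|\ \le\ \inf_{\alpha\ge0,\ \theta^*\in\Theta^*}\|{-}\theta_\ell+\alpha\,\theta^{*}_\ell\|\ =\ \|\theta\|_{*,\ell},
\end{equation}
where the middle step uses $-\alpha\,\theta^{**}_\ell=\alpha\,\theta^{*}_\ell$, i.e.\ the sign flip is absorbed into the vector while the coefficient $\alpha$ stays in $\mathbb R_+$. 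Again $A=1$ together with the involution property gives $\|\sigma\|_{*,\ell}=0$, and summing over $\ell$ proves the theorem; Theorem~\ref{theo:negate_C2} then follows as noted.

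It remains to construct $\theta^{**}$, which is where the hypotheses enter. Each admissible activation $\psi$ satisfies an algebraic identity $a\psi(x)+b\psi(-x)=c$ with $ab\ne0$: for sigmoid-like $\psi$ (with $\psi(x)+\psi(-x)=C$, including $\mathbf 1_{>0}$ under the $\tfrac12$ convention) take $(a,b,c)=(1,1,C)$, for odd $\psi$ take $(1,1,0)$, for even $\psi$ take $(1,-1,0)$ — so Lemma~\ref{lem:affine_compensate} applies. For a single $\ell_0\in\mathscr L_{\mathrm{neg}}$, on each outgoing branch of the computational DAG one follows $\ell_0$ by its activation $\psi_{\ell_0}$ and then, by the standing assumption that the network is built only from dense/convolutional, split, concatenation and activation layers, reaches a linear layer before any further nonlinearity; Lemma~\ref{lem:affine_compensate} replaces that linear layer by one that exactly cancels the negation at the level of the computed function (staying convolutional when the successor is, with the constant $c$ absorbed into the bias, which is free when $c=0$). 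The hypothesis that $\mathscr L_{\mathrm{neg}}$ contains \emph{no maximal element} is precisely what guarantees such a downstream linear layer always exists; the hypothesis that $\mathscr L_{\mathrm{neg}}$ is an \emph{antichain} guarantees the compensating modifications for distinct negated layers act only on layers outside $\mathscr L_{\mathrm{neg}}$, so they can all be carried out simultaneously. Assembling them yields $\mathcal N^{\theta^{**}}=\mathcal N^{\theta^*}$, hence equal loss and $\theta^{**}\in\Theta^*$.

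The step I expect to be the main obstacle is this last construction for a genuine DAG rather than a sequential network: a negated layer may fan out to several successors, so the affine compensation must be propagated independently along every branch until a linear layer is met — commuting it past splits and concatenations, composing compensations where the downstream cones of two incomparable negated layers overlap, and verifying no activation is encountered before a linear layer can absorb it. This is bookkeeping-heavy and is exactly where the three structural hypotheses earn their keep; by contrast, the displayed identity and the reduction to Lemma~\ref{lem:affine_compensate} carry the conceptual weight and are routine.
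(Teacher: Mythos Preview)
Your proposal is correct and follows essentially the same route as the paper: reduce to showing $\|\sigma(\theta)\|_{*,\ell}\le\|\theta\|_{*,\ell}$ for each $\ell$, and for $\ell\in\mathscr L_{\mathrm{neg}}$ exhibit a new optimal parameter vector with that block negated by invoking Lemma~\ref{lem:affine_compensate} to absorb the sign into the next linear layer. The paper merely abbreviates by reducing without loss of generality to a singleton $\mathscr L_{\mathrm{neg}}$ with a single ``subsequent layer'' $\ell_2$, whereas you spell out the DAG bookkeeping (fan-out, simultaneous compensation for several incomparable negated layers) and the trivial case $\ell\notin\mathscr L_{\mathrm{neg}}$ more explicitly; conceptually the arguments coincide.
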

\begin{proof}
 Without loss of generality we may assume that  $\Lneg$ is a singleton $\Lneg=\{\ell_1\}$.
    
    Let $\mathcal N^\theta$ be a feedforward neural network  with a set of linear or convolution layers $\mathscr L$, and let $\ell_1$ be a layer that is not the output layer. Denote the subsequent layer by $\ell_2$,  and assume the activation function $\psi$ of $\ell$ satisfies:  $\forall x\in \mathbb R, \psi(x)+ \psi(-x) = Cte$ or $\forall x\in \mathbb R, \psi(x)- \psi(-x) = Cte$.
    
    For any $\varepsilon>0$, let $(\theta^*_\ell)_{\ell\in \mathscr L}\in\Theta^*$ such that: $\| \theta_{\ell_1} -\theta^*_{\ell_1} \|\leq \|\mathcal N^{\theta}\|_{*,\ell_1}+\varepsilon$. Define $\widetilde\theta^* \in \Theta$ as follows: 
    \begin{itemize}
        \item $\widetilde \theta^*_\ell = \theta_\ell^*$ for $\ell\notin \{\ell_1,\ell_2\}$;
        \item $\widetilde \theta^*_{\ell_1} = -\theta^*_{\ell_1}$;
        \item $\widetilde \theta^*_{\ell_2}$ the parameters of the layer given by Lemma~\ref{lem:affine_compensate}.
        \end{itemize}
    Since:        
\begin{equation}\ell_2(\widetilde \theta^*_{\ell_2})\circ \psi \circ \ell(\widetilde \theta^*_{\ell_1}) = \ell_2( \theta^*_{\ell_2})\circ \psi \circ \ell_1(\theta^*_{\ell_1}),    
\end{equation}
we deduce that $\mathcal N^{\theta^*} = \mathcal N^{\widetilde\theta^*}$ as functions, and hence $\widetilde \theta^*\in \Theta^*$. Thus:     
\begin{align}
    \| \mathcal N^{\sigma(\theta)}\|_{*,\ell_1} &\leq 
    \|\sigma(\theta)_{\ell_1} - \widetilde \theta_{\ell_1}^*\|\\&=\|-\theta_{\ell_1} - (-  \theta_{\ell_1}^*)\|  \\&= \|\theta_{\ell_1} - \theta_{\ell_1}^*\|\\&\leq \| \mathcal N^\theta\|_{*,\ell_1} +\varepsilon .    
\end{align}
As this inequality holds for all $\varepsilon>0$, we conclude: $\| \mathcal N^{\sigma(\theta)}\|_{*,\ell_1} \leq  \| \mathcal N^\theta\|_{*,\ell_1}$. Consequently, $\|\sigma\|_{*,\ell_1}=0$.
\end{proof}
\section{Empirical Support of Theoretical Analysis}
This section presents empirical evidence supporting the hypotheses used in the theoretical analysis. Specifically, we examine the following:
\begin{enumerate}
    \item \textbf{CKA Analysis:} Demonstrates effective pretraining and the breaking of co-adaptation.
    \item \textbf{Post-Negation Fine-Tuning:} Validates the claim of dimensionality reduction during fine-tuning.
    \item \textbf{Unlearning Lower Bound:} Evaluates the unlearning lower bound to show that it is constraining for natural forgetting.
\end{enumerate}

\subsection{Quantitative Unlearning Time Constraint}
\label{sec:unlearning_time_quantitative}
We aim to assess the accuracy of the proposed  unlearning time lower bound. Recall the derived inequality:
\begin{equation}
t \geq \frac{\mathbb E( \delta ({\theta^t}) - \delta(\theta^0) )^2 }{\|\delta \|_{\mathrm{Lip}}^2\left[|\mathcal L_{D_r}(\theta^0)-\mathbb E\mathcal L_{D_r}(\theta^t)|+A\right]}:=t_\mathrm{unlearn} ,
\end{equation}
where $t_\mathrm{unlearn}$ represents the estimated unlearning time lower bound.

The goal is not to compute this precisely but to determine its order of magnitude. Each term in the formula can be approximated as follows:
    \begin{itemize}
        \item \textbf{Estimating $\delta$:} $\delta$ is computable for any model through a forward pass on both the remaining and forgotten data.
        \item \textbf{Estimating $\delta(\theta^t)-\delta(\theta^0)$:} For a model $\mathcal N^{\theta^0}$, we use: 
        \begin{equation}
            |\delta(\theta^t)-\delta(\theta^0) |\geq (1-\varepsilon)\left|\delta({\theta^*_\mathrm{Retrain}})-\delta(\theta^0)\right|,
        \end{equation}
        where $\theta^*_\mathrm{Retrain}$ represents the parameters of a model trained from scratch on $D_r$ and  $0<\varepsilon\ll 1$.
        \item \textbf{Estimating $\|\delta\|_{\mathrm{Lip}}$:} Since $\|\delta\|_{\mathrm{Lip}}=\|\nabla_\theta\delta\|_\infty$, we estimate it by computing the gradient of $\mathcal L_{D_r}$ and $\mathcal L_{D_u}$ at various $\theta$ and taking the supremum of the norm across for these $\theta$: \begin{equation}
            \|\delta\|_{\mathrm{Lip}}\simeq \max_{b\in \text{Batch}}  \|\nabla_{\theta} \delta(\theta^b)\|
        \end{equation} For a given training path, we choose to take the supremum of $\theta^t$ at different times. This gives a slightly sharper bound than taking the maximum across models, as NoT tends to have larger gradients than the other training.

        \begin{equation}
         \|\delta\|_{\mathrm{Lip}}\simeq 
            \max_{t \in \{t_1,\cdots,t_n\}} \|\nabla_{\theta} \delta(\theta^t)\|
        \end{equation}
        \item \textbf{Estimating $|\mathcal L_{D_r}(\theta^0)-\mathbb E\mathcal L_{D_r}(\theta^t)|$:} To estimate the available loss decrease, we use a lower bound given by: 
        \begin{equation}
            |\mathcal L_{D_r}(\theta^0)-\mathbb E\mathcal L_{D_r}(\theta^t)| \leq |\mathcal L_{D_r}(\theta^0)-\mathcal L_{D_r}(\theta^*)|,
        \end{equation}  
         where $\theta^*$ is obtained from a model trained from scratch on $D_r$ for an extended period (\eg, twice as long as the original training).
        \item \textbf{Estimating the Stochasticity Term:} For the term: 
        \begin{equation}\frac{1}{2}\int_0^t  \left|\mathrm{Tr}\left( \Sigma(\theta^s,s)^2 \cdot \nabla^2 \mathcal L_{D_r}(\theta^s) \right)\right|ds,\end{equation}
        we assume $\Sigma(\theta^s,s)$ is diagonal and use Formula 8 from \citep{stephan2017stochastic} to approximate $\Sigma(\theta^s,s)^2$. The Hessian $\nabla^2 \mathcal L_{D_r}$ is also approximated as diagonal, reducing the trace to a product of the diagonal elements.
    \end{itemize}

\paragraph{Results.} The compiled results are presented in Table~\ref{tab:unlearning_bound}, alongside computational cost estimates, for comparison with experimental results in Table~\ref{table_dif_dataset_arch}.
\begin{table}
\centering
\caption{Comparison of the estimated unlearning bound $t_{\mathrm{unlearn}}$ (right side of Equation~\eqref{eq:unlearning_time}).} 
\vspace{-3mm}
\resizebox{8.5cm}{!}{
\begin{tabular}{llll}
\toprule
\multirow{2}{*}{\shortstack[c]{\textbf{Dataset} \\ \textbf{ \& Model}}} & \multirow{2}{*}{\shortstack[c]{\textbf{Method}}}  & \multirow{2}{*}{\shortstack[c]{\textbf{$t_{\mathrm{unlearn}}$}  $\downarrow$}}  & \multirow{2}{*}{\shortstack[c]{\textbf{Est. Comp. Cost} \\ \textbf{(FLOPs) $\downarrow$}}} \\ \\
\midrule
\multirow{2}{*}{\shortstack[c]{CIFAR-10 \\ CNN}} & FT & 2.46 & 1.51$e^{14}$ \\
& NoT & 0.04 & 2.33$e^{12}$ \\
\midrule
\multirow{2}{*}{\shortstack[c]{CIFAR-10 \\ ResNet-18}} & FT & 4.77 & 3.00$e^{16}$ \\
& NoT &  0.0017 & 1.04$e^{13}$  \\
\midrule
\multirow{2}{*}{\shortstack[c]{Caltech-101 \\ ViT}} & FT & 0.299 & 1.74$e^{19}$\\
& NoT & 0.0006  & 3.56$e^{16}$ \\
\bottomrule
\end{tabular}
}
\label{tab:unlearning_bound}
% \end{wraptable}
\end{table}
\paragraph{Observations:}
\begin{itemize}
    \item \textbf{Stochasticity Term:} The term $\Sigma(\theta^s,s)^2$ is orders of magnitude smaller than the available loss decrease, allowing us to neglect its contribution.
    \item \textbf{Comparison of FT and NoT:} We obtain a significant difference between FT and NoT with a predicted cost ratio close to the experimental cost ratio.
    \item \textbf{Accuracy of the Bound:
} The theoretical lower bound for $t_{\mathrm{unlearn}}$ is significantly lower than empirical results, suggesting potential for refinement by incorporating details of the gradient descent trajectory.
\end{itemize}

\subsection{Further CKA Analysis}
\label{appendix:CKA}
Figure~\ref{fig:cka_cnn}, presented CKA comparisons for a CNN model, demonstrating alignment with the theoretical analysis. Here, we extend CKA analysis to ResNet-18 and ViT models, confirming that \textbf{layer negation disrupts co-adaptation while preserving similar features}. Figure~\ref{fig:cka_resnet_vit} shows the CKA similarity between the original (FT), random (Retrain), and layer-wise-negation (NoT) models, before ($\tau$:0) and after ($\tau$:-1) fine-tuning. The FT model at $\tau$:0 serves as the reference. 
\begin{itemize}
    \item \textbf{ResNet-18:} The first convolutional layer (index $\ell$:0) is negated. Activations from the ReLU following each layer and the head are analyzed.
    \item \textbf{ViT:} The convolutional projection layer (index $\ell$:1) is negated. Focus is on Conv\_Proj, Encoder\_0, and Head activations.
\end{itemize}
 
Comparing NoT to FT at round $\tau$:0 reveals greater divergence at deeper layers, reflecting disrupted co-adaptation. In ViT, we can see the significant CKA dissimilarity for the subsequent layer (Encoder\_0) and output (Heads). However, the transformer residual connections seem to bring features back to high similarity with a decreasing trend in depth. Since this behavior is observed across perturbations and even randomized network shows similar pattern, we normalize the CKA via $\frac{\mathrm{CKA}_{Q}-\mathrm{CKA}_{Retrain@\tau:0}}{1-\mathrm{CKA}_{Retrain@\tau:0}}$ to clarify these results, where $Q$ is any model. After fine-tuning, Retrain and NoT ($\tau$:-1) exhibit similar CKA but differ from FT. This demonstrates that after fine-tuning, NoT becomes closely similar to Retrain. Moreover, an indication of NoT preserving effective pretraining benefits is by comparing two random models, at $\tau$:0, where the first layer has the original weights while the other has the negation of the original weights as in FT@$\tau$=0. The high CKA similarity between those two models confirms the effective pretraining of negating the first layer.

\begin{figure*}[t!]
\setlength\tabcolsep{1pt}
\begin{tabularx}{\textwidth}{c}
\includegraphics[width=\linewidth]{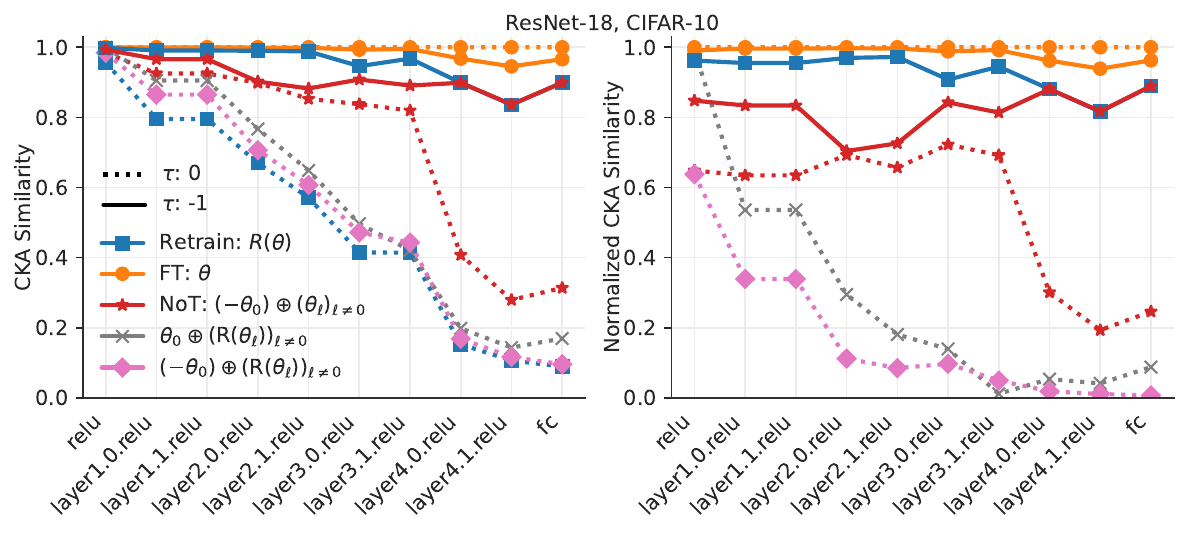}\\
\includegraphics[width=\linewidth]{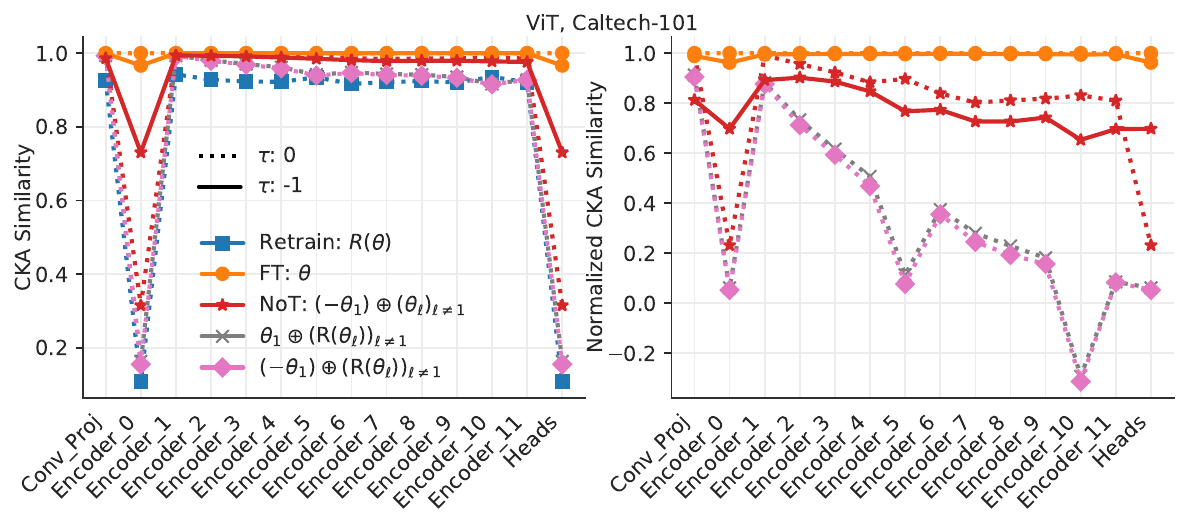}
\end{tabularx}
\vspace*{-5mm}
\caption{CKA (left) and normalized CKA (right) for \textbf{ResNet-18} (top) and \textbf{ViT} (bottom) layer activations, compared to the original model ($\theta=\theta^*$) before fine-tuning (FT@$\tau$:0). $\tau$:0 and -1 denote the first and last communication rounds. Models with negated first-layer weights ($\ell$:0 for ResNet-18 and $\ell$:1 for ViT) and randomized (reinitialized) remaining layers are denoted as $(-\theta_0)\oplus \left( \mathrm{R}(\theta_\ell)\right)_{\ell\neq 0}$, where $R(\cdot)$ refers to reintitializing. We normalize the CKA via $\frac{\mathrm{CKA}_{Q}-\mathrm{CKA}_{Retrain@\tau:0}}{1-\mathrm{CKA}_{Retrain@\tau:0}}$ to clarify these results, where $Q$ is any model.} 
\label{fig:cka_resnet_vit}
\end{figure*}

\subsection{Direct Test of Layer-wise Optimality}
\label{appendix:LWO}

By definition, Layer-wise Optimality (LWO) requires that any layer $\ell$ in a model $\mathcal N^{\theta}$ can be frozen while the remaining layers are reinitialized, and the model can still be fine-tuned to achieve optimal performance. We validate this property by directly comparing the accuracy of models subjected to this process with models trained from scratch.

We test the LWO of $\mathcal N^{\theta'}$, defined as:
 \begin{equation}\theta':=(-\theta_{\ell})_{\ell \in \Lneg}\oplus (\theta_\ell)_{\ell\in\mathscr L\setminus \Lneg}.\end{equation} 
with $\mathcal N \in \{\text{CNN}, \text{ResNet-18}, \text{ViT}\}$; $\Lneg$ only contains the first layer for CNN and ResNet-18, and only the convolutional projection for ViT. In all cases, layer-wise optimality of $\ell \in \mathscr L\setminus \Lneg$ is obvious since they are layers of an optimal model $\mathcal N^\theta$. Therefore, we only need to test layer-wise optimality of $\ell \in \Lneg$. 

\textbf{Procedure:} We freeze $\theta_{\ell}$ for $\ell \in \Lneg$, randomize (reinitialize) $\theta_{\ell'}$ for $\ell'\notin \Lneg$, and fine-tune. \textbf{Results:} Table~\ref{table_lwo} shows that NR-Freeze achieves performance on par with models trained from scratch (Retrain), confirming the theoretical prediction that negation preserves layer-wise optimality. The same stopping conditions as for Table~\ref{table_dif_dataset_arch} to compute the costs are used. 

\begin{table*}[t!]
\centering
\caption{\textbf{Layer-Wise Optimality (LWO)} test of negated models via reinitialization of non-negated layers followed by fine-tuning. Retrain denotes a model trained from scratch, and NR-freeze denotes a model obtained via \textbf{Negating \& Freezing $\Lneg$, \textbf{Reinitializing} $\mathscr L\setminus \Lneg$, and fine-tuning}. Results confirm that negation preserves layer-wise optimality, confirming our theoretical prediction that negation is Layer-Wise Optimality Preserving. }
\vspace{-3mm}
\resizebox{17.5cm}{!}{
\begin{tabular}{llllllllll}
\toprule
\multirow{2}{*}{\shortstack[c]{\textbf{Dataset} \\ \textbf{ \& Model}}} & \multirow{2}{*}{\shortstack[c]{\textbf{Method}}} & \multicolumn{3}{c}{\textbf{Accuracy (\%)}} & \multicolumn{1}{c}{\textbf{Privacy (\%)}} & \multicolumn{1}{c}{\multirow{2}{*}{\shortstack[c]{\textbf{Avg.} \\ \textbf{Gap $\downarrow$}}}} & \multicolumn{2}{c}{\textbf{Cost (Bytes \& FLOPs)}} \\
\cmidrule(r){3-5} \cmidrule(r){6-6} \cmidrule(r){8-9}  
&& \multicolumn{1}{c}{\textbf{Retain (\textcolor{blue}{$\Delta \downarrow$})}} & \multicolumn{1}{c}{\textbf{Forget (\textcolor{blue}{$\Delta \downarrow$})}} & \multicolumn{1}{c}{\textbf{Test (\textcolor{blue}{$\Delta \downarrow$})}} & \multicolumn{1}{c}{\textbf{MIA (\textcolor{blue}{$\Delta \downarrow$})}} & & \multicolumn{1}{c}{\textbf{Comm. $\downarrow$}} & \multicolumn{1}{c}{\textbf{Comp. $\downarrow$}} \\
\midrule
% ########### CIFAR-10, CNN ############################
\multirow{2}{*}{\shortstack[c]{CIFAR-10 \\ CNN}} & Retrain & 91.66\tiny{$\pm$ 0.12} \normalsize{(\textcolor{blue}{0.00}}) & 83.05\tiny{$\pm$ 0.23} \normalsize{(\textcolor{blue}{0.00}}) & 82.32\tiny{$\pm$ 0.30} \normalsize{(\textcolor{blue}{0.00}}) & 50.23\tiny{$\pm$ 0.39} \normalsize{(\textcolor{blue}{0.00}}) & \normalsize{\textcolor{blue}{0.00}} & 1.35$e^{10}$ & 5.81$e^{16}$ \\

\cline{2-9}
&  NR - Freeze & 91.71\tiny{$\pm$ 0.11} \normalsize{(\textcolor{blue}{0.05}}) & 82.93\tiny{$\pm$ 0.17} \normalsize{(\textcolor{blue}{0.12}}) & 82.32\tiny{$\pm$ 0.28} \normalsize{(\textcolor{blue}{0.00}}) & 50.23\tiny{$\pm$ 0.12} \normalsize{(\textcolor{blue}{0.00}}) & \normalsize{\textcolor{blue}{0.04}} & 1.04$e^{10}$ & 4.51$e^{16}$ \\

\hline

% ########### CIFAR-100, CNN ############################
\multirow{2}{*}{\shortstack[c]{CIFAR-100 \\ CNN}} & Retrain & 72.32\tiny{$\pm$ 0.11} \normalsize{(\textcolor{blue}{0.00}}) & 53.31\tiny{$\pm$ 0.87} \normalsize{(\textcolor{blue}{0.00}}) & 54.28\tiny{$\pm$ 0.25} \normalsize{(\textcolor{blue}{0.00}}) & 49.70\tiny{$\pm$ 0.64} \normalsize{(\textcolor{blue}{0.00}}) & \normalsize{\textcolor{blue}{0.00}} & 1.38$e^{10}$ & 5.96$e^{16}$ \\
\cline{2-9}
&  NR-Freeze & 72.46\tiny{$\pm$ 0.51} \normalsize{(\textcolor{blue}{0.14}}) & 53.34\tiny{$\pm$ 0.79} \normalsize{(\textcolor{blue}{0.03}}) & 54.53\tiny{$\pm$ 0.48} \normalsize{(\textcolor{blue}{0.25}}) & 49.77\tiny{$\pm$ 0.53} \normalsize{(\textcolor{blue}{0.07}}) & \normalsize{\textcolor{blue}{0.12}} & 1.23$e^{10}$ & 5.30$e^{16}$ \\
\hline
% ########### CIFAR-10, ResNet-18 ############################
\multirow{2}{*}{\shortstack[c]{CIFAR-10 \\ ResNet-18}} & Retrain & 100.00\tiny{$\pm$ 0.00} \normalsize{(\textcolor{blue}{0.00}}) & 87.66\tiny{$\pm$ 0.64} \normalsize{(\textcolor{blue}{0.00}}) & 87.73\tiny{$\pm$ 0.35} \normalsize{(\textcolor{blue}{0.00}}) & 49.37\tiny{$\pm$ 0.29} \normalsize{(\textcolor{blue}{0.00}}) & \normalsize{\textcolor{blue}{0.00}} & 1.23$e^{12}$ & 5.66$e^{18}$ \\
\cline{2-9}
&   NR - Freeze & 99.98\tiny{$\pm$ 0.00} \normalsize{(\textcolor{blue}{0.02}}) & 87.47\tiny{$\pm$ 0.05} \normalsize{(\textcolor{blue}{0.19}}) & 86.72\tiny{$\pm$ 0.02} \normalsize{(\textcolor{blue}{1.01}}) & 49.70\tiny{$\pm$ 0.70} \normalsize{(\textcolor{blue}{0.33}}) & \normalsize{\textcolor{blue}{0.39}} & 5.98$e^{11}$ & 2.75$e^{18}$ \\

\hline

% ########### CIFAR-100, ResNet ############################
\multirow{2}{*}{\shortstack[c]{CIFAR-100 \\ ResNet-18}} & Retrain & 99.96\tiny{$\pm$ 0.00} \normalsize{(\textcolor{blue}{0.00}}) & 59.96\tiny{$\pm$ 0.61} \normalsize{(\textcolor{blue}{0.00}}) & 60.66\tiny{$\pm$ 0.63} \normalsize{(\textcolor{blue}{0.00}}) & 50.30\tiny{$\pm$ 0.30} \normalsize{(\textcolor{blue}{0.00}}) & \normalsize{\textcolor{blue}{0.00}} & 7.34$e^{11}$ & 3.38$e^{18}$ \\

\cline{2-9}

& NR - Freeze & 99.98\tiny{$\pm$ 0.13} \normalsize{(\textcolor{blue}{0.01}}) & 60.80\tiny{$\pm$ 0.24} \normalsize{(\textcolor{blue}{0.82}}) & 60.30\tiny{$\pm$ 0.31} \normalsize{(\textcolor{blue}{1.67}}) & 50.00\tiny{$\pm$ 0.22} \normalsize{(\textcolor{blue}{0.90}}) & \normalsize{\textcolor{blue}{0.85}} & 7.43$e^{11}$ & 3.42$e^{18}$ \\

\hline
% ########### Caltech-101, ViT ############################
\multirow{2}{*}{\shortstack[c]{Caltech-101 \\ ViT}} & Retrain & 99.73\tiny{$\pm$ 0.04} \normalsize{(\textcolor{blue}{0.00}}) & 48.29\tiny{$\pm$ 0.44} \normalsize{(\textcolor{blue}{0.00}}) & 48.02\tiny{$\pm$ 0.72} \normalsize{(\textcolor{blue}{0.00}}) & 49.67\tiny{$\pm$ 3.47} \normalsize{(\textcolor{blue}{0.00}}) & \normalsize{\textcolor{blue}{0.00}} & 1.76$e^{12}$ & 1.37$e^{21}$ \\
\cline{2-9}
&   NR - Freeze & 99.71\tiny{$\pm$ 0.08} \normalsize{(\textcolor{blue}{0.02}}) & 48.41\tiny{$\pm$ 1.03} \normalsize{(\textcolor{blue}{0.12}}) & 48.10\tiny{$\pm$ 0.37} \normalsize{(\textcolor{blue}{0.08}}) & 49.73\tiny{$\pm$ 1.59} \normalsize{(\textcolor{blue}{0.06}}) & \normalsize{\textcolor{blue}{0.07}} & 1.55$e^{12}$ & 1.21$e^{21}$ \\

\bottomrule
\end{tabular}
}
\label{table_lwo}
\end{table*}

\subsection{Spectral Content of Gradient Covariance}
\label{appendix:dim_reduction}

\def\din{d_{\mathrm{in}}}
\def\dout{d_{\mathrm{out}}}
Given a model $\mathcal N^\theta:\mathbb R^{d_{\mathrm{in}}}\rightarrow \mathbb R^{d_{\mathrm{out}}}$ with $\theta\in \mathbb R^d$, define $B$ the minibatch random variable with value in $\mathbb R^{\din \times b}$ with $b$ the bath size, $(\theta^t)_{t\in [0,t_{\mathrm{max}}]}$ is the parameter vector of the model across gradient descent and $\Sigma:= \mathrm{Cov}(\nabla_\theta \mathcal L_{B}(\mathcal N^{\theta^T}))$ the covariance matrix of the random variable whose value is the gradient of the model on a random batch $B$ at a random time $T\in [0,t_{\mathrm{max}}]$. For our simulations, $t_\mathrm{max}=0$.
Consider $\mathrm{Sp}(\Sigma)=\{\lambda_1,\cdots,\lambda_d\}$ the spectrum of the covariance matrix with $\lambda_1>\lambda_2>\cdots>\lambda_d$. 
We are interested in visualizing the spectral content curves: \begin{equation}
    \Psi:\alpha \mapsto \frac{\sum_{i=1}^{[\alpha d]}\lambda_i}{\sum_{i=1}^d \lambda_i},\quad \alpha \in [0,1].\label{equ:spectral_content}
\end{equation}
This spectral content measures the fraction of eigenvectors generating the linear space generated by the gradients (up to some noise threshold). More precisely, for $\beta = 95\%$, define $\alpha_\beta :=\min  \{\alpha ~|~ \Psi(\alpha)\geq \beta\}$. The quantity $[\alpha_\beta d]$ measures the minimal number of dimension for a linear subspace to contain $95\%$ of the Euclidean norm squared of the noise of the gradient across training.

The most direct approach to evaluation the function $\Psi$ is based on a direct use of the spectrum of the empirical covariance matrix \citep{anderson1963asymptotic}. First, we evaluate the gradient $(X_{i})_{i=1..b} := (\nabla_\theta \mathcal L_{B_i}(\mathcal N^{\theta^0}))_{i=1..b}$ for $b\in \mathbb N$ randomly sampled mini-batchs $(B_i)_{i=1}^b$; and where $\theta^\tau$ is the parameter vector of the model at communication round $\tau$. We sample a random subsets $K_1,\cdots,K_p\subset \{1,\cdots,d\} $ of model parameters and project each $X_i$ onto $\mathbb R^{K_j}$ to get $X^{K_j}_i$. 
Then, we compute the spectrum $S_j:=\mathrm{Sp}(\mathrm{Cov}(X^{K_j}_{i},X^{K_j}_{i}))=\{\lambda_1,\cdots,\lambda_k\}$  of the empirical covariance matrix. Finally, we compute our estimators $\widehat\Psi_j$ with Equation~\eqref{equ:spectral_content} and $\widehat \Psi := \frac{1}{p}\sum_{j=1}^p \widehat \Psi_j$.

Beware that, this direct estimator of the spectrum of $\Sigma$ is inconsistent \citep{geman1980limit,yin1988limit,101214009117906000000917} as we are in the setting of very high dimensional random vectors (long-story short, covariance matrices converge toward Gaussian ensembles, the  spectrum thus converges toward Tracy-Widom distribution \citep{mingo2017free}). As a result, although still barely meaningful for shallow CNN, it yields inconclusive results for larger models (ResNet and ViT). We thus implement the $L^\infty$ version of the algorithm described in \citep{10.1214/07-AOS581} to extract meaningful information from the empirical spectrum  of $\Sigma$ obtained by the method above. 

\begin{figure*}[t!]
\setlength\tabcolsep{1pt}
\begin{tabularx}{\textwidth}{c}
\includegraphics[width=0.33\linewidth]{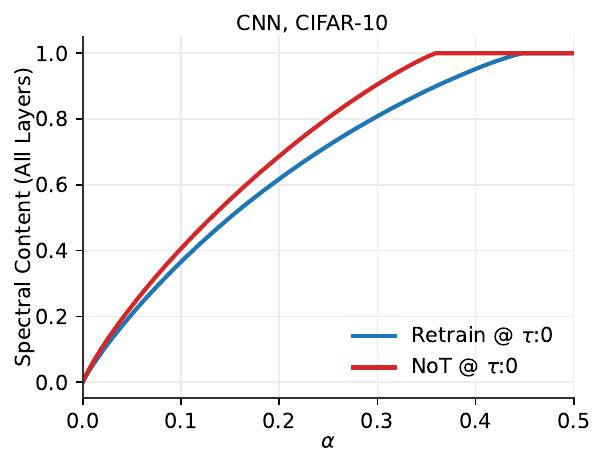} 
\includegraphics[width=0.33\linewidth]{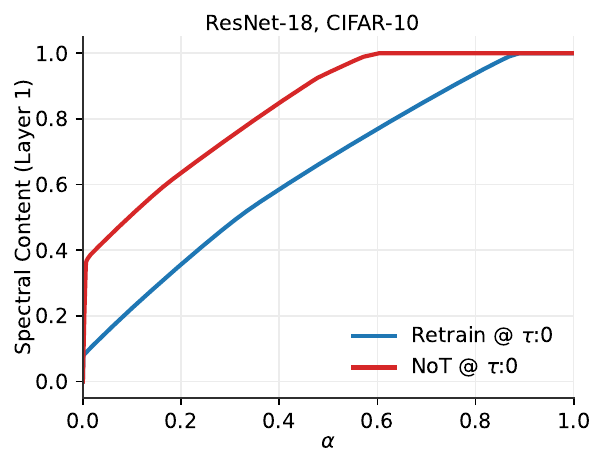} 
\includegraphics[width=0.33\linewidth]{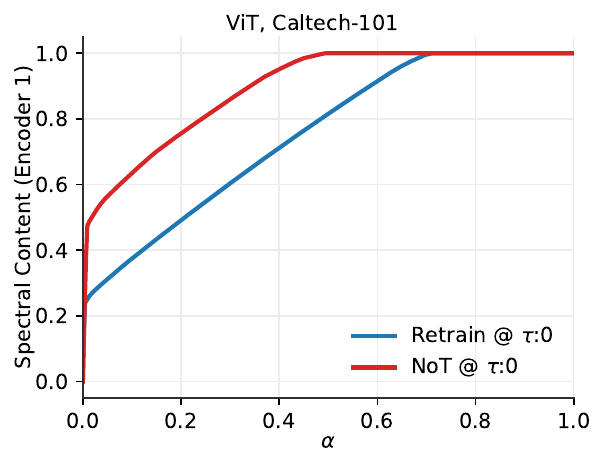} 
\end{tabularx}
\vspace*{-5mm}
\caption{Spectral content of CNN (left), ResNet-18 (mid) and ViT (right) models. CNN:  batch size $64$, $b=2048$, $|K|=2000$ and $p=100$ with $K$ with indices in all layers. ResNet: batch size $64$, $b=128$, $|K|=2000$ and $p=100$ with $K$ with indices in only the first residual block. ViT: batch size $16$, $b=128$, $|K|=2000$ and $p=100$ with $K$ with indices in only the first encoder. In all cases, we observe that the spectral content reaches 100\% for NoT with between 20\% and 30\% less dimensions: $\Psi^{-1}_{\mathrm{NoT}}(1^-)/\Psi^{-1}_{\mathrm{Retrain}}(1^-)\in [0.7, 0.8]$. For ViT and ResNet, we focus on the first layer after the convolution projection on which dimensionality reduction is the strongest and because the size of these models makes it increasingly difficult to compute an accurate spectrum estimator.
}
\label{fig:spectral_content_CNN_resnet}
\end{figure*}

\section{Further Implementation Details}
\label{further_imp_details}
This section details the hyperparameters used for the experiments in this paper. 
\subsection{Federated Learning} 
\label{fl_further_imp_details}
The global model was trained until convergence using the local data of all 10 clients using SGD with 0.9 momentum, 0.001 learning rate, and 5e-4 for weight decay. In every communication round, all clients participate in the federation. For all the experiments, except for CNN architectures, we upscaled the images to 256 and utilized random cropping to 224 and random horizontal flip with 0.5 probability. For CNN experiments, we used the original image sizes of 32 with random horizontal flip with 0.5 probability. Local client training is set to 1. The global model was trained for communication rounds $\mathcal{T}$ = 2000 (CNN + CIFAR-10), 2000 (CNN + CIFAR-100), 800 (ResNet-18 + CIFAR-10), 1000 (ResNet-18 + CIFAR-100), 300 (ViT + Caltech-101), and 500 (ViT + CIFAR-100).

Below are the parameters we used for each unlearning algorithm, where we select client 0 to be the target client and assume unlearning starts at round $\mathcal{T}$ and lasts for another $\mathcal{T}$ number of rounds. We utilized the same parameters for training of the global model unless stated otherwise below. We fine-tune all the hyperparameters to get the lowest average gap for the same number of communication rounds as the training of the global model.
\begin{itemize}

\item \textbf{Retrain.} We use the same global values for training and save the model after $\mathcal{T}$ rounds.

\item \textbf{FT.} We use the same global values for training.

\item \textbf{FedEraser.} For \textit{CNN}: We use the same global values for training.

\item \textbf{FUKD.} For \textit{CNN}: learning rate for unlearning is set to 1e-4, momentum to 0.9, distillation epochs to 3, and temperature to 3. 

\item \textbf{PGD.} For \textit{CNN}: learning rate for unlearning is set to 0.1, unlearning iterations to 10, momentum to 0.9, updates per iteration to 5, distance threshold to 3, and clip gradient to 2. For \textit{ResNet-18}: learning rate for unlearning is set to 0.001, unlearning iterations to 2, momentum to 0.9, updates per iteration to 7, distance threshold to 2.2, and clip gradient to 1. \textit{ViT}: learning rate for unlearning is set to 0.001, unlearning iterations to 10, momentum to 0.9, updates per iteration to 5, distance threshold to 1, and clip gradient to 4.

\item \textbf{MoDE.} For \textit{CNN}: memory guidance rounds is set to 3 with 2 MoDE rounds, MoDe coefficient to 0.4, learning rate for the models is 0.005, and the learning rate for the degradation model is 0.1. For \textit{ResNet-18}: memory guidance rounds is set to 9 with 8 MoDE rounds, MoDe coefficient to 0.95, learning rate for the models is 0.05, and the learning rate for the degradation model is 0.1. \textit{ViT}: memory guidance rounds is set to 15 with 12 MoDE rounds, MoDe coefficient to 0.9, and learning rate for all models is 0.001.

\item \textbf{FCU.} An Adam optimizer is used where the momentum terms are set to 0.9 and 0.99, and a ReduceLROnPlateau scheduler with learning rate 0.1 as the starting point while reducing it 1e-7 with a factor of 0.1 and patience of 2. For \textit{CNN}: learning rate for unlearning is set to 0.01, unlearning iterations to 20, fusion interval to 10, and low-frequency to 0.9. For \textit{ResNet-18}: learning rate for unlearning is set to 0.01, unlearning iterations to 10, fusion interval to 10, and low-frequency to 0.7. \textit{ViT}: learning rate for unlearning is set to 0.01, unlearning iterations to 2000, fusion interval to 9, and low-frequency to 0.7.

\item \textbf{NoT.} For all architectures we negate the first layer, except efficient net we negate the weights of all convolution and dense layers with indices $\leq150$. For \textit{ViT}, we negate the convolution projection (conv\_proj) layer.
\end{itemize}
\subsection{Centralized Training} 
\label{cl_further_imp_details}

In the following, we provide the hyperparameters we used for our baselines in Table~\ref{tbl_centralized_results}. For the base model, we used ResNet-18 with CIFAR-10 dataset where we used the official train set to train the base model using SGD with 0.9 momentum, 0.01 learning rate, and 5e-4 for weight decay. We also used cosine annealing for 200 epochs with a minimum of 0.0001 and saved the best model with the highest accuracy on the test set. For all the experiments, we upscaled the CIFAR-10 images to 256 and utilized random cropping to 224 and random horizontal flip with 0.5 probability. 

Below are the parameters we used for each unlearning algorithm, where we randomly select 10\% of the data as forget data. We utilized the same parameters for training of the base model unless stated otherwise below. We fine-tune all the hyperparameters to get the lowest average gap for 50 epochs.

\begin{itemize}
     
\item \textbf{Retrain.} We use the same base values for training and save the model with highest test accuracy.

\item \textbf{FT.} We use the same base values for training.

\item \textbf{RandL.} We use the same base values for training. Further, at each round, we iterativly train the model on random labels for one epoch and then one epoch on retain data.

\item \textbf{GA.} We use the same base values for training, and we do one epoch of GA and 49 epochs of fine-tuning.

\item \textbf{BadT.} We use 1 as our temperature for BadT.

\item \textbf{$\ell_1$-sparse.} We do 2 epochs with $\ell_1$ loss where we use 0.002 as our coefficient. Then, we do 48 epochs of fine-tuning.

\item \textbf{SSD.} We search over the space of dampening constants and selection weights. We got the best results for 0.5 for both dempening constant and and selection weights.

\item \textbf{SalUn.} We masked 20\% of the model weights for SalUn.

\item \textbf{NoT.} We negate only the weights of the first layer of ResNet-18 since it had the best performance. We tune the learning rate and select 0.1 as the starting point while reducing it 1e-5 in 10 epochs using cosine annealing.

\end{itemize}

\end{document}